\newtheorem{definition}{Definition}
\newtheorem{proposition}{Proposition}
\newtheorem{remark}{Remark}
\useunder{\uline}{\ul}{}
\begin{document}

\title{Rethinking and Accelerating Graph Condensation: A Training-Free Approach with Class Partition}

\author{Xinyi Gao}
\affiliation{
\institution{The University of Queensland}
\city{Brisbane}
\country{Australia}
}
\email{xinyi.gao@uq.edu.au}

\author{Guanhua Ye}
\authornote{Corresponding author.}
\affiliation{
\institution{Beijing University of Posts and Telecommunications}
\city{Beijing}
\country{China}
}
\email{g.ye@bupt.edu.cn}

\author{Tong Chen}
\affiliation{
\institution{The University of Queensland}
\city{Brisbane}
\country{Australia}
}
\email{tong.chen@uq.edu.au}

\author{Wentao Zhang}
\affiliation{
\institution{Peking University}  
\city{Beijing}
\country{China}
}
\email{wentao.zhang@pku.edu.cn}

\author{Junliang Yu}
\affiliation{
\institution{The University of Queensland}
\city{Brisbane}
\country{Australia}
}
\email{jl.yu@uq.edu.au}

\author{Hongzhi Yin}
\authornotemark[1]
\affiliation{
\institution{The University of Queensland}  
\city{Brisbane}
\country{Australia}
}
\email{db.hongzhi@gmail.com}

\renewcommand{\shortauthors}{Xinyi Gao et al.}

\begin{abstract}
The increasing prevalence of large-scale graphs poses a significant challenge for graph neural network training, attributed to their substantial computational requirements. In response, graph condensation (GC) emerges as a promising data-centric solution aiming to substitute the large graph with a small yet informative condensed graph to facilitate data-efficient GNN training. However, existing GC methods suffer from intricate optimization processes, necessitating excessive computing resources and training time. In this paper, we revisit existing GC optimization strategies and identify two pervasive issues therein: (1) various GC optimization strategies converge to coarse-grained class-level node feature matching between the original and condensed graphs; (2) existing GC methods rely on a Siamese graph network architecture that requires time-consuming bi-level optimization with iterative gradient computations. To overcome these issues, we propose a training-free GC framework termed Class-partitioned Graph Condensation (CGC), which refines the node distribution matching from the class-to-class paradigm into a novel class-to-node paradigm, transforming the GC optimization into a class partition problem which can be efficiently solved by any clustering methods. Moreover, CGC incorporates a pre-defined graph structure to enable a closed-form solution for condensed node features, eliminating the need for back-and-forth gradient descent in existing GC approaches. Extensive experiments demonstrate that CGC achieves an exceedingly efficient condensation process with advanced accuracy. Compared with the state-of-the-art GC methods, CGC condenses the Ogbn-products graph within 30 seconds, achieving a speedup ranging from $10^2 \times$ to $10^4 \times$ and increasing accuracy by up to 4.2\%. The code is available at: \href{https://github.com/XYGaoG/CGC}{https://github.com/XYGaoG/CGC}. 
\end{abstract}

\begin{CCSXML}
<ccs2012>
   <concept>
       <concept_id>10010147.10010257.10010293.10010294</concept_id>
       <concept_desc>Computing methodologies~Neural networks</concept_desc>
       <concept_significance>500</concept_significance>
       </concept>
 </ccs2012>
\end{CCSXML}

\ccsdesc[500]{Computing methodologies~Neural networks}

\keywords{Graph Condensation, Efficiency, Training-free Method}
  
\maketitle

\section{Introduction}
\label{sec_intro}

Graph neural networks (GNNs) \cite{huang2020temporal, zheng2022rethinking,huang2024cost,gao2023semantic} have garnered significant attention for their exceptional representation capabilities for complex graph data and have been utilized in a wide range of real-world applications, including chemical molecules~\cite{sun2019infograph}, social networks~\cite{huang2022influence}, and recommender systems~\cite{yu2023self}. 
However, the increasing prevalence of large-scale graphs within these real-world applications poses formidable challenges in training GNN models. 
Most GNNs follow the message-passing paradigm~\cite{gilmer2017neural}, which is formulated as convolutions over the entire graph and aggregating information from multi-hop neighboring nodes. 
This process leads to exponential growth in neighbor size~\cite{hamilton2017inductive,gao2024accelerating} when applied to large-scale graphs, necessitating considerable training computations.
In response to the urgent demand for processing large-scale graphs, a few studies borrow the idea of dataset distillation~\cite{wang2018dataset,lei_comprehensive_2024} from computer vision (CV) and introduce graph condensation (GC)~\cite{jin2022graph,gao2024graph} to generate a compact yet informative graph to accelerate the GNN training.
By capturing essential characteristics of the original large graph, GNNs trained on these small condensed graphs can achieve comparable performance to those trained on the original graphs. This efficacy enables GC to be applied to a variety of applications rapidly, e.g., graph continual learning~\cite{liu_cat_2023}, inference acceleration~\cite{gao_graph_2023}, and hyper-parameter search~\cite{ding_faster_2022}.

Despite the effectiveness of expediting GNN training, existing GC practices still suffer from complex optimization and intensive computation during the condensation process.
As depicted in Figure \ref{fig_main} (a), to bridge the original and condensed graph, existing GC methods employ the Siamese network architecture to encode both graphs through a relay model, and the condensed graph is optimized to simulate the class distributions of the original graph.
This framework necessitates a bi-level optimization procedure, with the inner loop refining the relay model on the condensed graph and the outer loop subsequently optimizing the condensed graph, ensuring optimal model performance on both graphs.
However, two main issues persist in this framework:
(1) Existing optimization strategies in GC~\cite{jin2022graph,zheng_structure_free_2023,wang2023fast,liu2022graph,liu_graph_2023,xiao2024simple} manifest a single, unified optimization objective for all condensed nodes within the same class, resulting in the coarse-grained optimization target for condensed nodes.
(2) Bi-level optimization involves iterative and intensive gradient computations for both the relay model and condensed graph.
Recent efforts aim to expedite this process by simplifying outer or inner loop optimizations.
Techniques like distribution matching~\cite{liu2022graph} and the structure-free approach~\cite{zheng_structure_free_2023} respectively eliminate model gradient calculations and adjacency matrix optimizations in the outer loop.
Meanwhile, approaches such as one-step matching~\cite{jin2022condensing}, kernel ridge regression (KRR)~\cite{wang2023fast} and pre-trained model~\cite{xiao2024simple} simplify the relay model updates in the inner loop. 
Despite these advancements, the optimization of the condensed graph still involves back-and-forth gradient calculations and updates, resulting in a time-consuming condensation procedure.

\begin{figure}[t]
\centering
\includegraphics[width=0.95\linewidth]{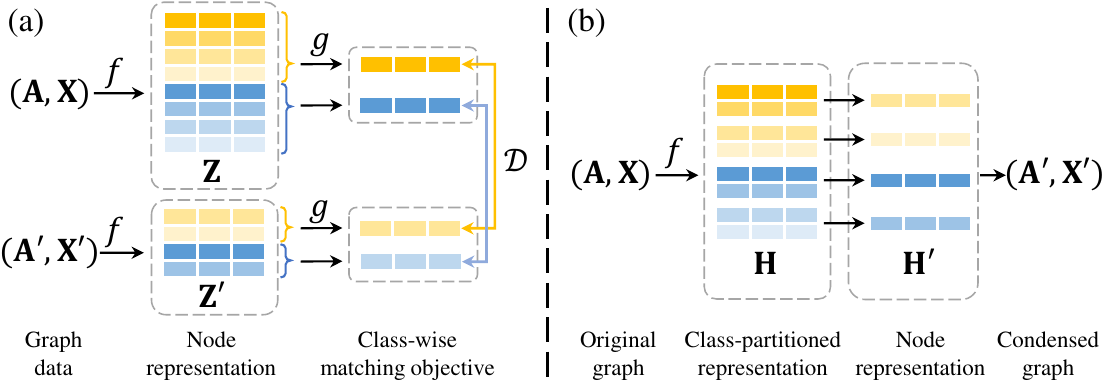}
\caption{(a) The class-to-class matching paradigm in existing GC methods. (b) Our proposed class-to-node matching paradigm. $f$ denotes the relay model, $\mathcal{D}$ represents the distance function, and $g$ measures the matching objective (refer to Table \ref{tab_GCcom} for details).}
\label{fig_main}
\end{figure}

To address these issues, we investigate the foundational objective of existing optimization strategies and design an exceedingly efficient GC approach with a training-free framework termed \underline{C}lass-partitioned \underline{G}raph \underline{C}ondensation (CGC). As illustrated in Figure \ref{fig_main}(b), CGC refines the distribution matching from the class-to-class paradigm to a delicate class-to-node distribution matching paradigm. Notably, this refinement further simplifies the distribution matching objective as a class partition problem, which can be efficiently optimized using any clustering method (e.g., K-Means).
Moreover, CGC utilizes the pre-defined graph structure with the Dirichlet energy constraint~\cite{kalofolias2016learn} to derive a closed-form solution for the condensed node features.
Equipped with these non-parametric modules, CGC eliminates gradient-based optimization in existing GC methods, enabling the condensation process to be executed on CPUs only. This enhances both the efficiency and effectiveness of the GC process, significantly broadening the utility of GC in real-world applications.

The main contributions of this paper are three-fold:

\begin{itemize}[leftmargin=*]
\item \textbf{New observations and insights.}
We theoretically demonstrate that existing GC optimization strategies converge to the class-level distribution matching paradigm, and subsequently simplify this process into a class partition problem, eliminating the sophisticated bi-level optimization and enabling efficient resolution through any clustering method.
\item \textbf{New GC framework.}
We present CGC, the first training-free GC framework characterized by a fine-grained class-to-node matching paradigm and closed-form feature generation, facilitating both precise and efficient GC procedures. Furthermore, CGC demonstrates considerable versatility, and any component within the framework can be replaced with a variety of alternative methods, such as distinct propagation and partition techniques.
\item \textbf{State-of-the-art performance.}
Extensive experiments demonstrate that CGC achieves advanced accuracy with an extremely fast condensation procedure. For instance, it condenses the Ogbn-products dataset within 30 seconds, which is 1,038$\times$ faster than GCond~\cite{jin2022graph} and 148$\times$ faster than the most efficient GC baseline SimGC~\cite{xiao2024simple}.
\end{itemize}

\section{Rethinking Existing Graph Condensation Methods}
\label{sec_rethink}

\begin{table*}[t]
\setlength{\abovecaptionskip}{0.1cm}
\centering
\caption{The comparison of different GC optimization strategies.}
\label{tab_GCcom}
\resizebox{0.8\linewidth}{!}{
\begin{tabular}{llll}
\toprule
Optimization strategy         & Representative methods           & $f$               & $g$                        \\ \midrule
Parameter matching      & GCond~\cite{jin2022graph}, SFGC~\cite{zheng_structure_free_2023},  GEOM~\cite{zhang2024navigating}, GCSR~\cite{liu2024graph}                  & GNN                 & Classification model parameter \\
Performance matching    & GC-SNTK~\cite{wang2023fast}, KiDD~\cite{xu2023kernel}                             & Graph kernel        & Regression model parameter        \\
Distribution matching   & GCDM~\cite{liu2022graph}, GCEM~\cite{liu_graph_2023}, SimGC~\cite{xiao2024simple}     & GNN                 & Class prototype            \\
 \bottomrule
\end{tabular}}
\end{table*}

In this section, we first formally formulate the graph condensation and then revisit the existing GC optimization strategies along with their mutual connections.
Subsequently, we formulate these strategies within a unified framework and demonstrate their adherence to a common class-level distribution matching paradigm.
Finally, we simplify this paradigm into a class partition problem, enabling efficient optimization through clustering methods.

\subsection{Problem Formulation}
We consider a large-scale original graph $\mathcal{T}=\{{\bf A}, {\bf X}\}$ with $N$ nodes, where ${\bf A}\in \mathbb{R}^{N\times N}$ is the adjacency matrix and ${\bf X}\in{\mathbb{R}^{N\times d}}$ denotes the $d$-dimensional node feature matrix.
Each node belongs to one of $c$ classes $\{C_1, \cdots, C_c\}$, translating into numeric labels ${\bf y}\in{\{1, ..., c\}^{N}}$ and one-hot labels ${\bf Y}\in{\mathbb{R}^{N\times c}}$. 
Graph condensation~\cite{jin2022graph} aims to generate a small condensed graph $\mathcal{S}=\{{\bf A'}, {\bf X'}\}$, such that GNNs trained on $\mathcal{S}$ yield performance comparable to those trained on $\mathcal{T}$.
Specifically, ${\bf A'}\in\mathbb{R}^{N'\times N'}$, ${\bf X'}\in\mathbb{R}^{N'\times d}$ and $c \leq N'\ll{N}$. 
Similarly, each node in $\mathcal{S}$ belongs to one of $c$ classes $\{C'_1, \cdots, C'_c\}$, and labels are denoted as ${\bf y'} \in {\{1, ..., c\}^{N'}}$ or ${\bf Y'} \in \mathbb{R}^{N' \times c}$. We follow GCond to pre-define the condensed node labels, which preserve the same class proportion as the original node labels.
To facilitate the expression, we assume all nodes in the original and condensed graphs are organized in ascending order according to labels. 

Notice that the generation of ${\bf A'}$ is optional in existing GC methods~\cite{jin2022graph,zhao2023dataset,zheng_structure_free_2023}. 
If ${\bf A'}$ is opted out, the identity matrix ${\bf I}$ is used instead in GNN training, and this approach is termed the graphless GC variant (a.k.a. structure-free GC~\cite{zheng_structure_free_2023}).

\subsection{Class-level Matching Paradigm in GC}

To achieve the GC objective, existing methods use a Siamese network architecture with a relay model $f$ to encode both graphs as shown in Figure \ref{fig_main} (a) and employ three kinds of optimization strategies~\cite{yu2023dataset,xu2024survey,hashemi2024comprehensive}: parameter matching, performance matching, and distribution matching.

\textbf{Parameter matching} posits that the parameters of the GNN classifier should possess high consistency whenever it is trained on $\mathcal{S}$ or $\mathcal{T}$.
To this end, GCond~\cite{jin2022graph} tries to match model parameters at each training step and simplifies the objective to facilitate that gradients generated by $\mathcal{S}$ match those from the same class in $\mathcal{T}$:
\begin{equation}
\label{eq_GMeq}
\begin{aligned}
\mathcal{L}_{GM}  &= {\mathop{\mathbb{E}}_{{\mathbf{\mathbf{\Theta}} } \sim \Phi}} \left[\sum_{i=1}^{c} \mathcal{D}( \nabla_{{\mathbf{\Theta}}} \mathcal{L}_{i}^{\mathcal{S}},\nabla_{{\mathbf{\Theta}}} \mathcal{L}_{i}^{\mathcal{T}})\right],\\
\end{aligned}
\end{equation}
where $\Phi$ is the distribution of the relay model (i.e., the GNN classifier) parameter ${\mathbf{\Theta}}$, and we omit relay model update in the inner loop for simplicity. $\mathcal{D}$ indicates the distance function.
$\mathcal{L}_{i}^{\mathcal{S}}$ and $\mathcal{L}_{i}^{\mathcal{T}}$ are classification losses (e.g., cross-entropy loss) for class $i$ w.r.t  $\mathcal{S}$ and $\mathcal{T}$, respectively. 
However, gradient matching~\cite{zhao2020dataset} may accumulate errors when the relay model is iteratively updated on $\mathcal{S}$ over multiple steps. 
To mitigate this problem, SFGC~\cite{zheng_structure_free_2023} introduces trajectory matching~\cite{cazenavette2022dataset} to align the long-term training trajectories of classification models.
Nonetheless, to avoid overfitting one initialization of the relay model, it requires training hundreds of GNNs on $\mathcal{T}$ to obtain the trajectories, resulting in heavy condensation computations.

\textbf{Performance matching}
aligns the performance of models trained on $\mathcal{S}$ and $\mathcal{T}$ by ensuring that the model trained on $\mathcal{S}$ achieves minimal loss on $\mathcal{T}$~\cite{yu2023dataset}.
To obtain the optimal model on $\mathcal{S}$, KiDD~\cite{xu2023kernel} and GC-SNTK~\cite{wang2023fast} substitute the classification task with the regression and incorporate KRR~\cite{nguyen2021dataset} in the GC procedure for a closed-form solution of the relay model. The objective is formulated as:
\begin{equation}
\label{eq_PMeq}
\mathcal{L}_{PM} =  \left \| {\bf Y} - {\bf Z} {\bf Z}'^{\top}\left({\bf Z}'{\bf Z}'^{\top}+\lambda{\bf I} \right)^{-1} {\bf Y}'  \right \|^{2},
\end{equation}
where ${\bf Z}'\in \mathbb{R}^{N'\times d}$ and ${\bf Z}\in \mathbb{R}^{N\times d}$ are node embeddings for $\mathcal{S}$ and $\mathcal{T}$, respectively. $\lambda$ is a small constant weight of the regularization term for numerical stability. $\left \|\cdot \right \|$ denotes the $\ell_2$ norm.
However, graph kernels used in KRR are computationally intensive and memory-consuming, limiting their scalability in large graphs.

\textbf{Distribution matching} directly aligns the class distributions of the original and condensed graphs~\cite{zhao2023dataset}. GCDM~\cite{liu2022graph} and CaT~\cite{liu_cat_2023} first introduce this strategy in GC by minimizing the discrepancy between class prototypes as:
\begin{equation}
\label{eq_DMeq}
\begin{aligned}
&\mathcal{L}_{DM}= {\mathop{\mathbb{E}}_{{\mathbf{\mathbf{\Theta}} } \sim \Phi}} \left[\left \| {\bf P}'{\bf{Z}}'- {\bf P}{\bf{Z}} \right \|^{2}\right ], \\
\end{aligned}
\end{equation}
where ${\bf P}'\in \mathbb{R}^{c\times N'}$ and ${\bf P}\in \mathbb{R}^{c\times N}$ are linear aggregation matrices to construct $c$ class-level features, i.e., class prototypes, for $\mathcal{S}$ and $\mathcal{T}$, respectively. 
Specifically, ${\bf P}'_{i,j} =\frac{1}{\left | C'_i \right | }$ if ${\bf y}'_j=i$, and ${\bf P}'_{i,j} =0$ otherwise. ${\bf P}_{i,j} =\frac{1}{\left | C_i \right | }$ if ${\bf y}_j=i$, and ${\bf P}_{i,j} =0$ otherwise. Here, ${\left | C'_i \right | }$ and ${\left | C_i \right | }$ represent the sizes of class $i$ in $\mathcal{S}$ and $\mathcal{T}$.
By eliminating the need to calculate model gradients and parameters, distribution matching achieves an efficient and flexible condensation process, making it prevalent in recent GC studies, e.g., eigenbasis matching~\cite{liu_graph_2023} and pre-trained model-based distribution matching~\cite{xiao2024disentangled,xiao2024simple}.

\textbf{Analysis.} Despite variations in format, existing optimization strategies can be uniformly formulated in a matching paradigm as:
\begin{equation}
\label{eq_def}
\arg\min_{\mathcal{S}}\mathcal{D}\left[ g\left(f\left({\mathcal{S}}\right), {\bf Y'}\right), g\left(f\left({\mathcal{T}}\right), {\bf Y}\right)\right],
\end{equation}
where $f$ is the relay model.
The function $g$ measures the matching objective for distance function $\mathcal{D}$, which varies in format for diverse optimization strategies as shown in Table \ref{tab_GCcom}.
Besides adopting the same framework, these three optimization strategies are inherently interconnected. Inspired by the study in CV~\cite{yu2023dataset}, we investigate their relationship and put forward three propositions as follows:
\begin{proposition}
\label{prop1}
The performance matching objective is equivalent to the optimal parameter matching objective.
\end{proposition}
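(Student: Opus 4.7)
The plan is to unpack what \emph{optimal} parameter matching means and to show that both objectives share exactly the same set of minimizers once the inner optimization over $\mathbf{\Theta}$ is carried out. Concretely, let $\mathcal{L}_{\mathcal{S}}(\mathbf{\Theta})$ and $\mathcal{L}_{\mathcal{T}}(\mathbf{\Theta})$ denote the training losses of the relay model $f$ on $\mathcal{S}$ and $\mathcal{T}$, and write $\mathbf{\Theta}^{*}_{\mathcal{S}}=\arg\min_{\mathbf{\Theta}}\mathcal{L}_{\mathcal{S}}(\mathbf{\Theta})$ and $\mathbf{\Theta}^{*}_{\mathcal{T}}=\arg\min_{\mathbf{\Theta}}\mathcal{L}_{\mathcal{T}}(\mathbf{\Theta})$. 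The \emph{optimal} parameter matching objective seeks $\mathcal{S}$ that minimizes $\mathcal{D}(\mathbf{\Theta}^{*}_{\mathcal{S}},\mathbf{\Theta}^{*}_{\mathcal{T}})$, i.e.\ it drops the per-step gradient surrogate used by GCond and directly demands that the fully-trained parameters agree; the performance matching objective in Eq.~\eqref{eq_PMeq} instead minimizes $\mathcal{L}_{\mathcal{T}}(\mathbf{\Theta}^{*}_{\mathcal{S}})$, the loss that the $\mathcal{S}$-trained model incurs on $\mathcal{T}$. I would then establish a two-way implication between these two minimization problems.

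First I would argue the forward direction: suppose $\mathcal{S}^{\star}$ is a minimizer of the performance matching objective. Because $\mathbf{\Theta}^{*}_{\mathcal{T}}$ is by definition the unique minimizer of $\mathcal{L}_{\mathcal{T}}$ (this is where I invoke strict convexity of the KRR loss, guaranteed by the $\lambda\mathbf{I}$ regularizer in Eq.~\eqref{eq_PMeq}), any $\mathbf{\Theta}^{*}_{\mathcal{S}^{\star}}$ that also minimizes $\mathcal{L}_{\mathcal{T}}$ must coincide with $\mathbf{\Theta}^{*}_{\mathcal{T}}$; hence $\mathcal{D}(\mathbf{\Theta}^{*}_{\mathcal{S}^{\star}},\mathbf{\Theta}^{*}_{\mathcal{T}})=0$, which is the global minimum of optimal parameter matching. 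For the converse, if $\mathcal{S}^{\star\star}$ achieves $\mathcal{D}(\mathbf{\Theta}^{*}_{\mathcal{S}^{\star\star}},\mathbf{\Theta}^{*}_{\mathcal{T}})=0$, then $\mathbf{\Theta}^{*}_{\mathcal{S}^{\star\star}}=\mathbf{\Theta}^{*}_{\mathcal{T}}$, so plugging into $\mathcal{L}_{\mathcal{T}}$ yields $\mathcal{L}_{\mathcal{T}}(\mathbf{\Theta}^{*}_{\mathcal{S}^{\star\star}})=\mathcal{L}_{\mathcal{T}}(\mathbf{\Theta}^{*}_{\mathcal{T}})$, which is the infimum of the performance matching objective. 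The two objectives therefore share the same argmin set.

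To make this concrete and tie it back to the closed form in Eq.~\eqref{eq_PMeq}, I would substitute the KRR solution $\mathbf{\Theta}^{*}_{\mathcal{S}}=(\mathbf{Z}'^{\top}\mathbf{Z}'+\lambda\mathbf{I})^{-1}\mathbf{Z}'^{\top}\mathbf{Y}'$ into $\mathcal{L}_{\mathcal{T}}(\mathbf{\Theta})=\|\mathbf{Y}-\mathbf{Z}\mathbf{\Theta}\|^{2}$, recover the dual form $\|\mathbf{Y}-\mathbf{Z}\mathbf{Z}'^{\top}(\mathbf{Z}'\mathbf{Z}'^{\top}+\lambda\mathbf{I})^{-1}\mathbf{Y}'\|^{2}$ of Eq.~\eqref{eq_PMeq} via the push-through identity, and note that the same substitution applied with $\mathcal{T}$ in place of $\mathcal{S}$ attains the strictly positive minimum floor determined solely by $\mathcal{T}$. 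This makes explicit that performance matching is a relaxation that becomes tight precisely when $\mathbf{\Theta}^{*}_{\mathcal{S}}=\mathbf{\Theta}^{*}_{\mathcal{T}}$.

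The main obstacle I expect is the uniqueness of $\mathbf{\Theta}^{*}_{\mathcal{T}}$: for a general non-convex $f$ (as in GCond with a GNN classifier and cross-entropy loss) the minimizer need not be unique, and the equivalence then holds only up to the level set $\{\mathbf{\Theta}:\mathcal{L}_{\mathcal{T}}(\mathbf{\Theta})=\mathcal{L}_{\mathcal{T}}(\mathbf{\Theta}^{*}_{\mathcal{T}})\}$ rather than point-wise. I would handle this by restricting the statement to the KRR/regression regime that performance matching actually uses (where strict convexity is automatic), or, more generally, by reading $\mathcal{D}(\mathbf{\Theta}^{*}_{\mathcal{S}},\mathbf{\Theta}^{*}_{\mathcal{T}})=0$ as membership in the optimal parameter set, which is exactly the condition under which the performance matching loss attains its infimum.
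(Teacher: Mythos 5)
Your reading of the two objectives matches the paper's (optimal parameter matching as $\mathcal{D}(\mathbf{\Theta}^{*}_{\mathcal{S}},\mathbf{\Theta}^{*}_{\mathcal{T}})$, performance matching as $\mathcal{L}_{\mathcal{T}}(\mathbf{\Theta}^{*}_{\mathcal{S}})$), but your central argument has a genuine gap: it establishes only that the two objectives share the same set of \emph{perfect} solutions, i.e.\ that $\mathcal{D}(\mathbf{\Theta}^{*}_{\mathcal{S}},\mathbf{\Theta}^{*}_{\mathcal{T}})=0$ holds for exactly those $\mathcal{S}$ at which $\mathcal{L}_{\mathcal{T}}(\mathbf{\Theta}^{*}_{\mathcal{S}})$ attains the unconstrained floor $\mathcal{L}_{\mathcal{T}}(\mathbf{\Theta}^{*}_{\mathcal{T}})$. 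In your forward direction you silently assume that a minimizer $\mathcal{S}^{\star}$ of the performance matching objective yields a $\mathbf{\Theta}^{*}_{\mathcal{S}^{\star}}$ that minimizes $\mathcal{L}_{\mathcal{T}}$ over \emph{all} $\mathbf{\Theta}$; this does not follow, because the reachable set of parameters $\mathbf{\Theta}^{*}_{\mathcal{S}}$, as $\mathcal{S}$ ranges over condensed graphs with $N'$ nodes, is a constrained family that in general does not contain $\mathbf{\Theta}^{*}_{\mathcal{T}}$ at all --- and with $N'\ll N$ that is the generic situation in graph condensation. When neither objective can reach its floor, your two-way implication says nothing about whether the two objectives prefer the same $\mathcal{S}$, which is what ``equivalent'' has to mean here.

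The paper closes this gap by proving an identity between the objective \emph{functions}, not just their zero sets: it substitutes both closed-form KRR solutions (taking $\lambda=0$), applies $\mathbf{Z}^{\top}(\mathbf{Z}\mathbf{Z}^{\top})^{-1}=(\mathbf{Z}^{\top}\mathbf{Z})^{-1}\mathbf{Z}^{\top}$, and factors the parameter matching objective as $\left\| \left((\mathbf{Z}^{\top}\mathbf{Z})^{-1}\mathbf{Z}^{\top}\right)\left(\mathbf{Y}-\mathbf{Z}\mathbf{Z}'^{\top}(\mathbf{Z}'\mathbf{Z}'^{\top})^{-1}\mathbf{Y}'\right)\right\|^{2}$, where the left factor is a fixed matrix independent of $\mathbf{Z}'$; the minimization over $\mathbf{Z}'$ therefore reduces to the performance matching objective of Eq.~(\ref{eq_PMeq}) whatever its minimum value turns out to be. Your third paragraph already contains the needed ingredients --- the closed-form substitution and the push-through identity --- but you deploy them only to illustrate when the relaxation ``becomes tight.'' Promote that computation to the main argument and the attainability assumption becomes unnecessary (your caveat about non-uniqueness for non-convex relay models is well taken, and the paper likewise sidesteps it by working entirely in the KRR regime).
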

\begin{proposition}
\label{prop2}
The distribution matching objective represents a simplified formulation of the performance matching, omitting feature correlation considerations.
\end{proposition}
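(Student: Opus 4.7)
The plan is to begin from the performance matching loss in Eq.~(\ref{eq_PMeq}), rewrite it so that all pairwise feature correlations among condensed nodes are concentrated in a single Gram-matrix factor, and then show that neutralising that factor collapses the loss into the distribution matching form Eq.~(\ref{eq_DMeq}). First I would invoke the Woodbury identity to pass to the dual KRR predictor $\hat{{\bf Y}}={\bf Z}({\bf Z}'^{\top}{\bf Z}'+\lambda{\bf I})^{-1}{\bf Z}'^{\top}{\bf Y}'$, after which the only place where inter-feature correlations of $\mathcal{S}$ enter the loss is the inverse Gram factor $({\bf Z}'^{\top}{\bf Z}'+\lambda{\bf I})^{-1}$.

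Next I would rewrite the label--feature interactions of Eq.~(\ref{eq_PMeq}) in terms of the class-aggregation matrices of Eq.~(\ref{eq_DMeq}). Because nodes are sorted by class and ${\bf Y},{\bf Y}'$ are one-hot, by construction of ${\bf P}$ and ${\bf P}'$ we have ${\bf Y}^{\top}{\bf Z}=\mathrm{diag}(|C_i|)\,{\bf P}{\bf Z}$ and ${\bf Y}'^{\top}{\bf Z}'=\mathrm{diag}(|C'_i|)\,{\bf P}'{\bf Z}'$, so the class prototypes used by DM already appear implicitly inside $\mathcal{L}_{PM}$. Expanding $\|{\bf Y}-\hat{{\bf Y}}\|^{2}$ and substituting these identities yields a cross term of the schematic form $\mathrm{tr}\bigl({\bf P}{\bf Z}\cdot {\bf M}\cdot ({\bf P}'{\bf Z}')^{\top}\bigr)$, where ${\bf M}$ bundles $({\bf Z}'^{\top}{\bf Z}'+\lambda{\bf I})^{-1}$ together with the class-size diagonals.

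The operative step is then to discard feature correlations by approximating $({\bf Z}'^{\top}{\bf Z}'+\lambda{\bf I})^{-1}\approx \gamma\,{\bf I}$ for a suitable scalar $\gamma$, so that ${\bf M}$ becomes a constant multiple of the identity. Under this approximation the PM loss reduces, up to terms independent of $\mathcal{S}$ and an overall rescaling, to $\|{\bf P}{\bf Z}-{\bf P}'{\bf Z}'\|^{2}$, which is exactly $\mathcal{L}_{DM}$ after taking the expectation over $\mathbf{\Theta}\sim\Phi$.

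I expect the main obstacle to be making the \emph{omit feature correlation} step precise rather than merely plausible: the inverse Gram is exactly the quantity KRR uses to de-correlate features, so dropping it cannot be justified by a worst-case bound. Following the CV analogue in dataset distillation~\cite{yu2023dataset}, I would therefore state Proposition~\ref{prop2} as a structural rather than quantitative equivalence: once $({\bf Z}'^{\top}{\bf Z}'+\lambda{\bf I})^{-1}$ is replaced by its scalar surrogate, every remaining $\mathcal{S}$-dependent degree of freedom in the PM loss is already captured by the class prototypes that DM matches, which is the sense in which DM is the correlation-free restriction of PM.
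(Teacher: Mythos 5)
Your core idea --- that the feature correlations live entirely in the inverse Gram factors, and that ``omitting feature correlations'' means replacing those inverses by class-size scalars --- is exactly the idea the paper uses. But the vehicle you chose for it would not carry you to $\mathcal{L}_{DM}$. You start from the prediction-error form $\|{\bf Y}-{\bf Z}{\bf Z}'^{\top}({\bf Z}'{\bf Z}'^{\top}+\lambda{\bf I})^{-1}{\bf Y}'\|^{2}$ and expand the square. Even after you set $({\bf Z}'^{\top}{\bf Z}'+\lambda{\bf I})^{-1}\approx\gamma{\bf I}$, only the cross term acquires the shape $\mathrm{tr}\bigl({\bf P}{\bf Z}\,{\bf M}\,({\bf P}'{\bf Z}')^{\top}\bigr)$; the constant term is $\|{\bf Y}\|^{2}=N$, not $\|{\bf P}{\bf Z}\|^{2}$, and the quadratic term $\|\hat{{\bf Y}}\|^{2}$ still contains the original-graph Gram ${\bf Z}^{\top}{\bf Z}$ through the cross-kernel ${\bf Z}'{\bf Z}^{\top}{\bf Z}{\bf Z}'^{\top}$, which your scalar surrogate for the \emph{condensed} Gram does nothing to remove. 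So the expansion does not reassemble into $\|{\bf P}{\bf Z}-{\bf P}'{\bf Z}'\|^{2}$ ``up to terms independent of $\mathcal{S}$''; the leftover pieces depend on ${\bf Z}'$.

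The paper sidesteps this by not expanding a prediction error at all. Using Proposition~1 it rewrites performance matching as the distance between the two \emph{optimal regression parameters}, $\|({\bf Z}^{\top}{\bf Z})^{-1}{\bf Z}^{\top}{\bf Y}-({\bf Z}'^{\top}{\bf Z}')^{-1}{\bf Z}'^{\top}{\bf Y}'\|^{2}$, decomposed class-wise. In that form each side is already ``(whitening matrix) $\times$ (unnormalized class prototype ${\bf Z}_i^{\top}{\bf Y}_i$)'', and dropping \emph{both} Grams --- replacing $({\bf Z}_i^{\top}{\bf Z}_i)^{-1}$ by $\tfrac{1}{|C_i|}{\bf I}$ and $({\bf Z}_i'^{\top}{\bf Z}_i')^{-1}$ by $\tfrac{1}{|C'_i|}{\bf I}$ --- lands exactly on Eq.~(\ref{eq_DMeq}) with no stray terms. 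If you want to rescue your argument, switch your starting point from $\|{\bf Y}-\hat{{\bf Y}}\|^{2}$ to the parameter-difference form (which Proposition~1 already licenses), and note that two Grams must be neutralized, not one.
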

\begin{proposition}
\label{prop3}
The distribution matching objective with a feature correlation constraint provides an upper bound for the parameter matching objective.
\end{proposition}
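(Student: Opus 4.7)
The plan is to derive Proposition~\ref{prop3} by chaining Propositions~\ref{prop1} and~\ref{prop2}: Proposition~\ref{prop2} identifies exactly what distribution matching discards relative to performance matching, so reintroducing that missing piece as a constraint should recover (or upper bound) performance matching, and Proposition~\ref{prop1} then identifies performance matching with optimal parameter matching. The first step is to make precise what "feature correlation constraint" means. Since Proposition~\ref{prop2} characterizes the simplification as the omission of the kernel factor $({\bf Z}'{\bf Z}'^\top + \lambda{\bf I})^{-1}$, the natural definition is a term $\mathcal{L}_{FC}$ that penalizes the deviation of ${\bf Z}'{\bf Z}'^\top$ from an appropriate reference (either $\lambda{\bf I}$, or ${\bf Z}{\bf Z}^\top$ suitably rescaled), and then to set $\mathcal{L}_{DM+FC} = \mathcal{L}_{DM} + \alpha\,\mathcal{L}_{FC}$ for a constant $\alpha$.

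Next I would establish the target inequality $\mathcal{L}_{GM} \le \mathcal{L}_{DM+FC}$ in two stages. The first bound $\mathcal{L}_{GM} \le \mathcal{L}_{PM}$ follows from Proposition~\ref{prop1}: since performance matching is equivalent to parameter matching at the optimum of the relay model, any residual performance-matching loss upper bounds the corresponding parameter-matching discrepancy. The second bound $\mathcal{L}_{PM} \le \mathcal{L}_{DM+FC}$ is the analytic core. I would expand the kernel inverse via a Neumann series around $\lambda{\bf I}$ and apply the triangle inequality to $\|{\bf Y} - {\bf Z}{\bf Z}'^\top({\bf Z}'{\bf Z}'^\top+\lambda{\bf I})^{-1}{\bf Y}'\|^2$, so that the leading term collapses, after regrouping ${\bf Y}^\top{\bf Z}$ and ${\bf Y}'^\top{\bf Z}'$ into the class-prototype matrices ${\bf P}{\bf Z}$ and ${\bf P}'{\bf Z}'$, to precisely the distribution matching loss; the residual higher-order terms are then bounded by $\|{\bf Z}'{\bf Z}'^\top\|$ and hence absorbed into $\mathcal{L}_{FC}$ up to a multiplicative constant.

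Composing these two bounds yields the claim. The main obstacle is the Neumann expansion itself: it converges only when $\|{\bf Z}'{\bf Z}'^\top\| < \lambda$, which need not hold uniformly over the optimization trajectory of $\mathcal{S}$. To handle this I would either normalize the embeddings so that the spectral radius of ${\bf Z}'{\bf Z}'^\top$ is bounded below $\lambda$, or replace the series by a direct perturbation inequality such as $\|({\bf A}+\lambda{\bf I})^{-1} - (\lambda{\bf I})^{-1}\| \le \|{\bf A}\|/(\lambda(\lambda - \|{\bf A}\|))$, and then re-derive the prototype decomposition under that milder hypothesis. Pinning down the exact form of $\mathcal{L}_{FC}$ so that it dominates this residual cleanly while remaining interpretable as a genuine "feature correlation constraint" is the delicate part of the argument; everything else is bookkeeping on top of Propositions~\ref{prop1} and~\ref{prop2}.
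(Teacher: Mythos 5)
Your plan diverges from the paper's argument, and the divergence introduces gaps that I do not think can be repaired along the route you sketch. The paper's proof is a direct, elementary computation: it instantiates parameter matching as the per-class gradient matching loss of GCond with an SGC relay and a quadratic loss, so that $\nabla_{\mathbf{\Theta}}\mathcal{L}_i^{\mathcal{T}} = {\bf H}_i^{\top}{\bf H}_i\mathbf{\Theta} - {\bf H}_i^{\top}{\bf Y}_i$, and then splits the normalized gradient difference into a label--feature cross-term and a feature--feature term, bounding the result by
$\sum_i \bigl\| \tfrac{1}{|C_i|}{\bf Y}_i^{\top}{\bf H}_i - \tfrac{1}{|C'_i|}{\bf Y}_i'^{\top}{\bf H}'_i \bigr\|^2 + \sum_i \bigl\| \tfrac{1}{|C_i|}{\bf H}_i^{\top}{\bf H}_i - \tfrac{1}{|C'_i|}{\bf H}_i'^{\top}{\bf H}'_i \bigr\|^2 \|\mathbf{\Theta}\|^2$.
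The first sum is exactly the distribution matching objective of Eq.~(\ref{eq_DMapp}) and the second sum \emph{is} the feature correlation constraint --- the proposition's content is precisely the identification of that constraint as the class-wise Gram-matrix discrepancy. Your proposal never pins this down; you leave $\mathcal{L}_{FC}$ as ``a term that penalizes deviation of ${\bf Z}'{\bf Z}'^{\top}$ from an appropriate reference,'' which defers the essential part of the claim.

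The two bounds you propose also do not hold as stated. First, Proposition~\ref{prop1} is an \emph{equivalence} between the performance matching loss and the matching of the \emph{closed-form optimal} KRR parameters (with $\lambda=0$); it says nothing about the gradient matching loss evaluated at $\mathbf{\Theta}$ sampled from $\Phi$, which is what ``parameter matching'' means in Proposition~\ref{prop3}. So $\mathcal{L}_{GM}\le\mathcal{L}_{PM}$ does not follow from Proposition~\ref{prop1} and is not true in general. Second, the step $\mathcal{L}_{PM}\le\mathcal{L}_{DM}+\alpha\mathcal{L}_{FC}$ via a Neumann expansion of $({\bf Z}'{\bf Z}'^{\top}+\lambda{\bf I})^{-1}$ founders not only on the convergence condition you flag (the paper works in the regime $\lambda\to 0$, where the expansion around $\lambda{\bf I}$ is exactly the wrong asymptotic), but also on the claim that the leading term ``collapses to the distribution matching loss'': the PM objective couples all classes through ${\bf Z}{\bf Z}'^{\top}$, and Proposition~\ref{prop2} relates it to DM only by \emph{structural analogy} (dropping the inverse Gram factors and substituting $1/|C_i|$ normalization), not by an inequality you can chain. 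The direct gradient-expansion route is both shorter and actually closes.
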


The proofs of propositions are deferred to the Appendix \ref{app_pro1}, \ref{app_pro2} and \ref{app_pro3}, respectively. 

\begin{remark}
In light of Propositions 1-3, various optimization strategies converge to class-level feature matching between the original and condensed graphs, i.e., distribution matching in Eq. (\ref{eq_DMeq}).
\end{remark}

However, on top of the efficiency issue discussed earlier, this matching paradigm only emphasizes on simulating the class distribution of the original graph by assigning a unified objective for all condensed nodes in the same class, resulting in the coarse-grained optimization target for each node.
To mitigate these issues, in the following subsection, we investigate the distribution matching objective and refine the feature matching from the class-to-class paradigm into the class-to-node paradigm, thereby providing each condensed node with an explicit and efficient optimization target.

\subsection{Simplifying Distribution Matching}

For the sake of simplicity, we first follow existing GC methods~\cite{jin2022graph,liu2022graph,xiao2024simple,gao_graph_2023,liu_cat_2023} to specify the relay model $f$ as widely used SGC~\cite{wu2019simplifying}, which decouples the propagation layer and transformation layer in GNN for efficient graph encoding as:
\begin{equation}
\label{eq_SGC}
\begin{aligned}
&{\bf{Z}}' = {\bf{H}}'{\mathbf{\Theta} } = {\hat{\bf{A}}}'^{K}{\bf{X}}'{\mathbf{\Theta} }, \\ 
&{\bf{Z}} = {\bf{H}}{\mathbf{\Theta} }= {\hat{\bf{A}}}^{K}{\bf{X}}{\mathbf{\Theta} },
\end{aligned}
\end{equation}
where ${\mathbf{\Theta} }$ is the learnable weight matrix to transform the $K$-th order propagated features ${\bf{H}}'$ and ${\bf{H}}$. 
${\hat{\bf{A}}'} = {\tilde{\bf{D}}}'^{\frac{1}{2}}{\tilde{\bf{A}}}'{\tilde{\bf{D}}}'^{\frac{1}{2}}$ and ${\hat{\bf{A}}}={\tilde{\bf{D}}}^{\frac{1}{2}}{\tilde{\bf{A}}}{\tilde{\bf{D}}}^{\frac{1}{2}}$ represent the symmetric normalized adjacency matrices, where ${\tilde{\bf{A}}'}$ and ${\tilde{\bf{A}}}$ are adjacency matrices with self-loops.  ${\tilde{\bf{D}}'}$ and ${\tilde{\bf{D}}}$ are degree matrices for ${\tilde{\bf{A}}'}$ and ${\tilde{\bf{A}}}$, respectively.
Consequently, the distribution matching objective in Eq. (\ref{eq_DMeq}) is formulated as:
\begin{equation}
\label{eq_sgcDM}
\begin{aligned}
\arg\min_{\mathcal{S}} {\mathop{\mathbb{E}}_{{\mathbf{\Theta} } \sim \Phi}} \left[\left \| {\bf P}'{\bf{H}}'{\mathbf{\Theta}}- {\bf P}{\bf{H}}{\mathbf{\Theta}} \right \|^{2} \right]. \\
\end{aligned}
\end{equation}
To facilitate the \textit{class-to-node distribution matching paradigm}, we enhance the aggregation matrix ${\bf{P}}$ with two objectives: 
\begin{itemize}[leftmargin=*]
\item The number of aggregated features is expanded from $c$ to $N'$, ensuring that each aggregated feature in the original graph corresponds to a distinct condensed node;
\item Aggregations are performed within classes to preserve the class semantics of aggregated features.
\end{itemize}

Consequently, it is expected that $|C_i|$ original nodes in class $i$ will be aggregated into $|C'_i|$ features to match with condensed nodes in $C'_i$.
This aggregation procedure analogizes to a class partition problem, defined as follows:
\begin{definition}
{{\bf Class partition.}} The class partition divides $|C_i|$ nodes in class $i$ into $|C'_i|$ non-overlapping sub-classes $\{S^i_1, ..., S^i_{|C'_i|}\}$, with each sub-class characterized by a centroid aggregated by constituent nodes.  The node mapping function of this partition is defined as $\pi^i:\{1, ..., |C_i|\} \to \{1, ..., |C'_i|\}$ and the class-wise aggregation matrix ${\bf{R}}^i \in \mathbb{R}^{|C'_i|\times |C_i|}$ is formulated as:
\begin{equation}
\label{eq_agg}
{{\bf{R}}}^{i}_{j,k} = \begin{cases}
\frac{1}{\left | S^{i}_j \right | }    & {\rm{if}} \ {\pi^i}(k)=j \\
 0  &       \rm{otherwise}
\end{cases},
\end{equation}
\end{definition}
\noindent where ${{\bf{R}}}^{i}_{j,k}$ denotes the aggregation weight for node $k$ in class $i$, and $\pi^i(k)$ indicates the subclass index for node $k$. 
$\left | S^{i}_j \right |$ represents the size of sub-class $S^{i}_j$.

With the defined class partition, the aggregation matrix ${\bf{P}}\in \mathbb{R}^{c\times N}$ can be updated to $\hat{\bf{P}}\in \mathbb{R}^{N'\times N}$, which is constructed by organizing all class-wise aggregation matrices along the diagonal as $\hat{\bf{P}}= \text{diag}({\bf{R}}^1, ..., {\bf{R}}^c)$.
Assuming that condensed nodes are organized in ascending order based on their labels, ${\bf{P}'}$ degrades into the identity matrix $\bf I$, and Eq. (\ref{eq_sgcDM}) can be reformulated as:
\begin{equation}
\label{eq_simDM1}
\begin{aligned}
&\arg\min_{\mathcal{S}, \hat{\bf P}}{\mathop{\mathbb{E}}_{{\mathbf{\Theta}} \sim \Phi}} \left[ \left \| {\bf{H}}'{\mathbf{\Theta}}- \hat{\bf P}{\bf{H}}{\mathbf{\Theta}} \right \|^{2} \right],  \\
\end{aligned}
\end{equation}
which is upper-bounded by:
\begin{equation}
\label{eq_simDM2}
\begin{aligned}
&{\mathop{\mathbb{E}}_{{\mathbf{\Theta}} \sim \Phi}} \left[ \left \| {\bf{H}}'{\mathbf{\Theta}}- \hat{\bf P}{\bf{H}}{\mathbf{\Theta}} \right \|^{2} \right] \leq {\mathop{\mathbb{E}}_{{\mathbf{\Theta} } \sim \Phi}} \left[ \left \| {\bf{H}}'- \hat{\bf P}{\bf{H}} \right \|^{2} \left \| {\mathbf{\Theta}}  \right \|^{2}\right].  \\
\end{aligned}
\end{equation}

Given that ${\mathbf{\Theta}}$ is independent to $\mathcal{S}$ and $\hat{\bf P}$, we can minimise the upper-bound by achieving: 
\begin{equation}
\label{eq_finDM}
\begin{aligned}
\arg\min_{\mathcal{S}, \hat{\bf P}}\left \| {\bf{H}}'- \hat{\bf P}{\bf{H}}\right \|^{2}. \\
\end{aligned}
\end{equation}

This objective can be further simplified under the graphless GC variant, where $\mathcal{S}=\{{\bf I}, {\bf X'}\}$ and ${\bf{H}}'={\hat{\bf{A}}}'^{K}{\bf{X}}'={{\bf{I}}}^{K}{\bf{X}}'={\bf{X}}'$. Consequently, the objective is reformulated to:
\begin{equation}
\label{eq_graphlessDM}
\begin{aligned}
\arg\min_{{\bf{X}}', \hat{\bf P}}\left \| {\bf{X}}'- \hat{\bf P}{\bf{H}}\right \|^{2}. \\
\end{aligned}
\end{equation}

\begin{remark}
\label{remark2}
The objective in Eq. (\ref{eq_graphlessDM}) indicates that the condensed node feature ${\bf{X}'}$ in graphless GC can be obtained by performing class partition on the propagated features ${\bf{H}}$, eliminating the gradient-based distribution matching optimization. This ${\bf{X}'}$ can be directly used to train GNNs with the identity matrix ${\bf{I}}$.
\end{remark}
Specifically, we can efficiently obtain the solution by applying any Expectation-Maximization (EM) based clustering algorithms (e.g., K-means) to each class, iteratively updating the cluster centroid ${\bf{X}'}$ and the aggregation matrix $\hat{\bf{P}}$ until convergence.
Note that another intuitive case arises when the size of the condensed graph matches the number of classes, i.e., $N'=c$.
In this scenario, ${\bf{P}'}=\bf I$ and the conventional distribution matching objective in Eq. (\ref{eq_sgcDM}) degrades to calculating the class prototypes with the pre-defined ${\bf{P}}$, which is consistent with Eq. (\ref{eq_graphlessDM}).

\begin{table}
\setlength{\abovecaptionskip}{0.1cm}
\centering
\caption{The comparison of simplified distribution matching with the conventional objective. Speedup ratios compared to GCDM-X are indicated in brackets ($r$: condensation ratio, model: GCN).}
\label{tab_pre}
\resizebox{\linewidth}{!}{
\begin{tabular}{lccccccc}
\toprule
\multirow{2}{*}{Dataset} & \multirow{2}{*}{$r$} & \multicolumn{3}{c}{Accuracy (\%)}                                                                                                               & \multicolumn{3}{c}{Condensation time (s)}                                                      \\ \cmidrule(l{3pt}r{4pt}){3-5} \cmidrule(l{4pt}r{3pt}){6-8}
                   &      & {GCDM} & {GCDM-X} & {SimDM} & {GCDM} & {GCDM-X} & {SimDM} \\ \midrule
Cora     &   2.60\%             & 77.2±0.4                                & 81.4±0.1                                 & 80.1±0.7                  & 27.3                     & 24.0                       & 0.3 (80$\times$)                 \\
Citeseer  &     1.80\%          & 69.5±1.1                              & 71.9±0.5                                  & 70.9±0.6                  & 62.0                     & 51.0                       & 0.7 (73$\times$)                 \\
Ogbn-arxiv &    0.25\%          & 59.6±0.4                             & 61.2±0.1                                   & 65.1±0.4                  & 690.0                    & 469.0                      & 7.1 (66$\times$)                 \\
Flickr  &       0.50\%           & 46.8±0.1                             & 45.6±0.1                                 & 45.8±0.2                  & 209.5                    & 86.2                       & 2.1 (41$\times$)                 \\
Reddit  &      0.10\%           & 89.7±0.2                             & 87.2±0.1                                    & 90.6±0.1                  & 921.9                    & 534.8                      & 8.2 (65$\times$)                 \\ \bottomrule
\end{tabular}
}
\end{table}

To validate the simplified objective, we compare the condensed graph $\{{\bf{I}}, {\bf{X}'}\}$, derived by objective in Eq. (\ref{eq_graphlessDM}) and termed SimDM, with the conventional distribution matching-based methods (i.e., GCDM~\cite{liu2022graph} and its graphless variation GCDM-X).
The test accuracy and condensation time are detailed in Table \ref{tab_pre} and experimental setting are deferred to Section \ref{sec_exp}. SimDM significantly excels in condensation time while maintaining comparable accuracy, confirming the effectiveness of our class partition-based objective.

\section{Class-partitioned Graph Condensation (CGC)}
Despite the potential to accelerate the condensation procedure, the class partition in SimDM neglects the data quality of the original graphs, a critical aspect in addressing data-centric challenges in GC.
To establish a comprehensive data processing framework, we further enhance SimDM by incorporating data assessment, augmentation, and graph generation modules. 
Consequently, we introduce a novel GC framework, as illustrated in Figure \ref{fig_pip}, where all five involved modules are training-free, facilitating an efficient and robust condensation process.

\subsection{Feature Propagation}
To eliminate the gradient calculation in the condensation procedure, the non-parametric feature propagation module is deployed to smooth the node features $\bf{X}$ according to the original graph structure $\bf{A}$ and generate the node embeddings. 
The propagation method is replaceable with a variety of choices with diverse characteristics, e.g., SGC~\cite{wu2019simplifying}, personalized PageRank (PPR)~\cite{page1999pagerank} and SAGE~\cite{hamilton2017inductive}, etc. 
Without loss of generality, we follow existing GC methods~\cite{jin2022graph,liu2022graph,xiao2024simple,gao_graph_2023,liu_cat_2023} and adopt the propagation method in SGC to generate the node embeddings:
\begin{equation}
\label{eq_FP}
{\bf{H}}^{(l)} = {\hat{\bf{A}}}^{l}{\bf{X}},
\end{equation}
where $0\leq l \leq K$, and the embeddings in the last layer are denoted as ${\bf{H}} = {\bf{H}}^{(K)}$ for the final class partition. 

\subsection{Data Assessment}
For the sake of robust class representation, we assess the node embeddings prior to the class partition process.
Inspired by \cite{zhao2024graphany,zhu2020simple}, we incorporate node embeddings at various propagation depths and utilize a simple linear layer $\bf{W}$ as the classifier for label prediction.
\begin{equation}
\label{eq_DA}
\begin{aligned}
&\hat{\bf{Y}} = {\bf{T}}{\bf{W}} = \frac{1}{K+1}\sum_{l=0}^{K}{\bf{H}}^{(l)}{\bf{W}}.\\
\end{aligned}
\end{equation}
Subsequently, the MSE loss is employed for optimization:
\begin{equation}
\label{eq_DA2}
\begin{aligned}
&\arg\min_{{\bf{W}}} \left \|{\bf{Y}}-\hat{\bf{Y}} \right \|^{2}.
\end{aligned}
\end{equation}
This loss facilitates an efficient and precise closed-form solution, expressed as ${\hat{\bf{W}}}={\bf{T}^{+}}{\bf{Y}}$, where  ${\bf{T}^{+}}$ denotes the pseudo inverse of ${\bf{T}}$. 
Afterwards, the linear classifier is utilised to evaluate node embeddings ${\bf{H}}^{(l)}$ for $0\leq l \leq K$, and their confidence scores w.r.t the ground-truth are recorded as ${\bf {r}}\in \mathbb{R}^{N(K+1)}$.
Additionally, we calculate the prediction errors for each class of ${\bf{H}}$ and represent these class prediction errors as ${\bf e}=[e_1, ..., e_c]$.
The confidence scores ${\bf {r}}$ reflect the reliability of node embeddings in class representation, while the class prediction errors ${\bf e}$ highlight the difficulty associated with each class. In subsequent modules, these metrics will inform the data augmentation strategies and calibration of condensed features, aiming to enhance the quality of the condensed graph.

\begin{figure}[t]
\setlength{\abovecaptionskip}{4pt}
    \centering
    \includegraphics[width=0.95\linewidth]{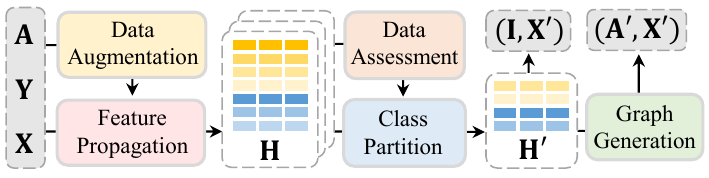}
    \caption{The pipeline of CGC and CGC-X.}
    \label{fig_pip}
\end{figure}

\subsection{Data Augmentation}
Although the non-overlapping class partition in SimDM simplifies optimization, it reduces the number of original nodes matched by each condensed node, intensifying the dependency on the quantity of nodes involved in the partition process.
Concurrently, the time complexity of class partition scales proportionally with the increasing number of nodes.
To address these issues and achieve equilibrium, we propose augmenting underrepresented classes with additional node embeddings to ensure robust class representations.

Specifically, we utilize the node embeddings with smaller propagation depths as augmentations, i.e., ${\bf{H}}^{(l)}$, where $0\leq l< K$. It is important to note that only training set nodes are involved, with each node paired with its respective label. For clarity, we refer to these node embeddings collectively as $\mathbb{H}$.
The utility of these embeddings offers threefold benefits. 
Firstly, these embeddings are an intermediate product in the feature propagation process and incur no additional computational overhead. Secondly, they aggregate node features across various hops, exhibiting different levels of smoothness \cite{zhang2021node}. 
Lastly, different from data augmentation methods such as mixup, edge drop, or feature mask, these embeddings prevent the introduction of excessive noises during the condensation process. 

Subsequently, class prediction errors $\bf e$ are used as sampling weights to randomly select $p\%$ of the embeddings from $\mathbb{H}$, where $p$ serves as a hyper-parameter controlling the size of augmentation data. 
This sampling strategy prioritizes classes with higher errors, potentially enhancing the class representation quality. 

Consequently, the embeddings and labels of the sampled nodes are represented as $\mathbf{H}_{aug}$ and $\mathbf{y}_{aug}$, respectively, and the augmented data for the condensation process are defined as $\mathbf{H}_{cond} = [\mathbf{H}; \mathbf{H}_{aug}]$ and $\mathbf{y}_{cond} = [\mathbf{y}; \mathbf{y}_{aug}]$.

\subsection{Class Partition}
With enhanced label ${\bf{y}}_{cond}$, we perform class partition on ${\bf{H}}_{cond}$ to address the optimization problem in Eq. (\ref{eq_graphlessDM}).
Initially, nodes labeled in ${\bf{y}}_{cond}$ are categorised into $c$ classes $\{\hat C_1, \cdots, \hat C_c\}$ with corresponding embeddings $\{\hat{\bf{H}}_1, \cdots, \hat{\bf{H}}_c\}$.
Then, $\hat{\bf{H}}_i$ for class $i$ is partitioned into ${|C'_i|}$ sub-classes $\{S^i_1, ..., S^i_{|C'_i|}\}$ by applying any 
EM-based clustering method and the node mapping function is denoted by $\pi^i$.

Subsequently, rather than utilizing the class-wise aggregation matrix ${\bf{R}}^i$ as initially defined in Eq. (\ref{eq_agg}), we incorporate the confidence scores $\bf r$ for each aggregated node to calibrate the condensed node embeddings. Consequently, Eq. (\ref{eq_agg}) is updated as: 
\begin{equation}
\label{eq_agg2}
{{\bf{R}}}^{i}_{j,k} = \begin{cases}
\frac{{\bf r}_k}{\tau}   & {\rm{if}} \ {\pi^i}(k)=j \\
 0  &       \rm{otherwise}
\end{cases},
\end{equation}
where ${{\bf{R}}}^{i}_{j,k}$ denotes the aggregation weight for node $k$ in subclass $j$, and $\pi^i(k)$ indicates the subclass index for node $k$. ${\bf r}_k$ is the confidence score of node $k$, and $\tau$ represents the global temperature to control the sensitivity for confidence scores.
Following this, the row-normalized aggregation matrix ${\bf \hat{R}}^i=\text{norm}({\bf {R}}^i)$ is employed to produce aggregated embeddings as: ${\bf{H}}'_i=\hat{{\bf{R}}}^i\hat{\bf{H}}_i$, and the condensed node embeddings are constructed by: ${\bf{H}}'= [{\bf{H}}'_1; ...; {\bf{H}}'_c]$.

\begin{table*}[t]
\renewcommand{\arraystretch}{1.2}
\setlength{\abovecaptionskip}{1pt}
\center
\caption{{{The accuracy (\%) comparison between our methods (CGC and CGC-X) and baselines. OOM means out-of-memory. The {best} (bold) and {runner-up} (underlined) performances for $({\bf I ,X'})$ and $({\bf A',X'})$ are highlighted, respectively.}}}
\label{tab_acc}
\setlength{\tabcolsep}{3.5pt}
\resizebox{\textwidth}{!}{
\begin{tabular}{c|r|cccccc|ccccccccccc|c}
\Xhline{1.pt}
                                                                             & \multicolumn{1}{l|}{}                      & \multicolumn{6}{c|}{$({\bf I ,X'})$}                                                                   & \multicolumn{11}{c|}{$({\bf A',X'})$}                                                                                                                                                                                                    &                                                                           \\ \cline{3-19}
\multirow{-2}{*}{\begin{tabular}[c]{@{}l@{}}Dataset\end{tabular}} & \multicolumn{1}{c|}{\multirow{-2}{*}{$r$}} & GCond-X  & GCDM-X         & SNTK-X         & SFGC              & GEOM              & CGC-X             & VN       & A-CM     & GCond             & GCDM     & DosCond  & SGDD     & MSGC     & SNTK              & SimGC          & GCSR              & CGC               & \multirow{-2}{*}{\begin{tabular}[c]{@{}l@{}}Whole\\ Dataset\end{tabular}} \\ \hline
                                                                             & 1.30\%                                     & 75.9\scriptsize{±1.2} & 81.3\scriptsize{±0.4}       & {\ul 82.2\scriptsize{±0.3}} & 80.1\scriptsize{±0.4}          & 80.3\scriptsize{±1.1}          & \textbf{83.4\scriptsize{±0.3}} & 31.2\scriptsize{±0.2} & 74.6\scriptsize{±0.1} & 79.8\scriptsize{±1.3}          & 69.4\scriptsize{±1.3} & 80.5\scriptsize{±0.1} & 80.1\scriptsize{±0.7} & 80.0\scriptsize{±0.8} & {\ul 81.7\scriptsize{±0.7}}    & 80.8\scriptsize{±2.3}       & 79.9\scriptsize{±0.7}          & \textbf{82.7\scriptsize{±0.3}} &                                                                           \\
                                                                             & 2.60\%                                     & 75.7\scriptsize{±0.9} & 81.4\scriptsize{±0.1}       & {\ul 82.4\scriptsize{±0.5}} & 81.7\scriptsize{±0.5}          & 81.5\scriptsize{±0.8}          & \textbf{83.4\scriptsize{±0.4}} & 65.2\scriptsize{±0.6} & 72.8\scriptsize{±0.2} & 80.1\scriptsize{±0.6}          & 77.2\scriptsize{±0.4} & 80.1\scriptsize{±0.5} & 80.6\scriptsize{±0.8} & 81.0\scriptsize{±0.5} & {\ul 81.5\scriptsize{±0.7}}    & 80.9\scriptsize{±2.6}       & 80.6\scriptsize{±0.8}          & \textbf{82.3\scriptsize{±1.3}} &                                                                           \\
\multirow{-3}{*}{\begin{tabular}[c]{@{}l@{}}Cora\end{tabular}}     & 5.20\%                                     & 76.0\scriptsize{±0.9} & {\ul 82.5\scriptsize{±0.3}} & 82.1\scriptsize{±0.1}       & 81.6\scriptsize{±0.8}          & 82.2\scriptsize{±0.4}          & \textbf{82.8\scriptsize{±1.0}} & 70.6\scriptsize{±0.1} & 78.0\scriptsize{±0.3} & 79.3\scriptsize{±0.3}          & 79.4\scriptsize{±0.1} & 80.3\scriptsize{±0.4} & 80.4\scriptsize{±1.6} & 80.1\scriptsize{±0.4} & 81.3\scriptsize{±0.2}          & {\ul 82.1\scriptsize{±1.3}} & 81.2\scriptsize{±0.9}          & \textbf{82.5\scriptsize{±0.6}} & \multirow{-3}{*}{81.2\scriptsize{±0.2}}                                                \\ \hline
                                                                             & 0.90\%                                     & 71.4\scriptsize{±0.8} & 69.0\scriptsize{±0.5}       & 69.9\scriptsize{±0.4}       & {\ul 71.4\scriptsize{±0.5}}    & 71.1\scriptsize{±0.2}          & \textbf{72.1\scriptsize{±0.2}} & 52.2\scriptsize{±0.4} & 65.1\scriptsize{±0.1} & 70.5\scriptsize{±1.2}          & 62.0\scriptsize{±0.1} & 71.0\scriptsize{±0.2} & 69.5\scriptsize{±0.4} & 72.4\scriptsize{±0.5} & 66.4\scriptsize{±1.0}          & \textbf{ 73.8\scriptsize{±2.5}} & 70.2\scriptsize{±1.1}          & {\ul 72.5\scriptsize{±0.2}} &                                                                           \\
                                                                             & 1.80\%                                     & 69.8\scriptsize{±1.1} & 71.9\scriptsize{±0.5}       & 69.9\scriptsize{±0.5}       & {\ul 72.4\scriptsize{±0.4}}    & 71.3\scriptsize{±0.1}          & \textbf{72.6\scriptsize{±0.2}} & 59.0\scriptsize{±0.5} & 66.0\scriptsize{±0.2} & 70.6\scriptsize{±0.9}          & 69.5\scriptsize{±1.1} & 71.2\scriptsize{±0.2} & 70.2\scriptsize{±0.8} & 72.1\scriptsize{±0.3} & 68.4\scriptsize{±1.1}          & {\ul 72.2\scriptsize{±0.5}} & 71.7\scriptsize{±0.9}          & \textbf{72.4\scriptsize{±0.2}} &                                                                           \\
\multirow{-3}{*}{\begin{tabular}[c]{@{}l@{}}Citeseer\end{tabular}} & 3.60\%                                     & 69.4\scriptsize{±1.4} & \textbf{ 72.8\scriptsize{±0.6}} & 69.1\scriptsize{±0.4}       & 70.6\scriptsize{±0.7}          & {\ul 72.1\scriptsize{±1.0}}          & {71.4\scriptsize{±0.4}} & 65.3\scriptsize{±0.5} & 66.1\scriptsize{±0.2} & 69.8\scriptsize{±1.4}          & 69.8\scriptsize{±0.2} & 70.7\scriptsize{±0.1} & 70.3\scriptsize{±1.7} & 71.9\scriptsize{±0.9} & 69.8\scriptsize{±0.8}          & 71.1\scriptsize{±2.8}       & \textbf{74.0\scriptsize{±0.4}} & {\ul 72.0\scriptsize{±0.5}}    & \multirow{-3}{*}{71.7\scriptsize{±0.1}}                                                \\ \hline
                                                                             & 0.05\%                                     & 61.3\scriptsize{±0.5} & 61.0\scriptsize{±0.1}       & 63.9\scriptsize{±0.3}       & \textbf{ 65.5\scriptsize{±0.7}}    & {\ul 64.7\scriptsize{±0.4}}          & 64.0\scriptsize{±0.1} & 35.4\scriptsize{±0.3} & 58.0\scriptsize{±0.1} & 59.2\scriptsize{±1.1}          & 59.3\scriptsize{±0.3} & 62.1\scriptsize{±0.3} & 60.8\scriptsize{±1.3} & 61.5\scriptsize{±1.1} & \textbf{ 64.4\scriptsize{±0.2}}    & 63.6\scriptsize{±0.8}       & 60.6\scriptsize{±1.1}          & {\ul 64.1\scriptsize{±0.4}} &                                                                           \\
                                                                             & 0.25\%                                     & 64.2\scriptsize{±0.4} & 61.2\scriptsize{±0.1}       & 65.5\scriptsize{±0.1}       & 66.1\scriptsize{±0.4}          & \textbf{67.5\scriptsize{±0.3}} & {\ul 66.3\scriptsize{±0.3}}    & 43.5\scriptsize{±0.2} & 60.0\scriptsize{±0.3} & 63.2\scriptsize{±0.3}          & 59.6\scriptsize{±0.4} & 63.5\scriptsize{±0.1} & 65.8\scriptsize{±1.2} & 65.8\scriptsize{±0.3} & 65.1\scriptsize{±0.8}          & \textbf{66.4\scriptsize{±0.3}} & 65.4\scriptsize{±0.8}          & \textbf{66.4\scriptsize{±0.1}} &                                                                           \\
\multirow{-3}{*}{\begin{tabular}[c]{@{}l@{}}Arxiv\end{tabular}}    & 0.50\%                                     & 63.1\scriptsize{±0.5} & 62.5\scriptsize{±0.1}       & 65.7\scriptsize{±0.4}       & 66.8\scriptsize{±0.4}          & \textbf{67.6\scriptsize{±0.2}} & {\ul 67.0\scriptsize{±0.1}}    & 50.4\scriptsize{±0.1} & 61.0\scriptsize{±0.2} & 64.0\scriptsize{±0.4}          & 62.4\scriptsize{±0.1} & 63.7\scriptsize{±0.2} & 66.3\scriptsize{±0.7} & 66.0\scriptsize{±0.3} & 65.4\scriptsize{±0.5}          & {\ul 66.8\scriptsize{±0.4}} & 65.9\scriptsize{±0.6}          & \textbf{67.2\scriptsize{±0.4}} & \multirow{-3}{*}{71.4\scriptsize{±0.1}}                                                \\ \hline
                                                                             & 0.10\%                                     & 45.9\scriptsize{±0.1} & 46.0\scriptsize{±0.1}       & 46.6\scriptsize{±0.3}       & {\ul 46.6\scriptsize{±0.2}}    & 46.1\scriptsize{±0.5}          & \textbf{46.7\scriptsize{±0.2}} & 41.9\scriptsize{±0.2} & 42.2\scriptsize{±0.1} & 46.5\scriptsize{±0.4}          & 46.1\scriptsize{±0.1} & 46.0\scriptsize{±0.3} & 46.5\scriptsize{±0.1} & 46.2\scriptsize{±0.1} & \textbf{46.7\scriptsize{±0.1}} & 45.3\scriptsize{±0.7}       & 46.6\scriptsize{±0.3}          & \textbf{46.8\scriptsize{±0.0}} &                                                                           \\
                                                                             & 0.50\%                                     & 45.0\scriptsize{±0.2} & 45.6\scriptsize{±0.1}       & 46.7\scriptsize{±0.1}       & \textbf{47.0\scriptsize{±0.1}} & 46.2\scriptsize{±0.2}          & \textbf{47.0\scriptsize{±0.1}} & 44.5\scriptsize{±0.1} & 45.2\scriptsize{±0.3} & \textbf{47.1\scriptsize{±0.1}} & 46.8\scriptsize{±0.1} & 46.2\scriptsize{±0.2} & 46.4\scriptsize{±0.2} & 46.4\scriptsize{±0.1} & 46.8\scriptsize{±0.1}          & 45.6\scriptsize{±0.4}       & 46.6\scriptsize{±0.2}          & \textbf{47.1\scriptsize{±0.1}} &                                                                           \\
\multirow{-3}{*}{\begin{tabular}[c]{@{}l@{}}Flickr\end{tabular}}   & 1.00\%                                     & 45.0\scriptsize{±0.1} & 45.4\scriptsize{±0.3}       & 46.6\scriptsize{±0.2}       & \textbf{47.1\scriptsize{±0.1}} & 46.7\scriptsize{±0.1}          & {\ul 47.0\scriptsize{±0.1}}    & 44.6\scriptsize{±0.1} & 45.1\scriptsize{±0.1} & \textbf{47.1\scriptsize{±0.1}} & 46.7\scriptsize{±0.1} & 46.1\scriptsize{±0.1} & 46.3\scriptsize{±0.1} & 46.2\scriptsize{±0.1} & 46.5\scriptsize{±0.2}          & 43.8\scriptsize{±1.5}       & 46.8\scriptsize{±0.2}          & {\ul 47.0\scriptsize{±0.1}}    & \multirow{-3}{*}{47.2\scriptsize{±0.1}}                                                \\ \hline
                                                                             & 0.05\%                                     & 88.4\scriptsize{±0.4} & 86.5\scriptsize{±0.2}       & OOM            & 89.7\scriptsize{±0.2}          & {\ul 90.1\scriptsize{±0.2}}    & \textbf{90.3\scriptsize{±0.2}} & 40.9\scriptsize{±0.5} & 72.2\scriptsize{±1.2} & 88.0\scriptsize{±1.8}          & 89.3\scriptsize{±0.1} & 89.8\scriptsize{±0.1} & 90.3\scriptsize{±0.1} & 89.8\scriptsize{±0.2} & OOM               & 89.6\scriptsize{±0.6}       & {\ul 90.5\scriptsize{±0.2}}    & \textbf{90.6\scriptsize{±0.2}} &                                                                           \\
                                                                             & 0.10\%                                     & 89.3\scriptsize{±0.1} & 87.2\scriptsize{±0.1}       & OOM            & 90.0\scriptsize{±0.3}          & {\ul 90.4\scriptsize{±0.1}}    & \textbf{90.8\scriptsize{±0.0}} & 42.8\scriptsize{±0.8} & 73.5\scriptsize{±1.0} & 89.6\scriptsize{±0.7}          & 89.7\scriptsize{±0.2} & 90.5\scriptsize{±0.1} & 90.7\scriptsize{±0.1} & 89.2\scriptsize{±0.1} & OOM               & 90.6\scriptsize{±0.3}       & {\ul 91.2\scriptsize{±0.2}}    & \textbf{91.4\scriptsize{±0.1}} &                                                                           \\
\multirow{-3}{*}{\begin{tabular}[c]{@{}l@{}}Reddit\end{tabular}}   & 0.20\%                                     & 88.8\scriptsize{±0.4} & 88.8\scriptsize{±0.1}       & OOM            & 89.9\scriptsize{±0.4}          & {\ul 90.9\scriptsize{±0.1}}    & \textbf{91.4\scriptsize{±0.1}} & 47.4\scriptsize{±0.9} & 75.1\scriptsize{±1.3} & 90.1\scriptsize{±0.5}          & 90.2\scriptsize{±0.4} & 91.1\scriptsize{±0.1} & 91.3\scriptsize{±0.2} & 90.2\scriptsize{±0.1} & OOM               & 91.4\scriptsize{±0.2}       & \textbf{92.2\scriptsize{±0.1}} & {\ul 91.6\scriptsize{±0.2}}    & \multirow{-3}{*}{93.9\scriptsize{±0.0}}                                                \\ \hline
                                                                             & 0.025\%                                    & 64.5\scriptsize{±0.2} & 65.1\scriptsize{±0.1}       & OOM            & 66.2\scriptsize{±0.3}          & {\ul 67.7\scriptsize{±0.2}}    & \textbf{68.0\scriptsize{±0.0}} & 34.3\scriptsize{±0.8} & 58.8\scriptsize{±0.9} & 64.2\scriptsize{±0.1}          & 66.1\scriptsize{±0.1} & 62.3\scriptsize{±0.3} & 64.5\scriptsize{±0.1} & 64.9\scriptsize{±0.1} & OOM               & 63.7\scriptsize{±1.1}       & {\ul 66.5\scriptsize{±0.2}}    & \textbf{68.0\scriptsize{±0.1}} &                                                                           \\
                                                                             & 0.050\%                                    & 65.2\scriptsize{±0.3} & 66.8\scriptsize{±0.2}       & OOM            & 67.0\scriptsize{±0.2}          & {\ul 68.4\scriptsize{±0.3}}    & \textbf{68.9\scriptsize{±0.2}} & 35.1\scriptsize{±0.9} & 60.1\scriptsize{±0.6} & 64.7\scriptsize{±0.2}          & 67.4\scriptsize{±0.4} & 63.6\scriptsize{±0.1} & 64.9\scriptsize{±0.1} & 65.0\scriptsize{±0.3} & OOM               & 64.9\scriptsize{±1.2}       & {\ul 67.8\scriptsize{±0.3}}    & \textbf{68.9\scriptsize{±0.3}} &                                                                           \\
\multirow{-3}{*}{\begin{tabular}[c]{@{}l@{}}Products\end{tabular}} & 0.100\%                                    & 65.5\scriptsize{±0.2} & 67.2\scriptsize{±0.1}       & OOM            & {\ul 68.8\scriptsize{±0.3}}    & 68.7\scriptsize{±0.5}          & \textbf{69.0\scriptsize{±0.1}} & 37.4\scriptsize{±0.9} & 62.4\scriptsize{±0.9} & 65.0\scriptsize{±0.1}          & 68.4\scriptsize{±0.3} & 65.8\scriptsize{±0.2} & 64.7\scriptsize{±0.1} & 65.1\scriptsize{±0.2} & OOM               & 65.2\scriptsize{±1.4}       & {\ul 68.5\scriptsize{±0.3}}    & \textbf{69.1\scriptsize{±0.2}} & \multirow{-3}{*}{73.1\scriptsize{±0.0}}                                                \\   \Xhline{1.pt}
\end{tabular}
}
\end{table*}

\subsection{Graph Generation}
We follow existing GC methods to provide two parameterization methods for the condensed graph, including CGC with the graph generation and its graphless variant CGC-X. 

According to Eq. (\ref{eq_SGC}), we expect a symmetric encoding procedure between the original and condensed graphs. Therefore, our objective is to construct $\bf{A}'$ and ${\bf X}'$ satisfying ${\hat{\bf{A}}}'^{K}{\bf{X}'} = {\bf{H}'}$. 
To this end, we utilize the pre-defined graph structure and calculate ${\bf X}'$ in a close-formed solution. 

Specifically, we construct the condensed graph structure according to the condensed node embeddings~\cite{zhu2021deep} as follows:
\begin{equation}
\label{eq_adj}
{\bf{A}}_{i, j}' = \begin{cases}
 1 & \text{ if } \text{cos}({\bf{H}}_i', {\bf{H}}_j')> T \\
 0 & \text{otherwise}
\end{cases},
\end{equation}
where $\text{cos}(\cdot, \cdot)$ measures the cosine similarity and $T$ is the hyper-parameter for graph sparsification. 
To ensure that generated features change smoothly between connected nodes, we introduce the Dirichlet energy constraint~\cite{kalofolias2016learn} in feature reconstruction loss and quantify the smoothness of graph signals. The optimization objective for ${\bf X}'$ is formulated as:
\begin{equation}
\label{eq_feat}
\begin{aligned}
\mathcal{L} = \arg\min_{{\bf{X}}'}\left \| {\hat{\bf{A}}}'^{K}{\bf{X}}'- {\bf{H}}'\right \|^2+ \alpha\rm{tr}({\bf{X}}'^{\top}{\bf{L}}'{\bf{X}}'),
\end{aligned}
\end{equation}
where $\alpha$ balances the losses, and $\rm{tr}(\cdot)$ denotes the matrix trace. $\textbf{L}'= \textbf{D}' - \textbf{A}'$ is the Laplacian matrix, where $\textbf{D}'$ is the degree matrix.
Consequently, the closed-form solution for Eq. (\ref{eq_feat}) is presented as: 

\begin{proposition}
\label{prop4}
Assume an undirected condensed graph $\mathcal{S}=\{{\bf A'}, {\bf X'}\}$, the closed-form solution of Eq. (\ref{eq_feat}) takes the form: ${\bf X}'=({\bf Q}^{\top}{\bf Q}+\alpha {\bf L}')^{-1}{\bf Q}^{\top}{\bf H}'$, where ${\bf Q} = \hat{\bf{A}}'^{K}$.
\end{proposition}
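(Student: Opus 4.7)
The plan is to recognize that Eq. (\ref{eq_feat}) is a smooth, strictly convex quadratic objective in ${\bf X}'$ (a sum of a squared Frobenius norm and a nonnegative quadratic form through the Laplacian), so first-order optimality is both necessary and sufficient. I would begin by substituting ${\bf Q}=\hat{\bf A}'^{K}$ to rewrite the first term as $\|{\bf Q}{\bf X}'-{\bf H}'\|^{2}=\mathrm{tr}\bigl(({\bf Q}{\bf X}'-{\bf H}')^{\top}({\bf Q}{\bf X}'-{\bf H}')\bigr)$, and expand it into three trace terms, the last of which is independent of ${\bf X}'$ and may be discarded for optimization.

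Next, I would differentiate each summand using standard matrix calculus. For the data-fidelity part, the gradient of $\mathrm{tr}({\bf X}'^{\top}{\bf Q}^{\top}{\bf Q}{\bf X}')$ is $2{\bf Q}^{\top}{\bf Q}{\bf X}'$ and the gradient of $-2\,\mathrm{tr}({\bf X}'^{\top}{\bf Q}^{\top}{\bf H}')$ is $-2{\bf Q}^{\top}{\bf H}'$. For the regularizer, the undirected-graph hypothesis makes ${\bf L}'$ symmetric, so $\nabla_{{\bf X}'}\mathrm{tr}({\bf X}'^{\top}{\bf L}'{\bf X}')=2{\bf L}'{\bf X}'$. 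Summing and dividing by two gives the normal equation
\begin{equation*}
({\bf Q}^{\top}{\bf Q}+\alpha{\bf L}'){\bf X}'={\bf Q}^{\top}{\bf H}',
\end{equation*}
from which the stated formula follows by left-multiplication with $({\bf Q}^{\top}{\bf Q}+\alpha{\bf L}')^{-1}$.

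The main obstacle is justifying that $({\bf Q}^{\top}{\bf Q}+\alpha{\bf L}')$ is actually invertible. Both ${\bf Q}^{\top}{\bf Q}$ and ${\bf L}'$ are positive semidefinite, so their sum is PSD and the objective is convex (guaranteeing that any critical point is a global minimizer). Invertibility then reduces to checking that $\ker({\bf Q}^{\top}{\bf Q})\cap\ker({\bf L}')=\{0\}$: since ${\bf L}'$ has null space spanned by the constant vector and ${\bf Q}=\hat{\bf A}'^{K}$ is typically full-rank for the normalized adjacency with self-loops, this intersection is trivial. I would state this as a mild regularity condition (or simply assume $\alpha>0$ together with ${\bf Q}$ being non-singular on non-constant directions) so that the inverse in the closed form is well-defined. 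Convexity finally certifies that the unique stationary point is the global minimizer, completing the proof.
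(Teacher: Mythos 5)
Your proof follows essentially the same route as the paper's: differentiate the quadratic objective, observe that the Hessian $2({\bf Q}^{\top}{\bf Q}+\alpha {\bf L}')$ is positive semidefinite so the problem is convex, and solve the normal equation $({\bf Q}^{\top}{\bf Q}+\alpha{\bf L}'){\bf X}'={\bf Q}^{\top}{\bf H}'$. Your explicit discussion of when $({\bf Q}^{\top}{\bf Q}+\alpha{\bf L}')$ is invertible is a welcome extra degree of care that the paper's proof omits, but it does not change the argument.
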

The proof is deferred to the Appendix \ref{app_pro4}.
Although the closed-form solution involves an inverse operation, the target matrix is small (i.e., $N'$-by-$N'$) and can be efficiently calculated.

\noindent\textbf{A Graphless Variant.}
Based on Remark \ref{remark2}, the node embedding ${\bf{H}}'$ equivalents to the condensed node feature ${\bf{X}}'$ when utilizing ${\bf I}$ as the condensed graph structure and employing non-parametric feature propagation for graph encoding. 
Consequently, ${\bf{H}}'$ derived in the Class Partition module can directly serve as the condensed graph for CGC-X, i.e., $\mathcal{S}=\{{\bf I}, {\bf X'}\} = \{{\bf I}, {\bf H'}\} $.

\noindent\textbf{Comparison with Coarsening Methods.}
Similar to our simplified GC objective in Eq. (\ref{eq_graphlessDM}), coarsening methods develop the aggregation matrix to merge original nodes into super-nodes and transform the original graph structure into a smaller graph. 
However, these methods are implemented within an unsupervised paradigm, prioritizing the preservation of graph properties such as spectral~\cite{loukas2018spectrally} and cut~\cite{loukas2019graph} guarantees while disregarding label information~\cite{hashemi2024comprehensive}.
In contrast, our CGC framework synthesizes nodes and connections under a supervised paradigm, thereby enhancing the utility of downstream tasks.

The detailed {\bf{algorithm}} and {\bf{complexity analysis}} can be found in Appendix \ref{sec_appalg} and \ref{sec_apptime}, respectively.

\section{Experiments}
\label{sec_exp}

We design comprehensive experiments to validate the efficacy of our proposed methods and explore the following research questions:
\textbf{Q1}: Compared to the other graph reduction methods, can the condensed graph generated by CGC and CGC-X achieve better GNN performance?
\textbf{Q2}: Can the CGC and CGC-X condense the graph faster than other GC approaches?
\textbf{Q3}: Can the condensed graph generated by CGC and CGC-X generalize well to different GNN architectures? 
\textbf{Q4}: How do the different components, i.e., {{feature propagation}}, data augmentation, data assessment and class partition methods affect CGC and CGC-X?
\textbf{Q5}: How do the different hyper-parameters affect the CGC and CGC-X?

\subsection{Experimental Setup}
{\bf Datasets \& Baselines.}
We evaluate our proposed methods on four transductive datasets (Cora, Citeseer~\cite{DBLP:conf/iclr/KipfW17}, Ogbn-arxiv (Arxiv)~\cite{hu2020open} and Ogbn-products (Products)~\cite{hu2020open}), as well as two inductive datasets (Flickr and Reddit~\cite{DBLP:conf/iclr/ZengZSKP20}), all with public splits.
We compare {{15}} baselines, encompassing both graph coarsening and graph condensation methods with diverse optimization strategies:
(1) graph coarsening methods: Variation Neighborhoods (VN)~\cite{huang2021scaling,loukas2019graph} and A-ConvMatch (A-CM)~\cite{dickens2024graph};
(2) {{gradient matching-based GC methods: GCond, GCond-X~\cite{jin2022graph}, DosCond~\cite{jin2022condensing}, SGDD~\cite{yang_does_2023} and MSGC~\cite{gao2023multiple}; }}
(3) trajectory matching-based GC methods: SFGC~\cite{zheng_structure_free_2023}, GEOM~\cite{zhang2024navigating} and GCSR~\cite{liu2024graph}; 
(4) KRR-based GC methods: SNTK and SNTK-X~\cite{wang2023fast}; 
(5) distribution matching-based GC methods: GCDM, GCDM-X~\cite{liu2022graph} and SimGC~\cite{xiao2024simple}.
Notice that the suffix ``-X'' represents the graphless variant.
More details about the datasets and baselines are provided in Appendix \ref{sec_data} and \ref{sec_baseline}, respectively.

\begin{table*}[t]
\renewcommand{\arraystretch}{1.2}
\setlength{\abovecaptionskip}{1pt}
\center
\caption{{{The condensation time (seconds) comparison of different graph reduction methods. OOM denotes out-of-memory. $r$ is set as 2.60\%, 1.80\%, 0.25\%, 0.50\%, 0.10\% and 0.05\% for six datasets, respectively.}}}
\label{tab_time}
\resizebox{\textwidth}{!}{
\begin{tabular}{l|rrrrrrrrr rrrrrrrr}
\Xhline{1.pt}
Dataset  & \multicolumn{1}{l}{VN} & \multicolumn{1}{l}{A-CM} & \multicolumn{1}{l}{GCond} & \multicolumn{1}{l}{GCond-X} & \multicolumn{1}{l}{GCDM} & \multicolumn{1}{l}{GCDM-X} & \multicolumn{1}{l}{DosCond} & \multicolumn{1}{l}{SGDD} & \multicolumn{1}{l}{MSGC} & \multicolumn{1}{l}{SNTK} & \multicolumn{1}{l}{SNTK-X} & \multicolumn{1}{l}{SimGC} & \multicolumn{1}{l}{SFGC} & \multicolumn{1}{l}{GEOM} & \multicolumn{1}{l}{GCSR} & \multicolumn{1}{l}{CGC} & \multicolumn{1}{l}{CGC-X} \\ \hline
Cora     & 2.7                    & 7.6                             & 498.9                     & 80.2                        & 27.3                     & 24.0                       & 383.7                                               & 868.3                                            & 812.3                                            & 30.8                     & 20.3                       & 210.6                     & 2,524.4                  & 3,302.2                  & 850.4                    & 1.4                     & 0.4                       \\
Citeseer & 3.5                    & 141.4                           & 479.3                     & 70.6                        & 62.0                     & 51.0                       & 300.6                                               & 868.0                                            & 200.1                                            & 19.4                     & 18.7                       & 227.6                     & 3,877.0                  & 4,230.3                  & 488.6                    & 1.5                     & 0.8                       \\
Arxiv    & 412.6                  & 179.9                           & 14,876.0                  & 13,516.7                    & 690.0                    & 469.0                      & 12,483.0                                            & 22,155.8                                         & 14,723.1                                         & 9,353.6                  & 9,079.9                    & 252.4                     & 86,288.5                 & 108,962.5                & 5,283.5                  & 8.8                     & 7.9                       \\
Flickr   & 243.8                  & 160.1                           & 1,338.3                   & 1,054.7                     & 209.5                    & 86.2                       & 1,060.3                                             & 16,561.4                                         & 819.0                                            & 562.6                    & 510.0                      & 345.8                     & 52,513.0                 & 54,601.9                 & 1,202.6                  & 7.5                     & 6.8                       \\
Reddit   & 464.1                  & 192.1                           & 21,426.5                  & 22,154.3                    & 921.9                    & 534.8                      & 18,741.6                                            & 53,192.1                                         & 7,128.6                                          & \multicolumn{1}{r}{OOM}  & \multicolumn{1}{r}{OOM}    & 475.8                     & 246,997.1                & 248,837.4                & 1,747.3                  & 18.5                    & 17.0                        \\
Products & 59,074.1               & 6,254.6                         & 26,789.3                  & 24,145.4                    & 7,897.9                  & 5,790.6                    & 25,496.3                                            & 28,843.9                                         & 12,142.3                                         & \multicolumn{1}{r}{OOM}  & \multicolumn{1}{r}{OOM}    & 3,824.3                   & 283,068.2                & 284,941.5                & 18,901.30                & 31.5                    & 25.8                      \\ \Xhline{1.pt}
\end{tabular}
}
\end{table*}

\begin{figure*}[t]
\centering
\begin{minipage}[t]{0.33\linewidth}
\centering
\includegraphics[width=\linewidth]{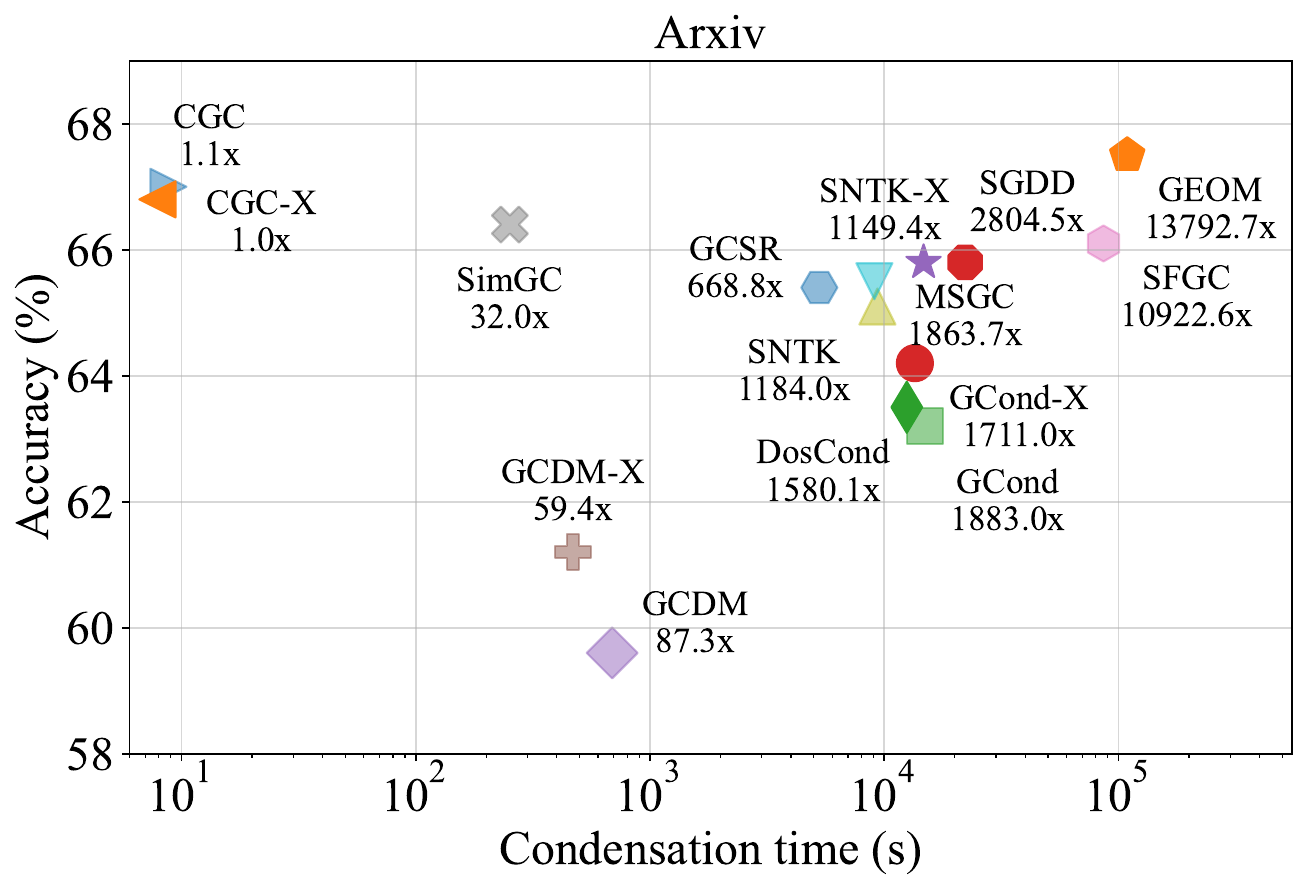}
\end{minipage}
\hspace{12mm}
\begin{minipage}[t]{0.33\linewidth}
\centering
\includegraphics[width=\linewidth]{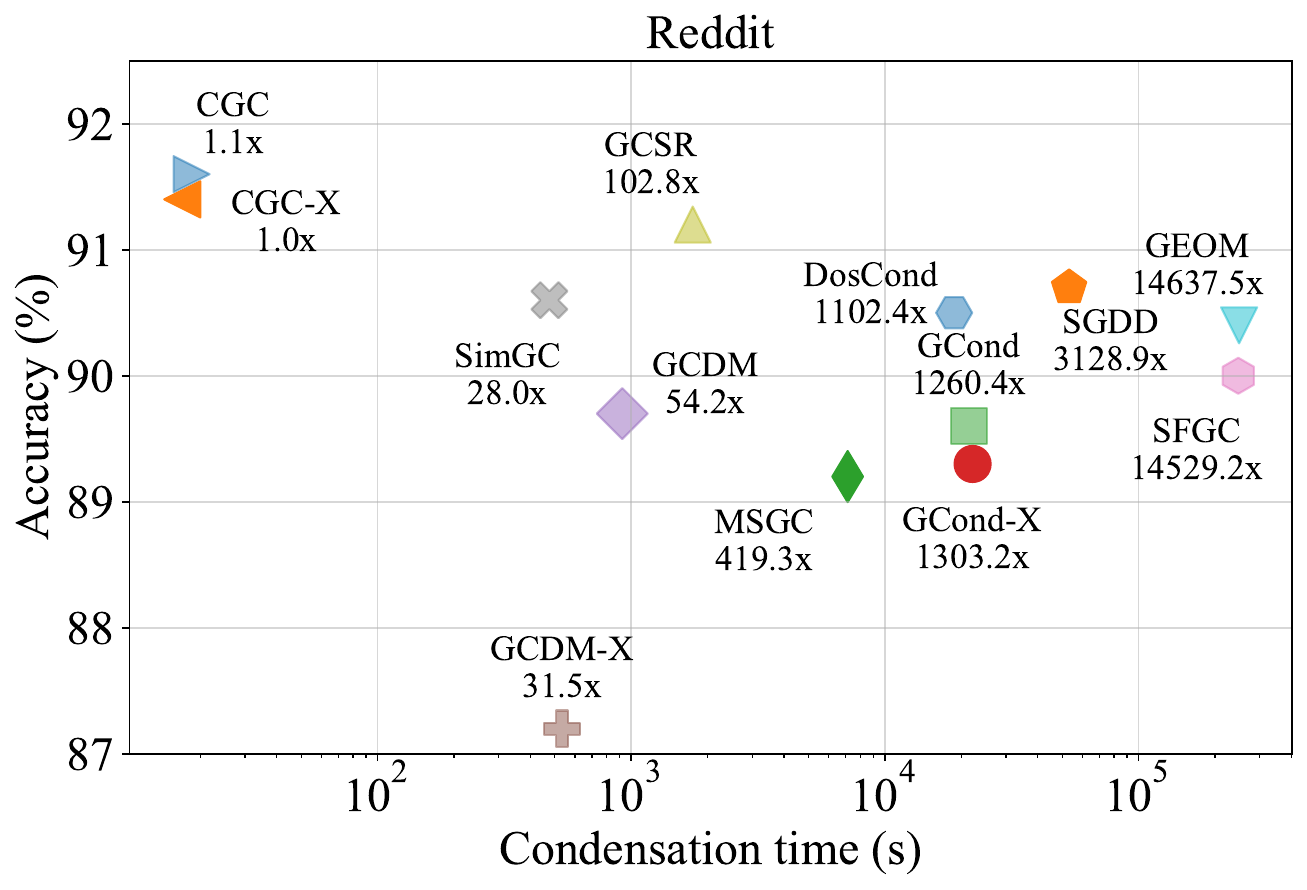}
\end{minipage}
\setlength{\abovecaptionskip}{1pt}
\caption{{{The accuracy and condensation time comparison of GC methods on Arxiv ($r=0.25\%$) and Reddit ($r=0.10\%$). SNTK and SNTK-X are out-of-memory on Reddit dataset.}}}
\label{fig_time}
\end{figure*}

\noindent{\bf Implementations.}
Following GCond~\cite{jin2022graph}, we evaluate three condensation ratios ($r=N'/N$) for each dataset.
In the transductive setting, $N$ represents the original graph size, while in the inductive setting, $N$ indicates the sub-graph size observed in the training stage.
Two-layer GNNs with 256 hidden units are used for evaluation.
We adopt the propagation method in SGC for feature propagation and spectral clustering~\cite{fettal2023scalable} with acceleration implementation (i.e., FAISS~\cite{douze2024faiss}) for class partition (refer to Section \ref{sec_ablation} for results of alternative class partition methods).
For reproducibility, other detailed implementations, hyper-parameters and computing infrastructure are summarized in Appendix \ref{sec_imple}.

\subsection{Effectiveness Comparison (Q1)}

For the sake of fairness, we compare CGC-X and CGC with graphless and graph-generated GC baselines separately. 
The condensed graphs generated by GC methods are evaluated to train a 2-layer GCN and the test accuracies with standard deviation are reported in Table \ref{tab_acc}.
In the table, ``Whole Dataset'' refers to the GCN performance which is trained on the original graph and we make the following observations.
Firstly, CGC-X and CGC consistently outperform other baselines across all datasets. While GCSR achieves the best performance on Citeseer and Reddit under the largest condensation ratio, the performance gap between CGC and GCSR remains small. Moreover, CGC significantly outperforms GCSR across other condensation ratios. Notably, on the Arxiv dataset, CGC demonstrates substantial improvement over GCSR, highlighting the superiority of our proposed method.
Furthermore, our proposed method can effectively mitigate the label sparsity issue in GC. On two datasets with sparse labels, i.e., Cora and Citeseer, CGC and CGC-X consistently achieve superior performances. This is contributed to the data augmentation module which can introduce reliable nodes for precise class distribution representation in the GC procedure.

\subsection{Efficiency Comparison (Q2)}
\label{sec_time}

We now report the condensation time of the proposed method and baselines on different datasets. For each method, we repeated the experiments 5 times and the averaged condensation time is reported in Table \ref{tab_time}. 
Instead of adopting the Siamese network architecture as other existing GC methods, CGC and CGC-X eliminate the gradient-based optimization and achieve extremely efficient condensation procedures. All datasets can be condensed within 1 minute, which is multiple orders of magnitude improvement compared to other GC methods.
In contrast, graph coarsening methods use a hierarchical node aggregation paradigm to iteratively reduce the graph size, which leads to high latency on large-scale datasets.
To facilitate a clearer comparison of the methods, Figure \ref{fig_time} shows the accuracy of GC methods over their condensation time on the transductive dataset (Arxiv) and inductive dataset (Reddit). The relative condensation time to the fastest CGC-X is marked. 
Our proposed CGC achieves the highest test accuracy, and CGC-X is 32.0$\times$ and 28.0$\times$ faster than the most efficient baseline SimGC.

\subsection{Generalizability Comparison (Q3)}

\begin{table}[t]
\renewcommand{\arraystretch}{1.2}
\setlength{\abovecaptionskip}{1pt}
\center
\caption{{{The generalizability of GC methods with graph generation on Arxiv ($r=0.25\%$). AVG indicates the average value.}}}
\label{tab_gene}
\resizebox{0.95\linewidth}{!}{
\begin{tabular}{l|rrrrrr|r}
    \Xhline{1.pt}
 Method                          & \multicolumn{1}{l}{SGC}      & \multicolumn{1}{l}{GCN}      & \multicolumn{1}{l}{SAGE}     & \multicolumn{1}{l}{APPNP}    & \multicolumn{1}{l}{Cheby}    & \multicolumn{1}{l|}{GAT}     & \multicolumn{1}{l}{AVG}      \\ \hline
                                                                             GCond                           & 63.7                         & 63.2                         & 62.6                         & 63.4                         & 54.9                         & 60.0                         & 61.3                         \\
                                                                             GCDM                            & 61.2                         & 59.6                         & 61.1                         & 62.8                         & 55.4                         & 61.2                         & 60.2                         \\
                                                                             DosCond & 63.3 & 63.5 & 62.1 & 63.5 & 55.1 & 60.4 & 61.3 \\
                                                                             SGDD    & 64.3 & 65.8 & 63.4 & 63.3 & 55.9 & 61.4 & 62.4 \\
                                                                             MSGC    & 64.6 & 65.8 & 60.2 & 62.2 & 55.1 & 61.2 & 61.5 \\
                                                                             SNTK                            & 62.7                         & 65.1                         & 62.9                         & 62.6                         & 55.1                         & 61.8                         & 61.7                         \\
                                                                             SimGC                           & 64.3                         & \textbf{66.4}                         & 60.4                         & 61.5                         & 54.7                         & 61.1                         & 61.4                         \\
                                                                             GCSR                            & \textbf{65.6}                & 65.4                         & 65.4                         & \textbf{64.4}                & 58.9                         & 63.5                         & 63.9                         \\
CGC                             & 64.9                         & \textbf{66.4}                & \textbf{65.7}                & 63.7                         & \textbf{60.5}                & \textbf{65.0}                & \textbf{64.4}                \\      \Xhline{1.pt}
\end{tabular}
}
\end{table}

To compare the generalizability across different GNN architectures, we assess the performance of GC methods under different GNN models, including GCN, SGC, SAGE~\cite{hamilton2017inductive}, APPNP~\cite{gasteiger_predict_2019}, Cheby~\cite{defferrard2016convolutional} and GAT~\cite{DBLP:conf/iclr/VelickovicCCRLB18}. The detailed accuracies of graph-generated GC methods on Arxiv dataset are shown in Table \ref{tab_gene}.
The results on other datasets and graphless GC methods can be found in Appendix \ref{sec_appgenedata} and \ref{sec_appgene}, respectively. 
{{We could observe that GCN and SGC outperform other GNNs because they employ the same propagation method as the feature propagation in the condensation procedure.}}
In addition, CGC achieves a significant improvement over other compared baselines.
This indicates the effectiveness of the pre-defined graph structure in our proposed method, which captures the relationship among aggregated features and encourages smoothness among connected condensed nodes.

\begin{table}[t]
    \renewcommand{\arraystretch}{1.2}
    \setlength{\abovecaptionskip}{1pt}
    \center
    \caption{{{The ablation study of CGC-X. $r$ is set as 2.60\%, 1.80\%, 0.25\%, 0.50\% and 0.10\% for five datasets, respectively.}}}
    \label{tab_abl}
    \resizebox{\linewidth}{!}{
    \begin{tabular}{l|cc ccc}
    \Xhline{1.pt}
    Method                              & Cora              & Citeseer          & Arxiv                            & Flickr                           & Reddit                           \\ \hline
    SimDM       & 80.1\scriptsize{±0.7}          & 70.9\scriptsize{±0.6}          & 65.1\scriptsize{±0.4} & 45.8\scriptsize{±0.2} & 90.6\scriptsize{±0.1} \\ \hline
    
    CGC-X w/o AUG                       & 80.5\scriptsize{±0.5}          & 71.1\scriptsize{±0.3}          & 65.3\scriptsize{±0.1}                         & 46.6\scriptsize{±0.2}                         & 90.7\scriptsize{±0.1}                         \\
    
    CGC-X w/o CAL                       & 81.5\scriptsize{±0.4}          & 71.5\scriptsize{±0.4}          & 65.8\scriptsize{±0.2}                         & 46.2\scriptsize{±0.2}                         & 90.6\scriptsize{±0.1}                         \\
    
    CGC-X w K-means                      & 83.1\scriptsize{±0.7}          & 72.2\scriptsize{±0.3}          & 66.4\scriptsize{±0.1}                         & 46.9\scriptsize{±0.1}                         & 90.7\scriptsize{±0.1}                         \\
    
    CGC-X w RAN & 79.7\scriptsize{±0.5}          & 70.0\scriptsize{±0.1}          & 62.0\scriptsize{±0.1} & 45.5\scriptsize{±0.2} & 88.9\scriptsize{±0.1} \\ \hline
    
    CGC-X                               & \textbf{83.4\scriptsize{±0.4}} & \textbf{72.6\scriptsize{±0.2}} & \textbf{66.3\scriptsize{±0.1}}                & \textbf{47.0\scriptsize{±0.1}}                & \textbf{90.8\scriptsize{±0.0}}                \\ \Xhline{1.pt}
\end{tabular}}
\end{table}

\subsection{Ablation Study (Q4)}
\label{sec_ablation}

\noindent{\bf{Data Augmentation.}} 
To validate the impact of data augmentation, CGC and CGC-X are evaluated by disabling the augmentation component (termed ``w/o AUG'') and results are shown in Table \ref{tab_abl} and Appendix \ref{sec_appablation}, respectively. {{Due to the data augmentation, our proposed methods can introduce more labeled nodes in GC and facilitate a more precise class representation, leading to significant improvement on label-sparse datasets.}}

\noindent{\bf{Data Assessment.}} 
We compare the results of CGC and CGC-X with those obtained by removing the calibration of the condensed node embeddings (referred to as "w/o CAL"). When the class-wise aggregation matrix is replaced by Eq. (\ref{eq_agg}), the confidence score for each node is disregarded, leading to equal aggregation of all nodes within the clusters. This modification results in a performance drop across all datasets and verifies the effectiveness of our data assessment module.

\noindent{\bf{Partition Method.}} {{In addition to spectral clustering, we also tested K-means (``w K-means") and random partition (``w RAN"), with results presented in Table \ref{tab_abl} and Appendix \ref{sec_appablation}. The similar performance levels of K-means indicate that our proposed method is insensitive to the choice of partition method. However, random partition fails to achieve the optimal solution, highlighting the effectiveness of our optimization objective in Eq. (\ref{eq_graphlessDM}).}}

{{For the impact of propagation method and propagation depth, please refer to Appendix \ref{sec_apppropmethod} and \ref{sec_apppropdep} for more details.}}

\subsection{Hyper-parameter Sensitivity Analysis (Q5)}
\label{sec_apphyper}
Due to the training-free nature of our proposed methods, the number of hyper-parameters is significantly reduced. CGC contains four hyper-parameters: the constraint weight $\alpha$, adjacency matrix threshold $T$, temperature $\tau$, and augmentation ratio $p$. Since $T$ and $\alpha$ are specific to the condensed graph generation, CGC-X contains only two hyper-parameters, making it the method with the fewest hyper-parameters among existing graph condensation approaches~\cite{xu2024survey}.
We examine the impact of these hyper-parameters on our method's performance, with the results presented in Figure \ref{fig_hyper}. A higher $T$ generally improves performance, indicating that a sparser adjacency matrix enhances node representation, in line with findings from~\cite{jin2022graph}. The optimal value for $\alpha$ should be selected to balance node smoothness with feature reconstruction.
Additionally, $\tau$ controls the contribution of nodes in the aggregation, and more complex datasets like Flickr benefit from a smaller $\tau$, which emphasizes the reliability of node scores. The augmentation ratio $p$ determines the number of augmented nodes, with increased augmentations leading to better results. However, an excessive number of augmented nodes can degrade performance and slow down the condensation process.
It is worth noting that the rapid condensation process of our method could significantly simplify hyper-parameter tuning, emphasizing the practical utility and superior effectiveness of our proposed method.

\begin{figure}[t]
\setlength{\abovecaptionskip}{1pt}
\centering
\begin{minipage}[t]{0.95\linewidth}
\centering
\includegraphics[width=\linewidth]{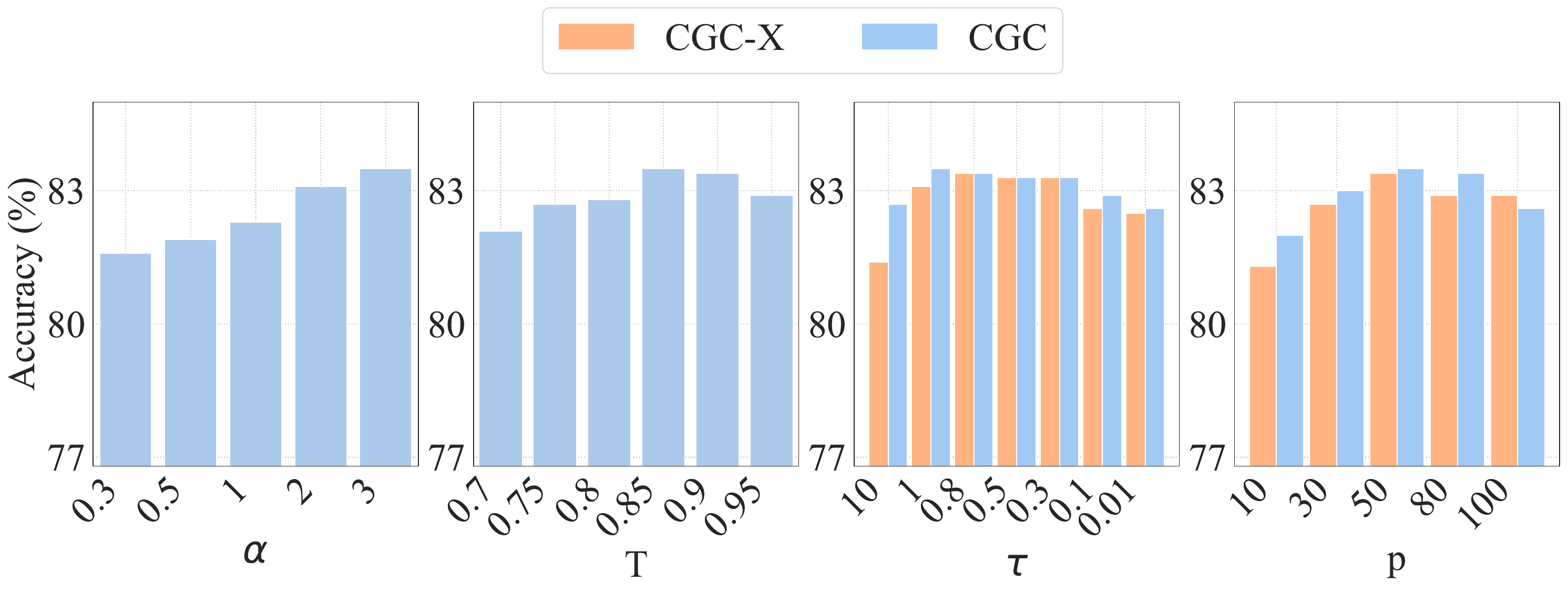}
\end{minipage}
\begin{minipage}[t]{0.95\linewidth}
\centering
\includegraphics[width=\linewidth]{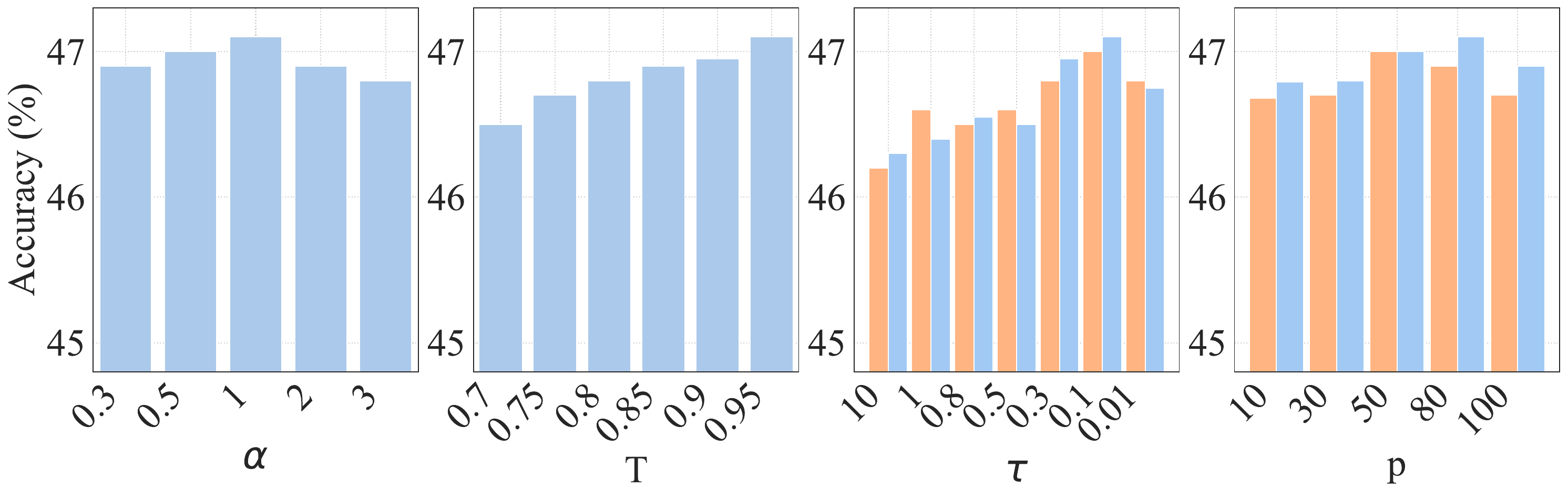}
\end{minipage}
\caption{Test accuracy across varying hyper-parameters: the first row shows results for Cora ($r=2.60\%$), and the second row for Flickr ($r=0.50\%$).}
\label{fig_hyper}
\end{figure}
\section{Conclusion}
\label{sec_conclusion}
In this paper, we present CGC, a training-free GC framework designed for efficient condensed graph generalization. 
CGC transforms the class-level distribution matching paradigm identified in existing GC methods into a class partition problem, enabling the EM-based clustering solution for complex condensation optimization. 
CGC incorporates the pre-defined graph structure and closed-form feature solution, facilitating efficient condensed graph generation.
{{It is worth noting that the simplified paradigm in CGC-X also bridges GC and graph pooling methods, inspiring future work to integrate advanced pooling techniques for improved performances.}}

\begin{acks}
This work is supported by Australian Research Council under the streams of Future Fellowship (Grant No. FT210100624), Linkage Project (Grant No. LP230200892), Discovery Early Career Researcher Award (Grants No. DE230101033), and Discovery Project (Grants No. DP240101108 and No. DP240101814).
\end{acks}

\bibliographystyle{ACM-Reference-Format}
\balance
\bibliography{ref}


\begin{thebibliography}{74}


\ifx \showCODEN    \undefined \def \showCODEN     #1{\unskip}     \fi
\ifx \showDOI      \undefined \def \showDOI       #1{#1}\fi
\ifx \showISBNx    \undefined \def \showISBNx     #1{\unskip}     \fi
\ifx \showISBNxiii \undefined \def \showISBNxiii  #1{\unskip}     \fi
\ifx \showISSN     \undefined \def \showISSN      #1{\unskip}     \fi
\ifx \showLCCN     \undefined \def \showLCCN      #1{\unskip}     \fi
\ifx \shownote     \undefined \def \shownote      #1{#1}          \fi
\ifx \showarticletitle \undefined \def \showarticletitle #1{#1}   \fi
\ifx \showURL      \undefined \def \showURL       {\relax}        \fi
\providecommand\bibfield[2]{#2}
\providecommand\bibinfo[2]{#2}
\providecommand\natexlab[1]{#1}
\providecommand\showeprint[2][]{arXiv:#2}

\bibitem[Bravo~Hermsdorff and Gunderson(2019)]%
        {bravo2019unifying}
\bibfield{author}{\bibinfo{person}{Gecia Bravo~Hermsdorff} {and} \bibinfo{person}{Lee Gunderson}.} \bibinfo{year}{2019}\natexlab{}.
\newblock \showarticletitle{A unifying framework for spectrum-preserving graph sparsification and coarsening}.
\newblock \bibinfo{journal}{\emph{Advances in Neural Information Processing Systems}}  \bibinfo{volume}{32} (\bibinfo{year}{2019}).
\newblock


\bibitem[Cai et~al\mbox{.}(2021)]%
        {cai2021graph}
\bibfield{author}{\bibinfo{person}{Chen Cai}, \bibinfo{person}{Dingkang Wang}, {and} \bibinfo{person}{Yusu Wang}.} \bibinfo{year}{2021}\natexlab{}.
\newblock \showarticletitle{Graph coarsening with neural networks}.
\newblock \bibinfo{journal}{\emph{arXiv preprint arXiv:2102.01350}} (\bibinfo{year}{2021}).
\newblock


\bibitem[Cazenavette et~al\mbox{.}(2022)]%
        {cazenavette2022dataset}
\bibfield{author}{\bibinfo{person}{George Cazenavette}, \bibinfo{person}{Tongzhou Wang}, \bibinfo{person}{Antonio Torralba}, \bibinfo{person}{Alexei~A Efros}, {and} \bibinfo{person}{Jun-Yan Zhu}.} \bibinfo{year}{2022}\natexlab{}.
\newblock \showarticletitle{Dataset distillation by matching training trajectories}. In \bibinfo{booktitle}{\emph{Proceedings of the IEEE/CVF Conference on Computer Vision and Pattern Recognition}}. \bibinfo{pages}{4750--4759}.
\newblock


\bibitem[Chen et~al\mbox{.}(2021)]%
        {chen2021unified}
\bibfield{author}{\bibinfo{person}{Tianlong Chen}, \bibinfo{person}{Yongduo Sui}, \bibinfo{person}{Xuxi Chen}, \bibinfo{person}{Aston Zhang}, {and} \bibinfo{person}{Zhangyang Wang}.} \bibinfo{year}{2021}\natexlab{}.
\newblock \showarticletitle{A unified lottery ticket hypothesis for graph neural networks}. In \bibinfo{booktitle}{\emph{International conference on machine learning}}. PMLR, \bibinfo{pages}{1695--1706}.
\newblock


\bibitem[Defferrard et~al\mbox{.}(2016)]%
        {defferrard2016convolutional}
\bibfield{author}{\bibinfo{person}{Micha{\"{e}}l Defferrard}, \bibinfo{person}{Xavier Bresson}, {and} \bibinfo{person}{Pierre Vandergheynst}.} \bibinfo{year}{2016}\natexlab{}.
\newblock \showarticletitle{Convolutional Neural Networks on Graphs with Fast Localized Spectral Filtering}. In \bibinfo{booktitle}{\emph{Advances in Neural Information Processing Systems 29: Annual Conference on Neural Information Processing Systems 2016, December 5-10, 2016, Barcelona, Spain}}.
\newblock


\bibitem[Dickens et~al\mbox{.}(2024)]%
        {dickens2024graph}
\bibfield{author}{\bibinfo{person}{Charles Dickens}, \bibinfo{person}{Edward Huang}, \bibinfo{person}{Aishwarya Reganti}, \bibinfo{person}{Jiong Zhu}, \bibinfo{person}{Karthik Subbian}, {and} \bibinfo{person}{Danai Koutra}.} \bibinfo{year}{2024}\natexlab{}.
\newblock \showarticletitle{Graph coarsening via convolution matching for scalable graph neural network training}. In \bibinfo{booktitle}{\emph{Companion Proceedings of the ACM on Web Conference 2024}}. \bibinfo{pages}{1502--1510}.
\newblock


\bibitem[Ding et~al\mbox{.}(2022)]%
        {ding_faster_2022}
\bibfield{author}{\bibinfo{person}{Mucong Ding}, \bibinfo{person}{Xiaoyu Liu}, \bibinfo{person}{Tahseen Rabbani}, \bibinfo{person}{Teresa Ranadive}, \bibinfo{person}{Tai-Ching Tuan}, {and} \bibinfo{person}{Furong Huang}.} \bibinfo{year}{2022}\natexlab{}.
\newblock \showarticletitle{Faster Hyperparameter Search for {GNNs} via Calibrated Dataset Condensation}.
\newblock \bibinfo{journal}{\emph{{arXiv}}} (\bibinfo{year}{2022}).
\newblock


\bibitem[Douze et~al\mbox{.}(2024)]%
        {douze2024faiss}
\bibfield{author}{\bibinfo{person}{Matthijs Douze}, \bibinfo{person}{Alexandr Guzhva}, \bibinfo{person}{Chengqi Deng}, \bibinfo{person}{Jeff Johnson}, \bibinfo{person}{Gergely Szilvasy}, \bibinfo{person}{Pierre-Emmanuel Mazaré}, \bibinfo{person}{Maria Lomeli}, \bibinfo{person}{Lucas Hosseini}, {and} \bibinfo{person}{Hervé Jégou}.} \bibinfo{year}{2024}\natexlab{}.
\newblock \showarticletitle{The Faiss library}.
\newblock \bibinfo{journal}{\emph{arXiv preprint arXiv:2401.08281}} (\bibinfo{year}{2024}).
\newblock


\bibitem[Fang et~al\mbox{.}(2024)]%
        {fang2024exgc}
\bibfield{author}{\bibinfo{person}{Junfeng Fang}, \bibinfo{person}{Xinglin Li}, \bibinfo{person}{Yongduo Sui}, \bibinfo{person}{Yuan Gao}, \bibinfo{person}{Guibin Zhang}, \bibinfo{person}{Kun Wang}, \bibinfo{person}{Xiang Wang}, {and} \bibinfo{person}{Xiangnan He}.} \bibinfo{year}{2024}\natexlab{}.
\newblock \showarticletitle{Exgc: Bridging efficiency and explainability in graph condensation}.
\newblock \bibinfo{journal}{\emph{arXiv preprint arXiv:2402.05962}} (\bibinfo{year}{2024}).
\newblock


\bibitem[Feng et~al\mbox{.}(2023)]%
        {feng_fair_2023}
\bibfield{author}{\bibinfo{person}{Qizhang Feng}, \bibinfo{person}{Zhimeng Jiang}, \bibinfo{person}{Ruiquan Li}, \bibinfo{person}{Yicheng Wang}, \bibinfo{person}{Na Zou}, \bibinfo{person}{Jiang Bian}, {and} \bibinfo{person}{Xia Hu}.} \bibinfo{year}{2023}\natexlab{}.
\newblock \showarticletitle{Fair Graph Distillation}. In \bibinfo{booktitle}{\emph{NeurIPS}}.
\newblock


\bibitem[Fettal et~al\mbox{.}(2023)]%
        {fettal2023scalable}
\bibfield{author}{\bibinfo{person}{Chakib Fettal}, \bibinfo{person}{Lazhar Labiod}, {and} \bibinfo{person}{Mohamed Nadif}.} \bibinfo{year}{2023}\natexlab{}.
\newblock \showarticletitle{Scalable Attributed-Graph Subspace Clustering}. In \bibinfo{booktitle}{\emph{Proceedings of the AAAI Conference on Artificial Intelligence}}, Vol.~\bibinfo{volume}{37}.
\newblock


\bibitem[Gao and Wu(2023)]%
        {gao2023multiple}
\bibfield{author}{\bibinfo{person}{Jian Gao} {and} \bibinfo{person}{Jianshe Wu}.} \bibinfo{year}{2023}\natexlab{}.
\newblock \showarticletitle{Multiple sparse graphs condensation}.
\newblock \bibinfo{journal}{\emph{Knowledge-Based Systems}}  \bibinfo{volume}{278} (\bibinfo{year}{2023}), \bibinfo{pages}{110904}.
\newblock


\bibitem[Gao et~al\mbox{.}(2024a)]%
        {gao_graph_2023}
\bibfield{author}{\bibinfo{person}{Xinyi Gao}, \bibinfo{person}{Tong Chen}, \bibinfo{person}{Yilong Zang}, \bibinfo{person}{Wentao Zhang}, \bibinfo{person}{Quoc Viet~Hung Nguyen}, \bibinfo{person}{Kai Zheng}, {and} \bibinfo{person}{Hongzhi Yin}.} \bibinfo{year}{2024}\natexlab{a}.
\newblock \showarticletitle{Graph Condensation for Inductive Node Representation Learning}. In \bibinfo{booktitle}{\emph{{ICDE}}}.
\newblock


\bibitem[Gao et~al\mbox{.}(2024b)]%
        {gao2024graphopen}
\bibfield{author}{\bibinfo{person}{Xinyi Gao}, \bibinfo{person}{Tong Chen}, \bibinfo{person}{Wentao Zhang}, \bibinfo{person}{Yayong Li}, \bibinfo{person}{Xiangguo Sun}, {and} \bibinfo{person}{Hongzhi Yin}.} \bibinfo{year}{2024}\natexlab{b}.
\newblock \showarticletitle{Graph condensation for open-world graph learning}. In \bibinfo{booktitle}{\emph{Proceedings of the 30th ACM SIGKDD Conference on Knowledge Discovery and Data Mining}}. \bibinfo{pages}{851--862}.
\newblock


\bibitem[Gao et~al\mbox{.}(2024c)]%
        {gao2024contrastive}
\bibfield{author}{\bibinfo{person}{Xinyi Gao}, \bibinfo{person}{Yayong Li}, \bibinfo{person}{Tong Chen}, \bibinfo{person}{Guanhua Ye}, \bibinfo{person}{Wentao Zhang}, {and} \bibinfo{person}{Hongzhi Yin}.} \bibinfo{year}{2024}\natexlab{c}.
\newblock \showarticletitle{Contrastive Graph Condensation: Advancing Data Versatility through Self-Supervised Learning}.
\newblock \bibinfo{journal}{\emph{arXiv preprint arXiv:2411.17063}} (\bibinfo{year}{2024}).
\newblock


\bibitem[Gao et~al\mbox{.}(2024d)]%
        {gao2024robgc}
\bibfield{author}{\bibinfo{person}{Xinyi Gao}, \bibinfo{person}{Hongzhi Yin}, \bibinfo{person}{Tong Chen}, \bibinfo{person}{Guanhua Ye}, \bibinfo{person}{Wentao Zhang}, {and} \bibinfo{person}{Bin Cui}.} \bibinfo{year}{2024}\natexlab{d}.
\newblock \showarticletitle{RobGC: Towards Robust Graph Condensation}.
\newblock \bibinfo{journal}{\emph{arXiv preprint arXiv:2406.13200}} (\bibinfo{year}{2024}).
\newblock


\bibitem[Gao et~al\mbox{.}(2024e)]%
        {gao2024graph}
\bibfield{author}{\bibinfo{person}{Xinyi Gao}, \bibinfo{person}{Junliang Yu}, \bibinfo{person}{Wei Jiang}, \bibinfo{person}{Tong Chen}, \bibinfo{person}{Wentao Zhang}, {and} \bibinfo{person}{Hongzhi Yin}.} \bibinfo{year}{2024}\natexlab{e}.
\newblock \showarticletitle{Graph condensation: A survey}.
\newblock \bibinfo{journal}{\emph{arXiv preprint arXiv:2401.11720}} (\bibinfo{year}{2024}).
\newblock


\bibitem[Gao et~al\mbox{.}(2023)]%
        {gao2023semantic}
\bibfield{author}{\bibinfo{person}{Xinyi Gao}, \bibinfo{person}{Wentao Zhang}, \bibinfo{person}{Tong Chen}, \bibinfo{person}{Junliang Yu}, \bibinfo{person}{Hung Quoc~Viet Nguyen}, {and} \bibinfo{person}{Hongzhi Yin}.} \bibinfo{year}{2023}\natexlab{}.
\newblock \showarticletitle{Semantic-aware node synthesis for imbalanced heterogeneous information networks}. In \bibinfo{booktitle}{\emph{Proceedings of the 32nd ACM International Conference on Information and Knowledge Management}}. \bibinfo{pages}{545--555}.
\newblock


\bibitem[Gao et~al\mbox{.}(2024f)]%
        {gao2024accelerating}
\bibfield{author}{\bibinfo{person}{Xinyi Gao}, \bibinfo{person}{Wentao Zhang}, \bibinfo{person}{Junliang Yu}, \bibinfo{person}{Yingxia Shao}, \bibinfo{person}{Quoc Viet~Hung Nguyen}, \bibinfo{person}{Bin Cui}, {and} \bibinfo{person}{Hongzhi Yin}.} \bibinfo{year}{2024}\natexlab{f}.
\newblock \showarticletitle{Accelerating scalable graph neural network inference with node-adaptive propagation}. In \bibinfo{booktitle}{\emph{2024 IEEE 40th International Conference on Data Engineering (ICDE)}}. IEEE, \bibinfo{pages}{3042--3055}.
\newblock


\bibitem[Gasteiger et~al\mbox{.}(2019)]%
        {gasteiger_predict_2019}
\bibfield{author}{\bibinfo{person}{Johannes Gasteiger}, \bibinfo{person}{Aleksandar Bojchevski}, {and} \bibinfo{person}{Stephan G{\"u}nnemann}.} \bibinfo{year}{2019}\natexlab{}.
\newblock \showarticletitle{Predict then Propagate: Graph Neural Networks meet Personalized PageRank}. In \bibinfo{booktitle}{\emph{International Conference on Learning Representations (ICLR)}}.
\newblock


\bibitem[Gilmer et~al\mbox{.}(2017)]%
        {gilmer2017neural}
\bibfield{author}{\bibinfo{person}{Justin Gilmer}, \bibinfo{person}{Samuel~S Schoenholz}, \bibinfo{person}{Patrick~F Riley}, \bibinfo{person}{Oriol Vinyals}, {and} \bibinfo{person}{George~E Dahl}.} \bibinfo{year}{2017}\natexlab{}.
\newblock \showarticletitle{Neural message passing for quantum chemistry}. In \bibinfo{booktitle}{\emph{International Conference on Machine Learning}}. PMLR, \bibinfo{pages}{1263--1272}.
\newblock


\bibitem[Hamilton et~al\mbox{.}(2017)]%
        {hamilton2017inductive}
\bibfield{author}{\bibinfo{person}{William~L. Hamilton}, \bibinfo{person}{Zhitao Ying}, {and} \bibinfo{person}{Jure Leskovec}.} \bibinfo{year}{2017}\natexlab{}.
\newblock \showarticletitle{Inductive Representation Learning on Large Graphs}. In \bibinfo{booktitle}{\emph{Advances in Neural Information Processing Systems 30: Annual Conference on Neural Information Processing Systems 2017, December 4-9, 2017, Long Beach, CA, {USA}}}. \bibinfo{pages}{1024--1034}.
\newblock


\bibitem[Hashemi et~al\mbox{.}(2024)]%
        {hashemi2024comprehensive}
\bibfield{author}{\bibinfo{person}{Mohammad Hashemi}, \bibinfo{person}{Shengbo Gong}, \bibinfo{person}{Juntong Ni}, \bibinfo{person}{Wenqi Fan}, \bibinfo{person}{B~Aditya Prakash}, {and} \bibinfo{person}{Wei Jin}.} \bibinfo{year}{2024}\natexlab{}.
\newblock \showarticletitle{A Comprehensive Survey on Graph Reduction: Sparsification, Coarsening, and Condensation}.
\newblock \bibinfo{journal}{\emph{IJCAI}} (\bibinfo{year}{2024}).
\newblock


\bibitem[Hu et~al\mbox{.}(2020)]%
        {hu2020open}
\bibfield{author}{\bibinfo{person}{Weihua Hu}, \bibinfo{person}{Matthias Fey}, \bibinfo{person}{Marinka Zitnik}, \bibinfo{person}{Yuxiao Dong}, \bibinfo{person}{Hongyu Ren}, \bibinfo{person}{Bowen Liu}, \bibinfo{person}{Michele Catasta}, {and} \bibinfo{person}{Jure Leskovec}.} \bibinfo{year}{2020}\natexlab{}.
\newblock \showarticletitle{Open graph benchmark: Datasets for machine learning on graphs}.
\newblock \bibinfo{journal}{\emph{Advances in neural information processing systems}}  \bibinfo{volume}{33} (\bibinfo{year}{2020}), \bibinfo{pages}{22118--22133}.
\newblock


\bibitem[Huang et~al\mbox{.}(2020)]%
        {huang2020temporal}
\bibfield{author}{\bibinfo{person}{Shixun Huang}, \bibinfo{person}{Zhifeng Bao}, \bibinfo{person}{Guoliang Li}, \bibinfo{person}{Yanghao Zhou}, {and} \bibinfo{person}{J~Shane Culpepper}.} \bibinfo{year}{2020}\natexlab{}.
\newblock \showarticletitle{Temporal network representation learning via historical neighborhoods aggregation}. In \bibinfo{booktitle}{\emph{2020 IEEE 36th International Conference on Data Engineering (ICDE)}}. IEEE, \bibinfo{pages}{1117--1128}.
\newblock


\bibitem[Huang et~al\mbox{.}(2024)]%
        {huang2024cost}
\bibfield{author}{\bibinfo{person}{Shixun Huang}, \bibinfo{person}{Ge Lee}, \bibinfo{person}{Zhifeng Bao}, {and} \bibinfo{person}{Shirui Pan}.} \bibinfo{year}{2024}\natexlab{}.
\newblock \showarticletitle{Cost-effective Data Labelling for Graph Neural Networks}. In \bibinfo{booktitle}{\emph{Proceedings of the ACM on Web Conference 2024}}. \bibinfo{pages}{353--364}.
\newblock


\bibitem[Huang et~al\mbox{.}(2022)]%
        {huang2022influence}
\bibfield{author}{\bibinfo{person}{Shixun Huang}, \bibinfo{person}{Wenqing Lin}, \bibinfo{person}{Zhifeng Bao}, {and} \bibinfo{person}{Jiachen Sun}.} \bibinfo{year}{2022}\natexlab{}.
\newblock \showarticletitle{Influence maximization in real-world closed social networks}.
\newblock \bibinfo{journal}{\emph{arXiv preprint arXiv:2209.10286}} (\bibinfo{year}{2022}).
\newblock


\bibitem[Huang et~al\mbox{.}(2021)]%
        {huang2021scaling}
\bibfield{author}{\bibinfo{person}{Zengfeng Huang}, \bibinfo{person}{Shengzhong Zhang}, \bibinfo{person}{Chong Xi}, \bibinfo{person}{Tang Liu}, {and} \bibinfo{person}{Min Zhou}.} \bibinfo{year}{2021}\natexlab{}.
\newblock \showarticletitle{Scaling up graph neural networks via graph coarsening}. In \bibinfo{booktitle}{\emph{Proceedings of the 27th ACM SIGKDD conference on knowledge discovery \& data mining}}. \bibinfo{pages}{675--684}.
\newblock


\bibitem[Hui et~al\mbox{.}(2023)]%
        {hui2023rethinking}
\bibfield{author}{\bibinfo{person}{Bo Hui}, \bibinfo{person}{Da Yan}, \bibinfo{person}{Xiaolong Ma}, {and} \bibinfo{person}{Wei-Shinn Ku}.} \bibinfo{year}{2023}\natexlab{}.
\newblock \showarticletitle{Rethinking Graph Lottery Tickets: Graph Sparsity Matters}.
\newblock \bibinfo{journal}{\emph{International Conference on Learning Representations (ICLR)}} (\bibinfo{year}{2023}).
\newblock


\bibitem[Jin et~al\mbox{.}(2022a)]%
        {jin2022condensing}
\bibfield{author}{\bibinfo{person}{Wei Jin}, \bibinfo{person}{Xianfeng Tang}, \bibinfo{person}{Haoming Jiang}, \bibinfo{person}{Zheng Li}, \bibinfo{person}{Danqing Zhang}, \bibinfo{person}{Jiliang Tang}, {and} \bibinfo{person}{Bing Yin}.} \bibinfo{year}{2022}\natexlab{a}.
\newblock \showarticletitle{Condensing Graphs via One-Step Gradient Matching}. In \bibinfo{booktitle}{\emph{Proceedings of the 28th ACM SIGKDD Conference on Knowledge Discovery and Data Mining}}. \bibinfo{pages}{720--730}.
\newblock


\bibitem[Jin et~al\mbox{.}(2022b)]%
        {jin2022graph}
\bibfield{author}{\bibinfo{person}{Wei Jin}, \bibinfo{person}{Lingxiao Zhao}, \bibinfo{person}{Shichang Zhang}, \bibinfo{person}{Yozen Liu}, \bibinfo{person}{Jiliang Tang}, {and} \bibinfo{person}{Neil Shah}.} \bibinfo{year}{2022}\natexlab{b}.
\newblock \showarticletitle{Graph Condensation for Graph Neural Networks}. In \bibinfo{booktitle}{\emph{International Conference on Learning Representations}}.
\newblock


\bibitem[Kalofolias(2016)]%
        {kalofolias2016learn}
\bibfield{author}{\bibinfo{person}{Vassilis Kalofolias}.} \bibinfo{year}{2016}\natexlab{}.
\newblock \showarticletitle{How to learn a graph from smooth signals}. In \bibinfo{booktitle}{\emph{Artificial intelligence and statistics}}. PMLR, \bibinfo{pages}{920--929}.
\newblock


\bibitem[Kipf and Welling(2017)]%
        {DBLP:conf/iclr/KipfW17}
\bibfield{author}{\bibinfo{person}{Thomas~N. Kipf} {and} \bibinfo{person}{Max Welling}.} \bibinfo{year}{2017}\natexlab{}.
\newblock \showarticletitle{Semi-Supervised Classification with Graph Convolutional Networks}. In \bibinfo{booktitle}{\emph{5th International Conference on Learning Representations, {ICLR} 2017, Toulon, France, April 24-26, 2017, Conference Track Proceedings}}.
\newblock


\bibitem[Kumar et~al\mbox{.}(2023)]%
        {kumar2023featured}
\bibfield{author}{\bibinfo{person}{Manoj Kumar}, \bibinfo{person}{Anurag Sharma}, \bibinfo{person}{Shashwat Saxena}, {and} \bibinfo{person}{Sandeep Kumar}.} \bibinfo{year}{2023}\natexlab{}.
\newblock \showarticletitle{Featured graph coarsening with similarity guarantees}. In \bibinfo{booktitle}{\emph{International Conference on Machine Learning}}. PMLR, \bibinfo{pages}{17953--17975}.
\newblock


\bibitem[Lei and Tao(2024)]%
        {lei_comprehensive_2024}
\bibfield{author}{\bibinfo{person}{Shiye Lei} {and} \bibinfo{person}{Dacheng Tao}.} \bibinfo{year}{2024}\natexlab{}.
\newblock \showarticletitle{A Comprehensive Survey of Dataset Distillation}.
\newblock \bibinfo{journal}{\emph{{TPAMI}}} (\bibinfo{year}{2024}).
\newblock


\bibitem[Li et~al\mbox{.}(2023)]%
        {li2023attend}
\bibfield{author}{\bibinfo{person}{Xinglin Li}, \bibinfo{person}{Kun Wang}, \bibinfo{person}{Hanhui Deng}, \bibinfo{person}{Yuxuan Liang}, {and} \bibinfo{person}{Di Wu}.} \bibinfo{year}{2023}\natexlab{}.
\newblock \showarticletitle{Attend who is weak: Enhancing graph condensation via cross-free adversarial training}.
\newblock \bibinfo{journal}{\emph{arXiv preprint arXiv:2311.15772}} (\bibinfo{year}{2023}).
\newblock


\bibitem[Liu et~al\mbox{.}(2022)]%
        {liu2022graph}
\bibfield{author}{\bibinfo{person}{Mengyang Liu}, \bibinfo{person}{Shanchuan Li}, \bibinfo{person}{Xinshi Chen}, {and} \bibinfo{person}{Le Song}.} \bibinfo{year}{2022}\natexlab{}.
\newblock \showarticletitle{Graph condensation via receptive field distribution matching}.
\newblock \bibinfo{journal}{\emph{arXiv preprint arXiv:2206.13697}} (\bibinfo{year}{2022}).
\newblock


\bibitem[Liu et~al\mbox{.}(2024a)]%
        {liu_graph_2023}
\bibfield{author}{\bibinfo{person}{Yang Liu}, \bibinfo{person}{Deyu Bo}, {and} \bibinfo{person}{Chuan Shi}.} \bibinfo{year}{2024}\natexlab{a}.
\newblock \showarticletitle{Graph Condensation via Eigenbasis Matching}.
\newblock \bibinfo{journal}{\emph{ICML}} (\bibinfo{year}{2024}).
\newblock


\bibitem[Liu et~al\mbox{.}(2023)]%
        {liu_cat_2023}
\bibfield{author}{\bibinfo{person}{Yilun Liu}, \bibinfo{person}{Ruihong Qiu}, {and} \bibinfo{person}{Zi Huang}.} \bibinfo{year}{2023}\natexlab{}.
\newblock \showarticletitle{{CaT}: Balanced Continual Graph Learning with Graph Condensation}. In \bibinfo{booktitle}{\emph{{ICDM}}}.
\newblock


\bibitem[Liu and Shen(2024)]%
        {liu2024tinygraph}
\bibfield{author}{\bibinfo{person}{Yezi Liu} {and} \bibinfo{person}{Yanning Shen}.} \bibinfo{year}{2024}\natexlab{}.
\newblock \showarticletitle{TinyGraph: Joint Feature and Node Condensation for Graph Neural Networks}.
\newblock \bibinfo{journal}{\emph{arXiv preprint arXiv:2407.08064}} (\bibinfo{year}{2024}).
\newblock


\bibitem[Liu et~al\mbox{.}(2024b)]%
        {liu2024graph}
\bibfield{author}{\bibinfo{person}{Zhanyu Liu}, \bibinfo{person}{Chaolv Zeng}, {and} \bibinfo{person}{Guanjie Zheng}.} \bibinfo{year}{2024}\natexlab{b}.
\newblock \showarticletitle{Graph Data Condensation via Self-expressive Graph Structure Reconstruction}.
\newblock \bibinfo{journal}{\emph{SIGKDD}} (\bibinfo{year}{2024}).
\newblock


\bibitem[Loukas(2019)]%
        {loukas2019graph}
\bibfield{author}{\bibinfo{person}{Andreas Loukas}.} \bibinfo{year}{2019}\natexlab{}.
\newblock \showarticletitle{Graph Reduction with Spectral and Cut Guarantees.}
\newblock \bibinfo{journal}{\emph{J. Mach. Learn. Res.}} \bibinfo{number}{116} (\bibinfo{year}{2019}).
\newblock


\bibitem[Loukas and Vandergheynst(2018)]%
        {loukas2018spectrally}
\bibfield{author}{\bibinfo{person}{Andreas Loukas} {and} \bibinfo{person}{Pierre Vandergheynst}.} \bibinfo{year}{2018}\natexlab{}.
\newblock \showarticletitle{Spectrally Approximating Large Graphs with Smaller Graphs}. In \bibinfo{booktitle}{\emph{Proceedings of the 35th International Conference on Machine Learning, {ICML} 2018, Stockholmsm{\"{a}}ssan, Stockholm, Sweden, July 10-15, 2018}} \emph{(\bibinfo{series}{Proceedings of Machine Learning Research})}.
\newblock


\bibitem[Mao et~al\mbox{.}(2023)]%
        {mao2023gcare}
\bibfield{author}{\bibinfo{person}{Runze Mao}, \bibinfo{person}{Wenqi Fan}, {and} \bibinfo{person}{Qing Li}.} \bibinfo{year}{2023}\natexlab{}.
\newblock \showarticletitle{Gcare: Mitigating subgroup unfairness in graph condensation through adversarial regularization}.
\newblock \bibinfo{journal}{\emph{Applied Sciences}} \bibinfo{volume}{13}, \bibinfo{number}{16} (\bibinfo{year}{2023}), \bibinfo{pages}{9166}.
\newblock


\bibitem[Nguyen et~al\mbox{.}(2021)]%
        {nguyen2021dataset}
\bibfield{author}{\bibinfo{person}{Timothy Nguyen}, \bibinfo{person}{Zhourong Chen}, {and} \bibinfo{person}{Jaehoon Lee}.} \bibinfo{year}{2021}\natexlab{}.
\newblock \showarticletitle{Dataset Meta-Learning from Kernel Ridge-Regression}. In \bibinfo{booktitle}{\emph{ICLR}}.
\newblock


\bibitem[Page et~al\mbox{.}(1999)]%
        {page1999pagerank}
\bibfield{author}{\bibinfo{person}{Lawrence Page}, \bibinfo{person}{Sergey Brin}, \bibinfo{person}{Rajeev Motwani}, {and} \bibinfo{person}{Terry Winograd}.} \bibinfo{year}{1999}\natexlab{}.
\newblock \bibinfo{booktitle}{\emph{The PageRank citation ranking: Bringing order to the web.}}
\newblock \bibinfo{type}{{T}echnical {R}eport}. \bibinfo{institution}{Stanford InfoLab}.
\newblock


\bibitem[Rasti-Meymandi et~al\mbox{.}(2024)]%
        {rasti2024gstam}
\bibfield{author}{\bibinfo{person}{Arash Rasti-Meymandi}, \bibinfo{person}{Ahmad Sajedi}, \bibinfo{person}{Zhaopan Xu}, {and} \bibinfo{person}{Konstantinos~N Plataniotis}.} \bibinfo{year}{2024}\natexlab{}.
\newblock \showarticletitle{GSTAM: Efficient Graph Distillation with Structural Attention-Matching}.
\newblock \bibinfo{journal}{\emph{arXiv preprint arXiv:2408.16871}} (\bibinfo{year}{2024}).
\newblock


\bibitem[Si et~al\mbox{.}(2022)]%
        {si2022serving}
\bibfield{author}{\bibinfo{person}{Si Si}, \bibinfo{person}{Felix Yu}, \bibinfo{person}{Ankit~Singh Rawat}, \bibinfo{person}{Cho-Jui Hsieh}, {and} \bibinfo{person}{Sanjiv Kumar}.} \bibinfo{year}{2022}\natexlab{}.
\newblock \showarticletitle{Serving Graph Compression for Graph Neural Networks}. In \bibinfo{booktitle}{\emph{The Eleventh International Conference on Learning Representations}}.
\newblock


\bibitem[Sun et~al\mbox{.}(2019)]%
        {sun2019infograph}
\bibfield{author}{\bibinfo{person}{Fan-Yun Sun}, \bibinfo{person}{Jordan Hoffmann}, \bibinfo{person}{Vikas Verma}, {and} \bibinfo{person}{Jian Tang}.} \bibinfo{year}{2019}\natexlab{}.
\newblock \showarticletitle{Infograph: Unsupervised and semi-supervised graph-level representation learning via mutual information maximization}.
\newblock \bibinfo{journal}{\emph{arXiv preprint arXiv:1908.01000}} (\bibinfo{year}{2019}).
\newblock


\bibitem[Velickovic et~al\mbox{.}(2018)]%
        {DBLP:conf/iclr/VelickovicCCRLB18}
\bibfield{author}{\bibinfo{person}{Petar Velickovic}, \bibinfo{person}{Guillem Cucurull}, \bibinfo{person}{Arantxa Casanova}, \bibinfo{person}{Adriana Romero}, \bibinfo{person}{Pietro Li{\`{o}}}, {and} \bibinfo{person}{Yoshua Bengio}.} \bibinfo{year}{2018}\natexlab{}.
\newblock \showarticletitle{Graph Attention Networks}. In \bibinfo{booktitle}{\emph{6th International Conference on Learning Representations, {ICLR} 2018, Vancouver, BC, Canada}}.
\newblock


\bibitem[Wang et~al\mbox{.}(2024a)]%
        {wang2023fast}
\bibfield{author}{\bibinfo{person}{Lin Wang}, \bibinfo{person}{Wenqi Fan}, \bibinfo{person}{Jiatong Li}, \bibinfo{person}{Yao Ma}, {and} \bibinfo{person}{Qing Li}.} \bibinfo{year}{2024}\natexlab{a}.
\newblock \showarticletitle{Fast graph condensation with structure-based neural tangent kernel}.
\newblock \bibinfo{journal}{\emph{WWW}} (\bibinfo{year}{2024}).
\newblock


\bibitem[Wang et~al\mbox{.}(2018)]%
        {wang2018dataset}
\bibfield{author}{\bibinfo{person}{Tongzhou Wang}, \bibinfo{person}{Jun-Yan Zhu}, \bibinfo{person}{Antonio Torralba}, {and} \bibinfo{person}{Alexei~A Efros}.} \bibinfo{year}{2018}\natexlab{}.
\newblock \showarticletitle{Dataset distillation}.
\newblock \bibinfo{journal}{\emph{ArXiv preprint}} (\bibinfo{year}{2018}).
\newblock


\bibitem[Wang et~al\mbox{.}(2024b)]%
        {wang2024self}
\bibfield{author}{\bibinfo{person}{Yuxiang Wang}, \bibinfo{person}{Xiao Yan}, \bibinfo{person}{Shiyu Jin}, \bibinfo{person}{Hao Huang}, \bibinfo{person}{Quanqing Xu}, \bibinfo{person}{Qingchen Zhang}, \bibinfo{person}{Bo Du}, {and} \bibinfo{person}{Jiawei Jiang}.} \bibinfo{year}{2024}\natexlab{b}.
\newblock \showarticletitle{Self-Supervised Learning for Graph Dataset Condensation}. In \bibinfo{booktitle}{\emph{Proceedings of the 30th ACM SIGKDD Conference on Knowledge Discovery and Data Mining}}. \bibinfo{pages}{3289--3298}.
\newblock


\bibitem[Wu et~al\mbox{.}(2019)]%
        {wu2019simplifying}
\bibfield{author}{\bibinfo{person}{Felix Wu}, \bibinfo{person}{Amauri H.~Souza Jr.}, \bibinfo{person}{Tianyi Zhang}, \bibinfo{person}{Christopher Fifty}, \bibinfo{person}{Tao Yu}, {and} \bibinfo{person}{Kilian~Q. Weinberger}.} \bibinfo{year}{2019}\natexlab{}.
\newblock \showarticletitle{Simplifying Graph Convolutional Networks}. In \bibinfo{booktitle}{\emph{Proceedings of the 36th International Conference on Machine Learning, {ICML} 2019, 9-15 June 2019, Long Beach, California, {USA}}}. \bibinfo{pages}{6861--6871}.
\newblock


\bibitem[Wu et~al\mbox{.}(2024)]%
        {wu2024backdoor}
\bibfield{author}{\bibinfo{person}{Jiahao Wu}, \bibinfo{person}{Ning Lu}, \bibinfo{person}{Zeiyu Dai}, \bibinfo{person}{Wenqi Fan}, \bibinfo{person}{Shengcai Liu}, \bibinfo{person}{Qing Li}, {and} \bibinfo{person}{Ke Tang}.} \bibinfo{year}{2024}\natexlab{}.
\newblock \showarticletitle{Backdoor Graph Condensation}.
\newblock \bibinfo{journal}{\emph{arXiv preprint arXiv:2407.11025}} (\bibinfo{year}{2024}).
\newblock


\bibitem[Xiao et~al\mbox{.}(2024a)]%
        {xiao2024disentangled}
\bibfield{author}{\bibinfo{person}{Zhenbang Xiao}, \bibinfo{person}{Shunyu Liu}, \bibinfo{person}{Yu Wang}, \bibinfo{person}{Tongya Zheng}, {and} \bibinfo{person}{Mingli Song}.} \bibinfo{year}{2024}\natexlab{a}.
\newblock \showarticletitle{Disentangled condensation for large-scale graphs}.
\newblock \bibinfo{journal}{\emph{arXiv preprint arXiv:2401.12231}} (\bibinfo{year}{2024}).
\newblock


\bibitem[Xiao et~al\mbox{.}(2024b)]%
        {xiao2024simple}
\bibfield{author}{\bibinfo{person}{Zhenbang Xiao}, \bibinfo{person}{Yu Wang}, \bibinfo{person}{Shunyu Liu}, \bibinfo{person}{Huiqiong Wang}, \bibinfo{person}{Mingli Song}, {and} \bibinfo{person}{Tongya Zheng}.} \bibinfo{year}{2024}\natexlab{b}.
\newblock \showarticletitle{Simple Graph Condensation}.
\newblock \bibinfo{journal}{\emph{Machine Learning and Knowledge Discovery in Databases (ECML PKDD)}} (\bibinfo{year}{2024}).
\newblock


\bibitem[Xu et~al\mbox{.}(2024)]%
        {xu2024survey}
\bibfield{author}{\bibinfo{person}{Hongjia Xu}, \bibinfo{person}{Liangliang Zhang}, \bibinfo{person}{Yao Ma}, \bibinfo{person}{Sheng Zhou}, \bibinfo{person}{Zhuonan Zheng}, {and} \bibinfo{person}{Bu Jiajun}.} \bibinfo{year}{2024}\natexlab{}.
\newblock \showarticletitle{A Survey on Graph Condensation}.
\newblock \bibinfo{journal}{\emph{arXiv preprint arXiv:2402.02000}} (\bibinfo{year}{2024}).
\newblock


\bibitem[Xu et~al\mbox{.}(2023)]%
        {xu2023kernel}
\bibfield{author}{\bibinfo{person}{Zhe Xu}, \bibinfo{person}{Yuzhong Chen}, \bibinfo{person}{Menghai Pan}, \bibinfo{person}{Huiyuan Chen}, \bibinfo{person}{Mahashweta Das}, \bibinfo{person}{Hao Yang}, {and} \bibinfo{person}{Hanghang Tong}.} \bibinfo{year}{2023}\natexlab{}.
\newblock \showarticletitle{Kernel Ridge Regression-Based Graph Dataset Distillation}. In \bibinfo{booktitle}{\emph{Proceedings of the 29th ACM SIGKDD Conference on Knowledge Discovery and Data Mining}}. \bibinfo{pages}{2850--2861}.
\newblock


\bibitem[Yang et~al\mbox{.}(2023)]%
        {yang_does_2023}
\bibfield{author}{\bibinfo{person}{Beining Yang}, \bibinfo{person}{Kai Wang}, \bibinfo{person}{Qingyun Sun}, \bibinfo{person}{Cheng Ji}, \bibinfo{person}{Xingcheng Fu}, \bibinfo{person}{Hao Tang}, \bibinfo{person}{Yang You}, {and} \bibinfo{person}{Jianxin Li}.} \bibinfo{year}{2023}\natexlab{}.
\newblock \showarticletitle{Does Graph Distillation See Like Vision Dataset Counterpart?}. In \bibinfo{booktitle}{\emph{NeurIPS}}.
\newblock


\bibitem[Yang et~al\mbox{.}(2022)]%
        {yang2022graph}
\bibfield{author}{\bibinfo{person}{Chenxiao Yang}, \bibinfo{person}{Qitian Wu}, \bibinfo{person}{Jiahua Wang}, {and} \bibinfo{person}{Junchi Yan}.} \bibinfo{year}{2022}\natexlab{}.
\newblock \showarticletitle{Graph neural networks are inherently good generalizers: Insights by bridging gnns and mlps}.
\newblock \bibinfo{journal}{\emph{arXiv preprint arXiv:2212.09034}} (\bibinfo{year}{2022}).
\newblock


\bibitem[Yu et~al\mbox{.}(2023b)]%
        {yu2023self}
\bibfield{author}{\bibinfo{person}{Junliang Yu}, \bibinfo{person}{Hongzhi Yin}, \bibinfo{person}{Xin Xia}, \bibinfo{person}{Tong Chen}, \bibinfo{person}{Jundong Li}, {and} \bibinfo{person}{Zi Huang}.} \bibinfo{year}{2023}\natexlab{b}.
\newblock \showarticletitle{Self-supervised learning for recommender systems: A survey}.
\newblock \bibinfo{journal}{\emph{IEEE Transactions on Knowledge and Data Engineering}} (\bibinfo{year}{2023}).
\newblock


\bibitem[Yu et~al\mbox{.}(2023a)]%
        {yu2023dataset}
\bibfield{author}{\bibinfo{person}{Ruonan Yu}, \bibinfo{person}{Songhua Liu}, {and} \bibinfo{person}{Xinchao Wang}.} \bibinfo{year}{2023}\natexlab{a}.
\newblock \showarticletitle{Dataset distillation: A comprehensive review}.
\newblock \bibinfo{journal}{\emph{IEEE Transactions on Pattern Analysis and Machine Intelligence}} (\bibinfo{year}{2023}).
\newblock


\bibitem[Zeng et~al\mbox{.}(2020)]%
        {DBLP:conf/iclr/ZengZSKP20}
\bibfield{author}{\bibinfo{person}{Hanqing Zeng}, \bibinfo{person}{Hongkuan Zhou}, \bibinfo{person}{Ajitesh Srivastava}, \bibinfo{person}{Rajgopal Kannan}, {and} \bibinfo{person}{Viktor~K. Prasanna}.} \bibinfo{year}{2020}\natexlab{}.
\newblock \showarticletitle{GraphSAINT: Graph Sampling Based Inductive Learning Method}. In \bibinfo{booktitle}{\emph{8th International Conference on Learning Representations, {ICLR} 2020, Addis Ababa, Ethiopia, April 26-30, 2020}}. \bibinfo{publisher}{OpenReview.net}.
\newblock


\bibitem[Zhang et~al\mbox{.}(2024b)]%
        {zhang2024two}
\bibfield{author}{\bibinfo{person}{Tianle Zhang}, \bibinfo{person}{Yuchen Zhang}, \bibinfo{person}{Kun Wang}, \bibinfo{person}{Kai Wang}, \bibinfo{person}{Beining Yang}, \bibinfo{person}{Kaipeng Zhang}, \bibinfo{person}{Wenqi Shao}, \bibinfo{person}{Ping Liu}, \bibinfo{person}{Joey~Tianyi Zhou}, {and} \bibinfo{person}{Yang You}.} \bibinfo{year}{2024}\natexlab{b}.
\newblock \showarticletitle{Two Trades is not Baffled: Condense Graph via Crafting Rational Gradient Matching}.
\newblock \bibinfo{journal}{\emph{arXiv preprint arXiv:2402.04924}} (\bibinfo{year}{2024}).
\newblock


\bibitem[Zhang et~al\mbox{.}(2021)]%
        {zhang2021node}
\bibfield{author}{\bibinfo{person}{Wentao Zhang}, \bibinfo{person}{Mingyu Yang}, \bibinfo{person}{Zeang Sheng}, \bibinfo{person}{Yang Li}, \bibinfo{person}{Wen Ouyang}, \bibinfo{person}{Yangyu Tao}, \bibinfo{person}{Zhi Yang}, {and} \bibinfo{person}{Bin Cui}.} \bibinfo{year}{2021}\natexlab{}.
\newblock \showarticletitle{Node Dependent Local Smoothing for Scalable Graph Learning}.
\newblock \bibinfo{journal}{\emph{Advances in Neural Information Processing Systems}}  \bibinfo{volume}{34} (\bibinfo{year}{2021}).
\newblock


\bibitem[Zhang et~al\mbox{.}(2024a)]%
        {zhang2024navigating}
\bibfield{author}{\bibinfo{person}{Yuchen Zhang}, \bibinfo{person}{Tianle Zhang}, \bibinfo{person}{Kai Wang}, \bibinfo{person}{Ziyao Guo}, \bibinfo{person}{Yuxuan Liang}, \bibinfo{person}{Xavier Bresson}, \bibinfo{person}{Wei Jin}, {and} \bibinfo{person}{Yang You}.} \bibinfo{year}{2024}\natexlab{a}.
\newblock \showarticletitle{Navigating Complexity: Toward Lossless Graph Condensation via Expanding Window Matching}.
\newblock \bibinfo{journal}{\emph{ICML}} (\bibinfo{year}{2024}).
\newblock


\bibitem[Zhao and Bilen(2023)]%
        {zhao2023dataset}
\bibfield{author}{\bibinfo{person}{Bo Zhao} {and} \bibinfo{person}{Hakan Bilen}.} \bibinfo{year}{2023}\natexlab{}.
\newblock \showarticletitle{Dataset condensation with distribution matching}. In \bibinfo{booktitle}{\emph{Proceedings of the IEEE/CVF Winter Conference on Applications of Computer Vision}}. \bibinfo{pages}{6514--6523}.
\newblock


\bibitem[Zhao et~al\mbox{.}(2021)]%
        {zhao2020dataset}
\bibfield{author}{\bibinfo{person}{Bo Zhao}, \bibinfo{person}{Konda~Reddy Mopuri}, {and} \bibinfo{person}{Hakan Bilen}.} \bibinfo{year}{2021}\natexlab{}.
\newblock \showarticletitle{Dataset Condensation with Gradient Matching}. In \bibinfo{booktitle}{\emph{ICLR}}.
\newblock


\bibitem[Zhao et~al\mbox{.}(2024)]%
        {zhao2024graphany}
\bibfield{author}{\bibinfo{person}{Jianan Zhao}, \bibinfo{person}{Hesham Mostafa}, \bibinfo{person}{Michael Galkin}, \bibinfo{person}{Michael Bronstein}, \bibinfo{person}{Zhaocheng Zhu}, {and} \bibinfo{person}{Jian Tang}.} \bibinfo{year}{2024}\natexlab{}.
\newblock \showarticletitle{GraphAny: A Foundation Model for Node Classification on Any Graph}.
\newblock \bibinfo{journal}{\emph{arXiv preprint arXiv:2405.20445}} (\bibinfo{year}{2024}).
\newblock


\bibitem[Zheng et~al\mbox{.}(2023)]%
        {zheng_structure_free_2023}
\bibfield{author}{\bibinfo{person}{Xin Zheng}, \bibinfo{person}{Miao Zhang}, \bibinfo{person}{Chunyang Chen}, \bibinfo{person}{Quoc Viet~Hung Nguyen}, \bibinfo{person}{Xingquan Zhu}, {and} \bibinfo{person}{Shirui Pan}.} \bibinfo{year}{2023}\natexlab{}.
\newblock \showarticletitle{Structure-free Graph Condensation: From Large-scale Graphs to Condensed Graph-free Data}. In \bibinfo{booktitle}{\emph{NeurIPS}}.
\newblock


\bibitem[Zheng et~al\mbox{.}(2022)]%
        {zheng2022rethinking}
\bibfield{author}{\bibinfo{person}{Yizhen Zheng}, \bibinfo{person}{Shirui Pan}, \bibinfo{person}{Vincent Lee}, \bibinfo{person}{Yu Zheng}, {and} \bibinfo{person}{Philip~S Yu}.} \bibinfo{year}{2022}\natexlab{}.
\newblock \showarticletitle{Rethinking and scaling up graph contrastive learning: An extremely efficient approach with group discrimination}.
\newblock \bibinfo{journal}{\emph{Advances in Neural Information Processing Systems}}  \bibinfo{volume}{35} (\bibinfo{year}{2022}), \bibinfo{pages}{10809--10820}.
\newblock


\bibitem[Zhu and Koniusz(2020)]%
        {zhu2020simple}
\bibfield{author}{\bibinfo{person}{Hao Zhu} {and} \bibinfo{person}{Piotr Koniusz}.} \bibinfo{year}{2020}\natexlab{}.
\newblock \showarticletitle{Simple spectral graph convolution}. In \bibinfo{booktitle}{\emph{International Conference on Learning Representations}}.
\newblock


\bibitem[Zhu et~al\mbox{.}(2021)]%
        {zhu2021deep}
\bibfield{author}{\bibinfo{person}{Yanqiao Zhu}, \bibinfo{person}{Weizhi Xu}, \bibinfo{person}{Jinghao Zhang}, \bibinfo{person}{Qiang Liu}, \bibinfo{person}{Shu Wu}, {and} \bibinfo{person}{Liang Wang}.} \bibinfo{year}{2021}\natexlab{}.
\newblock \showarticletitle{Deep graph structure learning for robust representations: A survey}.
\newblock \bibinfo{journal}{\emph{arXiv preprint arXiv:2103.03036}}  \bibinfo{volume}{14} (\bibinfo{year}{2021}), \bibinfo{pages}{1--1}.
\newblock


\end{thebibliography}

\newpage
\appendix
\begin{center}
\Large \bf {Appendix}
\end{center}
\etocdepthtag.toc{mtappendix}
\etocsettagdepth{mtchapter}{none}
\etocsettagdepth{mtappendix}{subsection}
\tableofcontents

\section{{{Notation Table}}}
{{To enhance readability, the main notations of our proposed method, along with their explanations, are summarized in Table \ref{nota}. The prime symbol denotes notations associated with the condensed graph.}}

\begin{table}[h]
\setlength{\abovecaptionskip}{1pt}
\renewcommand{\arraystretch}{1.2}
\caption{{{The notations and explanations.}}}
\label{nota}
\begin{tabular}{l|l}
\Xhline{1.pt}
Notation                 & Explanation                                                                                                                                   \\ \hline
${\bf Z}$ and ${\bf Z}'$ & Node embeddings.                                                                                                                              \\ \hline
${\bf H}$ and ${\bf H}'$ & Propagated features.                                                                                                                          \\ \hline
${\bf P}$ and ${\bf P}'$ & Linear aggregation matrices.                                                                                                                  \\ \hline
$\hat{\bf{P}}$           & Aggregation matrix with class partition constraint.                                                                                           \\ \hline
$C_i$ and $C'_i$         & Class $i$.                                                                                                                                    \\ \hline
$S^i_j$                  & \begin{tabular}[c]{@{}l@{}}Subclass $j$ for the class $i$, which is derived by\\ performing class partition on class $i$.\end{tabular}        \\ \hline
${\bf{R}}^i$             & \begin{tabular}[c]{@{}l@{}}The aggregation matrix for class $i$, \\ which is combined to form $\hat{\bf{P}}$.\end{tabular}           \\ \hline
${{\bf{R}}}^{i}_{j,k}$   & \begin{tabular}[c]{@{}l@{}}The aggregation weight for node $k$ in class $i$. It is \\ determined by the size of subclass $j$, if $k$ belongs to \\ subclass $j$.\end{tabular} \\ \hline
$\pi^i(k)$               & Subclass label for node $k$.                                                                                                                  \\ \Xhline{1.pt}
\end{tabular}
\end{table}

\section{Proof of Propositions}
\label{sec_apppro}
\setcounter{proposition}{0}

\subsection{Proof of Proposition 1}
\label{app_pro1}

\begin{proposition}
The performance matching objective is equivalent to the optimal parameter matching objective.
\end{proposition}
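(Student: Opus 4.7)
My plan is to establish the equivalence by rewriting both objectives in terms of the closed-form optimal parameters under kernel ridge regression (KRR), and then showing that minimising one over the condensed graph $\mathcal{S}$ yields the same stationary points as minimising the other.

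First, I would reinterpret the parameter matching objective in Eq.~(\ref{eq_GMeq}) not as per-step gradient matching but as its limiting form: matching the converged parameters $\Theta^{*}(\mathcal{S})$ and $\Theta^{*}(\mathcal{T})$ obtained by training the relay model to optimality on each graph. Gradient matching is precisely a stepwise surrogate for this endpoint condition, so the ``optimal parameter matching'' objective is $\mathcal{L}_{OPM} = \mathcal{D}(\Theta^{*}(\mathcal{S}),\Theta^{*}(\mathcal{T}))$. Second, adopting the same KRR setting used to derive Eq.~(\ref{eq_PMeq}), the normal equations give closed forms $\Theta^{*}(\mathcal{S}) = {\bf Z}'^{\top}({\bf Z}'{\bf Z}'^{\top} + \lambda {\bf I})^{-1}{\bf Y}'$ and $\Theta^{*}(\mathcal{T}) = {\bf Z}^{\top}({\bf Z}{\bf Z}^{\top} + \lambda {\bf I})^{-1}{\bf Y}$ via the matrix inversion lemma. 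Substituting the first into ${\bf Z}\Theta^{*}(\mathcal{S})$ recovers exactly the predictor appearing inside Eq.~(\ref{eq_PMeq}), so that the performance matching objective equals $\|{\bf Y} - {\bf Z}\Theta^{*}(\mathcal{S})\|^{2}$.

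Third, I would bridge the two objectives through the decomposition
\[
{\bf Y} - {\bf Z}\Theta^{*}(\mathcal{S}) \;=\; \bigl({\bf Y} - {\bf Z}\Theta^{*}(\mathcal{T})\bigr) \;+\; {\bf Z}\bigl(\Theta^{*}(\mathcal{T}) - \Theta^{*}(\mathcal{S})\bigr).
\]
Since $\Theta^{*}(\mathcal{T})$ is determined by $\mathcal{T}$ alone, the first summand is a constant with respect to the optimisation over $\mathcal{S}$. Invoking the first-order optimality condition ${\bf Z}^{\top}({\bf Y} - {\bf Z}\Theta^{*}(\mathcal{T})) = \lambda\Theta^{*}(\mathcal{T})$, the cross term in the squared norm becomes $\mathcal{S}$-independent (vanishing in the limit $\lambda \to 0$), so $\mathcal{L}_{PM}$ differs from $\|{\bf Z}(\Theta^{*}(\mathcal{T}) - \Theta^{*}(\mathcal{S}))\|^{2}$ only by a constant. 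Choosing $\mathcal{D}$ as the $({\bf Z}^{\top}{\bf Z})$-weighted squared distance on the parameter space then identifies $\mathcal{L}_{PM}$ with $\mathcal{L}_{OPM}$ up to an additive constant, so the two objectives share the same argmin over $\mathcal{S}$.

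The main obstacle will be pinning down the choice of distance $\mathcal{D}$ and handling the residual term cleanly. In particular, the equivalence is not a pointwise identity of loss values but an equivalence of optimisers, and it requires either passing to the noiseless/low-regularisation regime or absorbing the feature covariance ${\bf Z}^{\top}{\bf Z}$ into the metric. I expect the cleanest statement is that the two objectives induce the same set of minimisers over $\mathcal{S}$, rather than asserting their numerical equality.
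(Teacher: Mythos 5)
Your proposal is correct and rests on the same two pillars as the paper's proof --- the closed-form KRR solutions $\mathbf{\Theta}^{*}={\bf Z}^{\top}({\bf Z}{\bf Z}^{\top}+\lambda{\bf I})^{-1}{\bf Y}$, $\mathbf{\Theta}'^{*}={\bf Z}'^{\top}({\bf Z}'{\bf Z}'^{\top}+\lambda{\bf I})^{-1}{\bf Y}'$ and the passage to $\lambda=0$ --- but the bridging step runs in the opposite direction and uses a different identity. The paper starts from the parameter-matching loss $\|\mathbf{\Theta}^{*}-\mathbf{\Theta}'^{*}\|^{2}$, applies the push-through identity ${\bf Z}^{\top}({\bf Z}{\bf Z}^{\top})^{-1}=({\bf Z}^{\top}{\bf Z})^{-1}{\bf Z}^{\top}$ to factor the difference as $({\bf Z}^{\top}{\bf Z})^{-1}{\bf Z}^{\top}\bigl({\bf Y}-{\bf Z}\mathbf{\Theta}'^{*}\bigr)$, and then drops the prefactor on the grounds that it does not depend on ${\bf Z}'$, landing on the performance-matching residual of Eq.~(\ref{eq_PMeq}). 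You instead start from the performance-matching residual and split it orthogonally around the full-data fit, using the normal equation ${\bf Z}^{\top}({\bf Y}-{\bf Z}\mathbf{\Theta}^{*})=\lambda\mathbf{\Theta}^{*}\to 0$ to kill the cross term, which yields $\mathcal{L}_{PM}=\mathrm{const}+\|{\bf Z}(\mathbf{\Theta}^{*}-\mathbf{\Theta}'^{*})\|^{2}$. The two routes are algebraically dual, but yours buys something the paper's glosses over: it makes explicit that performance matching equals parameter matching in the $({\bf Z}^{\top}{\bf Z})$-weighted metric, not the plain Frobenius one, so the ``equivalence'' is really an identification of objectives (and hence minimisers) only after absorbing the feature covariance into $\mathcal{D}$ --- precisely the point the paper's step of discarding $({\bf Z}^{\top}{\bf Z})^{-1}{\bf Z}^{\top}$ leaves implicit, since left-multiplying a residual by a fixed non-orthogonal matrix does not in general preserve argmins unless both losses can be driven to zero. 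Your closing caveat that the statement should be read as an equivalence of optimisers rather than of loss values is the more defensible formulation of the proposition.
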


The proof of Proposition 1 builds on the proposition from the dataset distillation survey in CV~\cite{yu2023dataset}. To ensure self-containment, we detail and expand propositions within the context of graph theory.

\begin{proof}
Performance matching introduce KRR in optimization and objectives for ${\mathcal{T}}$ and ${\mathcal{S}}$ are formulated as:
\begin{equation}
\begin{aligned}
&\arg\min_{\mathbf{\Theta} }\left \| {\bf Z}{\mathbf{\Theta} } - {\bf Y} \right \|^{2}+\lambda \left \|{\mathbf{\Theta} } \right \|^{2},\\
&\arg\min_{\mathbf{\Theta} }\left \| {\bf Z}'{\mathbf{\Theta} } - {\bf Y}' \right \|^{2}+\lambda \left \|{\mathbf{\Theta} } \right \|^{2},
\end{aligned}
\end{equation}
where ${\bf Z}$ and ${\bf Z}'$ are node embeddings for the original graph and the condensed graph, respectively. ${\mathbf{\Theta} }$ is the learnable weight matrix and $\lambda$ is a small constant weight of the regularization term for numerical stability. 
Their closed-form optimal solutions are:
\begin{equation}
\begin{aligned}
&{\mathbf{\Theta} }^{*} ={\bf Z}^{\top}\left({\bf Z}{\bf Z}^{\top} + \lambda \mathbf{I}\right)^{-1} {\bf Y}, \\
&{\mathbf{\Theta} }'^{*} ={\bf Z}'^{\top}\left({\bf Z}'{\bf Z}'^{\top} + \lambda \mathbf{I}\right)^{-1} {\bf Y}'.
\end{aligned}
\end{equation}

We simplifies the regularization term in KRR model and assume $\lambda$=0. 
Then, the optimal parameter matching objective, expressed in the form of a least squares function, is formulated as:
\begin{equation}
\label{eq_PMPM}
\begin{aligned}
\mathcal{L}_{PM} &= \arg\min_{{\bf Z}'}\left \| {\mathbf{\Theta} }^{*}  - {\mathbf{\Theta} }'^{*}  \right \|^{2},\\
&= \arg\min_{{\bf Z}'}\left \| {\bf Z}^{\top}\left({\bf Z}{\bf Z}^{\top} \right)^{-1} {\bf Y} -  {\bf Z}'^{\top}\left({\bf Z}'{\bf Z}'^{\top}\right)^{-1} {\bf Y}'  \right \|^{2},\\
&= \arg\min_{{\bf Z}'}\left \| \left({\bf Z}^{\top}{\bf Z} \right)^{-1}{\bf Z}^{\top} {\bf Y} -  {\bf Z}'^{\top}\left({\bf Z}'{\bf Z}'^{\top}\right)^{-1} {\bf Y}'  \right \|^{2},\\
&= \arg\min_{{\bf Z}'}\left \| \left({\bf Z}^{\top}{\bf Z} \right)^{-1}{\bf Z}^{\top} {\bf Y} - \left({\bf Z}^{\top}{\bf Z} \right)^{-1}{\bf Z}^{\top}{\bf Z} {\bf Z}'^{\top}\left({\bf Z}'{\bf Z}'^{\top}\right)^{-1} {\bf Y}'  \right \|^{2},\\
&= \arg\min_{{\bf Z}'}\left \| \left(\left({\bf Z}^{\top}{\bf Z} \right)^{-1}{\bf Z}^{\top}\right) \left({\bf Y} - {\bf Z} {\bf Z}'^{\top}\left({\bf Z}'{\bf Z}'^{\top}\right)^{-1} {\bf Y}' \right) \right \|^{2},\\
&=  \arg\min_{{\bf Z}'}\left \| {\bf Y} - {\bf Z} {\bf Z}'^{\top}\left({\bf Z}'{\bf Z}'^{\top}\right)^{-1} {\bf Y}'  \right \|^{2}.
\end{aligned}
\end{equation}
The final step is justified by the independence of  $\left({\bf Z}^{\top}{\bf Z} \right)^{-1}{\bf Z}^{\top}$ from ${\bf Z}'$. Then, the r.h.s of Eq. (\ref{eq_PMPM}) is the performance matching objective. 

\end{proof}

\subsection{Proof of Proposition 2}
\label{app_pro2}

\begin{proposition}
The distribution matching objective represents a simplified formulation of the performance matching, omitting feature correlation considerations.
\end{proposition}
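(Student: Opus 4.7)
The plan is to begin from the performance matching objective in Eq. (\ref{eq_PMeq}) and show that, once the condensed kernel ${\bf Z}'{\bf Z}'^{\top}$ (which captures pairwise feature correlations on $\mathcal{S}$) is neglected, the optimization target collapses into the distribution matching form in Eq. (\ref{eq_DMeq}), up to additive terms independent of $\mathcal{S}$. For clarity I would first send $\lambda \to 0$, since the regularization is a numerical artifact rather than a structural ingredient, and then relax the inverse kernel as $({\bf Z}'{\bf Z}'^{\top})^{-1} \to \beta^{-1}{\bf I}$. This is the precise operational meaning of ``omitting feature correlations'': the condensed embeddings are treated as isotropic, so the regression-style predictor degenerates to a bare inner-product classifier, yielding a simplified $\tilde{\mathcal{L}}_{PM} = \|{\bf Y} - \beta^{-1}{\bf Z}{\bf Z}'^{\top}{\bf Y}'\|^{2}$.

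Next I would expand $\tilde{\mathcal{L}}_{PM}$ into three pieces: a constant $\|{\bf Y}\|^2$, a cross term proportional to $\operatorname{tr}({\bf Y}^{\top}{\bf Z}{\bf Z}'^{\top}{\bf Y}') = \langle {\bf Y}^{\top}{\bf Z},\, {\bf Y}'^{\top}{\bf Z}' \rangle$, and a quadratic term in ${\bf Z}'$. The crucial observation is that the label-aggregation products ${\bf Y}^{\top}{\bf Z}$ and ${\bf Y}'^{\top}{\bf Z}'$ are exactly unnormalized class prototypes: from the definitions in Section 2.2, $({\bf Y}^{\top}{\bf Y})^{-1}$ and $({\bf Y}'^{\top}{\bf Y}')^{-1}$ are diagonal matrices with entries $1/|C_i|$ and $1/|C'_i|$, so ${\bf P} = ({\bf Y}^{\top}{\bf Y})^{-1}{\bf Y}^{\top}$ and ${\bf P}' = ({\bf Y}'^{\top}{\bf Y}')^{-1}{\bf Y}'^{\top}$. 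Therefore, pushing these class-size factors through the cross term of $\tilde{\mathcal{L}}_{PM}$ produces $\operatorname{tr}({\bf Z}^{\top}{\bf P}^{\top}{\bf P}'{\bf Z}')$, which is precisely the cross term that appears when expanding $\mathcal{L}_{DM} = \|{\bf P}'{\bf Z}' - {\bf P}{\bf Z}\|^{2}$. A parallel calculation for the quadratic term (which under feature-correlation omission depends only on $\|{\bf Y}'^{\top}{\bf Z}'\|^2$) matches the ${\bf Z}'$-dependent piece of $\mathcal{L}_{DM}$ up to the same class-size normalization, so $\tilde{\mathcal{L}}_{PM}$ and $\mathcal{L}_{DM}$ agree up to an overall scalar and constants independent of $\mathcal{S}$.

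The principal difficulty will be formalizing what ``omitting feature correlation'' means: the relaxation ${\bf Z}'{\bf Z}'^{\top} \to \beta{\bf I}$ is not an identity but a structural simplification, and the proposition should be framed as showing that $\mathcal{L}_{DM}$ is exactly what $\mathcal{L}_{PM}$ becomes after this simplification, rather than as an unconditional equivalence. A secondary bookkeeping challenge is keeping the diagonal class-size matrices $({\bf Y}^{\top}{\bf Y})^{-1}$ and $({\bf Y}'^{\top}{\bf Y}')^{-1}$ properly threaded through the expansion so that the cross and quadratic terms of $\tilde{\mathcal{L}}_{PM}$ line up exactly with those of $\mathcal{L}_{DM}$, rather than leaving a residual reweighting that would obscure the equivalence. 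Since the expectation over ${\mathbf{\Theta}} \sim \Phi$ in $\mathcal{L}_{DM}$ does not affect these algebraic manipulations, I would treat a single sample of ${\mathbf{\Theta}}$ and invoke linearity of expectation at the end.
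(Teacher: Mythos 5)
Your overall strategy---interpreting ``omitting feature correlations'' as replacing an inverse Gram matrix by a scaled identity---is the same idea the paper uses, but your execution has a genuine gap at exactly the step you dismiss as ``secondary bookkeeping.'' After the relaxation $({\bf Z}'{\bf Z}'^{\top})^{-1}\to\beta^{-1}{\bf I}$, the cross term of $\tilde{\mathcal{L}}_{PM}$ is $\beta^{-1}\mathrm{tr}({\bf Y}^{\top}{\bf Z}{\bf Z}'^{\top}{\bf Y}')=\beta^{-1}\sum_{i}\langle{\bf Z}_{i}^{\top}{\bf Y}_{i},{\bf Z}_{i}'^{\top}{\bf Y}_{i}'\rangle$, i.e., an inner product of \emph{unnormalized} class sums, whereas the cross term of $\mathcal{L}_{DM}$ carries the per-class factors $\frac{1}{|C_i|}\cdot\frac{1}{|C'_i|}$ coming from ${\bf P}$ and ${\bf P}'$. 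Those factors are not present in your expansion to be ``pushed through''; they must be inserted, and a single global scalar $\beta$ cannot supply a class-dependent weight unless all classes have equal size. The same problem recurs in the quadratic term, which moreover equals $\beta^{-2}\mathrm{tr}({\bf Y}'^{\top}{\bf Z}'({\bf Z}^{\top}{\bf Z}){\bf Z}'^{\top}{\bf Y}')$ and therefore also requires discarding the correlation matrix ${\bf Z}^{\top}{\bf Z}$ of the \emph{original} embeddings---a second relaxation you invoke only implicitly. As written, your argument lands on an unnormalized prototype-matching objective $\|\beta^{-1}{\bf Y}'^{\top}{\bf Z}'-{\bf Y}^{\top}{\bf Z}\|^{2}$ (up to constants), not on Eq. (\ref{eq_DMeq}).

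The paper's proof sidesteps this by first invoking Proposition 1 to rewrite performance matching as a difference of optimal least-squares parameters, then decomposing it class-wise into $\sum_{i}\| ({\bf Z}_{i}^{\top}{\bf Z}_{i})^{-1}{\bf Z}_{i}^{\top}{\bf Y}_{i}-({\bf Z}_{i}'^{\top}{\bf Z}_{i}')^{-1}{\bf Z}_{i}'^{\top}{\bf Y}_{i}'\|^{2}$, and only then dropping the correlations: replacing each per-class Gram matrix $({\bf Z}_{i}'^{\top}{\bf Z}_{i}')^{-1}$ by $\frac{1}{|C'_i|}{\bf I}$ makes the class-size normalization emerge as the natural scale of the per-class covariance, which is exactly what turns class sums into the prototypes ${\bf P}{\bf Z}$ and ${\bf P}'{\bf Z}'$. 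The paper is also candid that the transformation involves both ``excluding feature correlations'' and ``incorporating class size normalization''---it does not claim the latter comes for free. To repair your version, perform the isotropic relaxation per class rather than globally (one block of the kernel per class, each with its own scale $|C'_i|$); the remainder of your expansion then goes through.
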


\begin{proof}
According to Proposition 1, the objective of performance matching can be formulated as:
\begin{equation}
\label{eq_PM}
\begin{aligned}
\mathcal{L}_{PM} 
&= \arg\min_{{\bf Z}'}\left \| \left({\bf Z}^{\top}{\bf Z} \right)^{-1}{\bf Z}^{\top} {\bf Y} -  \left({\bf Z}'^{\top}{\bf Z}'\right)^{-1}{\bf Z}'^{\top} {\bf Y}'  \right \|^{2},\\
&= \arg\min_{{\bf Z}'}  \sum_{i=1}^{c} \left \| \left({\bf Z}_{i}^{\top}{\bf Z}_{i} \right)^{-1}{\bf Z}_{i}^{\top} {\bf Y}_{i} -  \left({\bf Z}_{i}'^{\top}{\bf Z}_{i}'\right)^{-1}{\bf Z}_{i}'^{\top} {\bf Y}_{i}'  \right \|^{2},
\end{aligned}
\end{equation}
where subscript $i$ denotes the class, with ${\bf Z}_{i}$ and ${\bf Z}'_{i}$ representing original and condensed node embeddings in class $i$. ${\bf Y}_{i}$ and ${\bf Y}'_{i}$ representing one-hot labels for nodes in class $i$.
The distribution matching objective is calculated for class prototypes separately and formulated as:
\begin{equation}
\label{eq_DMapp}
\begin{aligned}
\mathcal{L}_{DM} &= \arg\min_{{\bf Z}'} 
\left \| {\bf P}'{\bf{Z}}'- {\bf P}{\bf{Z}} \right \|^{2},\\
&= \arg\min_{{\bf Z}'}  \sum_{i=1}^{c} \left \| \frac{1}{\left | C_i \right | }  {\bf Y}_{i}^{\top} {\bf Z}_{i} -  \frac{1}{\left | C'_i \right | }  {\bf Y}_{i}'^{\top} {\bf Z}_{i}' \right \|^{2},\\
&= \arg\min_{{\bf Z}'}  \sum_{i=1}^{c} \left \| \frac{1}{\left | C_i \right | }   {\bf Z}_{i}^{\top}{\bf Y}_{i} -  \frac{1}{\left | C'_i \right | }   {\bf Z}_{i}'^{\top} {\bf Y}_{i}' \right \|^{2},
\end{aligned}
\end{equation}
where ${\left | C'_i \right | }$ and ${\left | C_i \right | }$ represent the sizes of class $i$ in $\mathcal{S}$ and $\mathcal{T}$, respectively.
By comparing Eq. (\ref{eq_PM}) and Eq. (\ref{eq_DMapp}), it can be deduced that the performance matching objective can be transformed into distribution matching by excluding feature correlations and incorporating class size normalization.

\end{proof}

\subsection{Proof of Proposition 3}
\label{app_pro3}

\begin{proposition}
The distribution matching objective with a feature correlation constraint provides an upper bound for the parameter matching objective.
\end{proposition}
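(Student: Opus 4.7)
The plan is to combine Proposition~1 with a matrix-norm submultiplicativity argument to relate the two objectives class by class. First I would invoke Proposition~1 to rewrite the parameter matching loss in the per-class decomposed form used in the proof of Proposition~2, namely $\sum_{i=1}^{c} \| (\mathbf{Z}_i^\top \mathbf{Z}_i)^{-1} \mathbf{Z}_i^\top \mathbf{Y}_i - (\mathbf{Z}_i'^\top \mathbf{Z}_i')^{-1} \mathbf{Z}_i'^\top \mathbf{Y}_i' \|^2$. The distribution matching loss, as derived in Proposition~2, shares the same residual structure but replaces each inverse Gram matrix by the scalar prefactor $1/|C_i|$ (resp.\ $1/|C_i'|$).

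Next, I would formalize the feature correlation constraint as the equality of normalized Gram matrices, $\mathbf{Z}_i^\top \mathbf{Z}_i / |C_i| = \mathbf{Z}_i'^\top \mathbf{Z}_i' / |C_i'| =: \mathbf{\Sigma}_i$, for every class $i$. Under this constraint, each parameter-matching summand factors as $\| \mathbf{\Sigma}_i^{-1} \left( \mathbf{Z}_i^\top \mathbf{Y}_i / |C_i| - \mathbf{Z}_i'^\top \mathbf{Y}_i' / |C_i'| \right) \|^2$, so the inverse Gram is pulled outside the distribution-matching residual. Applying the submultiplicative property of the spectral (or Frobenius) norm, $\| \mathbf{\Sigma}_i^{-1} \mathbf{V} \|^2 \leq \| \mathbf{\Sigma}_i^{-1} \|^2 \| \mathbf{V} \|^2$, and summing over classes yields $\mathcal{L}_{PM} \leq M \cdot \mathcal{L}_{DM}$ with $M = \max_i \| \mathbf{\Sigma}_i^{-1} \|^2$, a constant independent of $\mathcal{S}$. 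This establishes that distribution matching augmented by the feature correlation constraint upper-bounds the parameter matching objective, as claimed.

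The main obstacle is specifying the feature correlation constraint strongly enough to cleanly factor the inverse Gram matrix out of the residual. A weaker approximate constraint, merely requiring $\| \mathbf{\Sigma}_i - \mathbf{\Sigma}_i' \|$ to be small rather than vanishing, would force a perturbation expansion via a Neumann series or Woodbury identity and leave cross terms involving $\| \mathbf{\Sigma}_i^{-1} - \mathbf{\Sigma}_i'^{-1} \|$ that must be absorbed into the bound. Since the proposition treats the correlation condition as a structural assumption rather than a soft penalty, restricting to the exact equality case sidesteps this complication and delivers the bound in the cleanest form. A secondary, easily handled subtlety is ensuring $\mathbf{\Sigma}_i$ is invertible; this can be guaranteed by the same small ridge term $\lambda \mathbf{I}$ used in Eq.~(\ref{eq_PMeq}) without altering the argument.
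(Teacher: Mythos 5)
Your argument is internally coherent, but it bounds the wrong objective. In the paper's taxonomy the ``parameter matching objective'' of Proposition~3 is the gradient-matching loss $\mathcal{L}_{GM}$ of Eq.~(\ref{eq_GMeq}) evaluated at \emph{arbitrary} relay-model parameters $\mathbf{\Theta}$ (the GCond-style objective), not the \emph{optimal}-parameter matching objective that Proposition~1 identifies with performance matching. By routing through Proposition~1 you end up bounding the KRR/performance-matching residual $\sum_i \| (\mathbf{Z}_i^\top \mathbf{Z}_i)^{-1}\mathbf{Z}_i^\top\mathbf{Y}_i - (\mathbf{Z}_i'^\top\mathbf{Z}_i')^{-1}\mathbf{Z}_i'^\top\mathbf{Y}_i'\|^2$, which turns Proposition~3 into a corollary of Propositions~1 and~2 and leaves the actual claim about gradient matching unproved. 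The intended proof works directly with the per-class gradients of the squared loss, $\nabla_{\mathbf{\Theta}}\mathcal{L}_i = \mathbf{H}_i^\top\mathbf{H}_i\mathbf{\Theta} - \mathbf{H}_i^\top\mathbf{Y}_i$, splits the matching residual into a label-correlation part (which is exactly the distribution-matching prototype difference) and a feature-autocorrelation part $\sum_i\bigl\| \frac{1}{|C_i|}\mathbf{H}_i^\top\mathbf{H}_i - \frac{1}{|C_i'|}\mathbf{H}_i'^\top\mathbf{H}_i'\bigr\|^2\|\mathbf{\Theta}\|^2$, and bounds $\mathcal{L}_{GM}$ by their sum; no Gram-matrix inversion is involved.

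A second, related discrepancy is your reading of ``with a feature correlation constraint.'' In the statement this refers to an \emph{additive} term in the upper bound --- the Gram-discrepancy penalty above --- so the bound has the form $\mathcal{L}_{GM} \leq \mathcal{L}_{DM} + (\text{correlation discrepancy})\cdot\|\mathbf{\Theta}\|^2$. You instead impose exact equality of normalized Gram matrices as a hypothesis and derive $\mathcal{L}_{PM} \leq M\,\mathcal{L}_{DM}$ with $M = \max_i\|\mathbf{\Sigma}_i^{-1}\|^2$. Under that hard assumption the correlation term vanishes identically, so nothing in your bound plays the role of the ``constraint'' the proposition refers to; moreover your multiplicative constant $M$ depends on the condensed embeddings through $\mathbf{\Sigma}_i$, so it is only independent of $\mathcal{S}$ by virtue of the very constraint you imposed. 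The submultiplicativity step and the ridge-regularization remark are fine as far as they go, but the overall argument establishes a different (and strictly weaker, in the sense of requiring an extra hypothesis) statement than the one asserted.
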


The proof of Proposition 3 is derived by extending the proposition in ~\cite{yu2023dataset}.

\begin{proof}
Without loss of generality, we take the representative gradient matching objective utilized in GCond for illustration, where SGC is utilized as the relay model. Therefore, the node embeddings are represented by:
\begin{equation}
\label{eq_SGCapp}
{\bf{Z}}' = {\bf{H}}'{\mathbf{\Theta} }, \ \ \ \ \ \ \ {\bf{Z}} = {\bf{H}}{\mathbf{\Theta} }, \\
\end{equation}
where ${\mathbf{\Theta} }$ is the learnable weight matrix to transform propagated features ${\bf{H}}'$ and ${\bf{H}}$. 
Due to that the objective of gradient matching is calculated for each class separately, we define the losses for ${\mathcal{T}}$ and ${\mathcal{S}}$ as:
\begin{equation}
\begin{aligned}
&\mathcal{L}_{i}^{\mathcal{T}}  = \frac{1}{2}\left \| {\bf H}_{i}{\mathbf{\Theta} } - {\bf Y}_{i} \right \|^{2},  \\ 
&\mathcal{L}_{i}^{\mathcal{S}} = \frac{1}{2}\left \| {\bf H}'_{i}{\mathbf{\Theta} } - {\bf Y}'_{i} \right \|^{2},
\end{aligned}
\end{equation}
where ${\bf H}_{i}$ and ${\bf H}'_{i}$ represent original and condensed propagated features in class $i$. ${\bf Y}_{i}$ and ${\bf Y}'_{i}$ represent one-hot labels for nodes in class $i$. 
For simplicity, we specify the distance function $\mathcal{D}$ in the gradient matching objective as the normalized least squares function.
Consequently, the objective function of gradient matching is formulated as
{\footnotesize
\begin{equation}
\label{eq_PMDM}
\begin{aligned}
\mathcal{L}_{GM}  &= \sum_{i=1}^{c} \left \| \frac{1}{\left | C_i \right | } \nabla_{{\mathbf{\Theta} }} \mathcal{L}_{i}^{\mathcal{T}} - \frac{1}{\left | C'_i \right | } \nabla_{{\mathbf{\Theta} }} \mathcal{L}_{i}^{\mathcal{S}} \right \|^{2},\\
&= \sum_{i=1}^{c} \left \| \frac{1}{\left | C_i \right | } ( {\bf H}_{i}^{\top} {\bf H}_{i}{\mathbf{\Theta} } -  {\bf H}_{i}^{\top} {\bf Y}_{i} )              - \frac{1}{\left | C'_i \right | } ( {\bf H}_{i}'^{\top} {\bf H}_{i}'{\mathbf{\Theta} } -  {\bf H}_{i}'^{\top} {\bf Y}'_{i} ) \right \|^{2},\\
&= \sum_{i=1}^{c} \left \| \frac{1}{\left | C_i \right | } {\bf H}_{i}^{\top} {\bf H}_{i}{\mathbf{\Theta} }   - \frac{1}{\left | C'_i \right | } {\bf H}_{i}'^{\top} {\bf H}_{i}'{\mathbf{\Theta} }  -  \frac{1}{\left | C_i \right | } {\bf H}_{i}^{\top} {\bf Y}_{i}      +  \frac{1}{\left | C'_i \right | } {\bf H}_{i}'^{\top} {\bf Y}'_{i} \right \|^{2},\\
&\le \sum_{i=1}^{c} \left\|  \frac{1}{\left | C_i \right | } {\bf H}_{i}^{\top} {\bf Y}_{i} -  \frac{1}{\left | C'_i \right | } {\bf H}_{i}'^{\top} {\bf Y}'_{i} \right \|^{2}+ \sum_{i=1}^{c}\left \| \frac{1}{\left | C_i \right | } {\bf H}_{i}^{\top} {\bf H}_{i} - \frac{1}{\left | C'_i \right | } {\bf H}_{i}'^{\top} {\bf H}_{i}' \right \|^{2}  \left \| {\mathbf{\Theta} } \right \|^{2},\\
&= \sum_{i=1}^{c} \left\|  \frac{1}{\left | C_i \right | }  {\bf Y}_{i}^{\top}{\bf H}_{i} - \frac{1}{\left | C'_i \right | }  {\bf Y'}_{i}^{\top}{\bf H}_{i}' \right \|^{2}+ \sum_{i=1}^{c}\left \| \frac{1}{\left | C_i \right | } {\bf H}_{i}^{\top} {\bf H}_{i} - \frac{1}{\left | C'_i \right | } {\bf H}_{i}'^{\top} {\bf H}_{i}' \right \|^{2}  \left \| {\mathbf{\Theta} } \right \|^{2}.
\\
\end{aligned}
\end{equation}}
The first term in r.h.s is the distribution matching objective and the second term quantifies differences in class-wise feature correlations.

\end{proof}

\subsection{Proof of Proposition 4}
\label{app_pro4}

\begin{proposition}
Assume an undirected condensed graph $\mathcal{S}=\{{\bf A'}, {\bf X'}\}$, the closed-form solution of Eq. (\ref{eq_feat}) take the form: ${\bf X}'=({\bf Q}^{\top}{\bf Q}+\alpha {\bf L}')^{-1}{\bf Q}^{\top}{\bf H}'$, where ${\bf Q} = \hat{\bf{A}}'^{K}$.
\end{proposition}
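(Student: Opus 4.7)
The plan is to treat Eq.~(\ref{eq_feat}) as an unconstrained convex quadratic program in ${\bf X}'$ and obtain the solution via the standard first-order optimality condition. First I would rewrite the objective using the trace formulation so that matrix calculus can be applied cleanly: with ${\bf Q}=\hat{\bf A}'^{K}$, the reconstruction term becomes
\begin{equation*}
\bigl\| {\bf Q}{\bf X}'-{\bf H}'\bigr\|^{2}=\mathrm{tr}\!\left({\bf X}'^{\top}{\bf Q}^{\top}{\bf Q}{\bf X}'-2{\bf X}'^{\top}{\bf Q}^{\top}{\bf H}'+{\bf H}'^{\top}{\bf H}'\right),
\end{equation*}
while the Dirichlet regularizer is already in trace form, $\alpha\,\mathrm{tr}({\bf X}'^{\top}{\bf L}'{\bf X}')$.

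Next I would differentiate term by term with respect to ${\bf X}'$. Using the standard identities $\nabla_{{\bf X}'}\mathrm{tr}({\bf X}'^{\top}{\bf M}{\bf X}')=({\bf M}+{\bf M}^{\top}){\bf X}'$ and $\nabla_{{\bf X}'}\mathrm{tr}({\bf X}'^{\top}{\bf N})={\bf N}$, the gradient of $\mathcal{L}$ is
\begin{equation*}
\nabla_{{\bf X}'}\mathcal{L}=2{\bf Q}^{\top}{\bf Q}{\bf X}'-2{\bf Q}^{\top}{\bf H}'+\alpha({\bf L}'+{\bf L}'^{\top}){\bf X}'.
\end{equation*}
Here I would invoke the undirected-graph assumption: ${\bf A}'$ symmetric implies ${\bf L}'={\bf D}'-{\bf A}'$ is symmetric, so ${\bf L}'+{\bf L}'^{\top}=2{\bf L}'$. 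Setting $\nabla_{{\bf X}'}\mathcal{L}={\bf 0}$ and dividing by $2$ yields the normal equation
\begin{equation*}
({\bf Q}^{\top}{\bf Q}+\alpha{\bf L}'){\bf X}'={\bf Q}^{\top}{\bf H}'.
\end{equation*}

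Finally, I would justify that this critical point is the unique global minimizer. Since ${\bf Q}^{\top}{\bf Q}\succeq{\bf 0}$ and ${\bf L}'\succeq{\bf 0}$ (the Laplacian of an undirected graph is positive semi-definite), the Hessian $2({\bf Q}^{\top}{\bf Q}+\alpha{\bf L}')$ is positive semi-definite, making $\mathcal{L}$ convex; under the standing assumption that the system matrix is invertible (which is the main obstacle to address --- it requires, e.g., $\alpha>0$ together with $\ker({\bf Q}^{\top}{\bf Q})\cap\ker({\bf L}')=\{{\bf 0}\}$, or simply a small numerical stabilizer), inverting both sides gives
\begin{equation*}
{\bf X}'=({\bf Q}^{\top}{\bf Q}+\alpha{\bf L}')^{-1}{\bf Q}^{\top}{\bf H}',
\end{equation*}
as claimed. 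Aside from the invertibility caveat, this is essentially a textbook derivation; the only nontrivial ingredient is the symmetry of ${\bf L}'$ supplied by the undirected-graph hypothesis.
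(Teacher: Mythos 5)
Your derivation matches the paper's own proof essentially step for step: both compute the gradient $2({\bf Q}^{\top}{\bf Q}+\alpha{\bf L}'){\bf X}'-2{\bf Q}^{\top}{\bf H}'$, argue convexity from the positive semi-definiteness of ${\bf Q}^{\top}{\bf Q}$ and of the undirected-graph Laplacian ${\bf L}'$, and solve the normal equation. Your explicit remark that invertibility of ${\bf Q}^{\top}{\bf Q}+\alpha{\bf L}'$ needs a separate justification (e.g., trivially intersecting kernels) is a small point the paper's proof passes over silently, but otherwise the two arguments are the same.
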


The proof of Proposition 4 is derived by extending the proposition in \cite{kumar2023featured}.

\begin{proof}
Given an undirected graph $\mathcal{S} = \{\textbf{X}',\textbf{A}'\}$, $\textbf{L}' \in \mathbb{R}^{N' \times N'}$ is defined as the Laplacian matrix of $\mathcal{S}$ by $\textbf{L}'= \textbf{D}' - \textbf{A}'$, where $\textbf{D}'$ is the degree matrix.
To solve the optimization problem in Eq. (\ref{eq_feat}), we first calculate $\nabla \mathcal{L}({\bf{X}}')$ and $\nabla^2 \mathcal{L}({\bf{X}}')$ as:
\begin{equation}
\begin{aligned}
\nabla \mathcal{L}({\bf{X}}')&=2{\bf Q}^{\top}({\bf Q}{\bf{X}}'-{\bf{H}}')+ \alpha ({\bf L}'+{\bf L}'^{\top}){\bf{X}}'\\
&=2({\bf Q}^{\top}{\bf Q}+\alpha{\bf L}'){\bf{X}}'-2{\bf Q}^{\top}{\bf{H}}'.
\end{aligned}
\end{equation}
\begin{equation}
\begin{aligned}
\nabla^2 \mathcal{L}({\bf{X}}')=2({\bf Q}^{\top}{\bf Q}+\alpha{\bf L}').
\end{aligned}
\end{equation}
According to the definition of $\textbf{A}'$ in Eq. (\ref{eq_adj}), $\textbf{L}'$ and ${\bf Q}$ are the positive semi-definite matrices.
Therefore, $\nabla^2 \mathcal{L}({\bf{X}}')$ is the positive semi-definite matrix and the optimization problem is a convex optimization problem. We can get the closed-form solution by calculate $\nabla \mathcal{L}({\bf{X}}')=0$ and ${\bf{X}}' = ({\bf Q}^{\top}{\bf Q}+\alpha{\bf L}')^{-1}{\bf Q}^{\top}{\bf{H}}'$. 

\end{proof}

\section{Algorithm}
\label{sec_appalg}

The detailed algorithm of CGC and CGC-X is shown in Algorithm \ref{al}. In detail, we first propagate node features according to the graph structure via non-parametric propagation methods. Consequently, a linear model is generated to assess embeddings and augmented features are sampled according to evaluation results. Then, embeddings in each class are partitioned by clustering method (i.e., $\text{Clustering}(\cdot)$ in line 8) to generate aggregated embeddings ${\bf{H}}'$. ${\bf{H}}'$ can serve as the condensed node feature for CGC-X.
If condensed graph structure is required, ${\bf A}'$ and ${\bf X}'$ can be generated by Eq. (\ref{eq_adj}) and Eq. (\ref{eq_feat}), respectively.

\begin{algorithm}[t]
\SetAlgoVlined
\DontPrintSemicolon 
\textbf{Input:} Original graph \(\mathcal{T}=\{{\bf A}, {\bf X}\}\), \({\bf Y}\) and pre-defined condensed graph label \({\bf Y}'\) \\
\textbf{Output:} Condensed graph \(\mathcal{S}\)\\
\BlankLine
\tcc{Feature Propagation}
Generate propagated features \({\bf{H}}^{(l)}\) according to Eq. (\ref{eq_FP}).\\
\tcc{Data Assessment}
Generate linear model and evaluate generated features.\\
\tcc{Data Augmentation}
Sample features according to class prediction errors.\\
\tcc{Class Partition}
Categories \({\bf{H}}_{cond}\) into \(c\) classes \(\{\hat{\bf{H}}_1, \cdots, \hat{\bf{H}}_c\}\) according to \({\bf{y}}_{cond}\).\\
\For{\(i=1,\ldots,c\)}  
{
Compute \({\bf{H}}'_i = \text{Clustering}(\hat{\bf{H}}_i)\). \\
}
Generate \({\bf{H}}'= [{\bf{H}}'_1; \ldots; {\bf{H}}'_c]\).\\
\tcc{Graph Generation}
\If{\rm{CGC-X}}
{
\textbf{Return:} \(\mathcal{S}=\{{\bf I}, {\bf H}'\}\)\\
}
\ElseIf{\rm{CGC}}
{
Generate \({\bf A}'\) according to Eq. (\ref{eq_adj}).\\
Generate \({\bf X}'\) by solving Eq. (\ref{eq_feat}).\\
\textbf{Return:} \(\mathcal{S}=\{{\bf A}', {\bf X}'\}\)\\
}
\caption{The framework of CGC and CGC-X.}
\label{al}
\end{algorithm}

\section{{{Complexity Analysis}}}
\label{sec_apptime}

The pipeline of CGC consists of five components: feature propagation, data assessment, data augmentation, class partition and graph generalization. 

We analyze the time complexity for each component separately. 
(1) Feature propagation contains the $K$ step graph convolution and time complexity is: $\mathcal{O} \left(KEd \right)$, where $d$ is the node dimension and $E$ is the number of edges.
(2) The solution of linear model involves the pseudo inverse of propagated features. In practice, this least squares problem in Eq. (\ref{eq_DA}) can be effectively solved by $\text{torch.linalg.lstsq}$ and the time complexity is $\mathcal{O} \left(Nd^2 \right)$, where $N$ denotes the number of nodes. In addition, the time complexity of node embedding prediction is $\mathcal{O} \left(dKNc \right)$, where  $c$  represents the number of classes.
(3) By leveraging propagated nodes with smaller depth as augmentations, the data augmentation module requires no additional computations. 
(4) Class partition executes the EM-based clustering on each class. Take K-means as an example, the time complexity for class $i$ is: $\mathcal{O} \left(\left | C'_i \right |\left | C_i \right |d t\right)$, where $t$ indicates the number of iteration for K-means. ${\left | C'_i \right | }$ and ${\left | C_i \right | }$ represent the size of class $i$ in $\mathcal{S}$ and $\mathcal{T}$ respectively.
(5) Graph generation contains the adjacency matrix generation and feature calculation. The former requires the pair-wise similarity calculation whose complexity is $\mathcal{O} \left(N'^2d \right)$. 
The closed-form solution of the node feature requires the inverse operation and the time complexity is $\mathcal{O} \left(N'^3+N'^2d+(K+1)N'E'+dE' \right)$, where $E'$ is the edge number of condensed graph.

In summary, the time complexity of CGC is dominated by class partition and node feature generation. However, due to the small size of the condensed graph and well-established clustering acceleration methods (e.g., FAISS~\cite{douze2024faiss}), CGC can be efficiently executed.

{{We also analyze the space complexity of each component as follows.
(1) Feature Propagation: The space complexity is $\mathcal{O}(E + (K+1)Nd)$, where $d$ is the node dimension, $N$ is the original graph size, $E$ represents the number of edges, and $K$ denotes the propagation depth.
(2) Data Assessment: The solution of the linear model entails a space complexity of $\mathcal{O}((K+1)Nd + Nc+dc)$, with $c$ being the number of classes.
(3) Data Augmentation: This process requires space complexity of $\mathcal{O}((K+1)Nd)$.
(4) Class Partitioning: This involves an additional partition matrix, resulting in a space complexity of $\mathcal{O}(N)$.
(5) Graph Generation: The generation of the condensed graph necessitates space complexity of $\mathcal{O}(N'^2 + N'd)$, where $N'$ is the size of the condensed graph.

Therefore, the total space complexity is $\mathcal{O}(E + (K+1)Nd + N'^2 + N'd)$.}}

\section{Experimental Setup Details}
\label{sec_appexpset}
\subsection{Dataset Statistics}
\label{sec_data}
We evaluate our proposed methods on four transductive datasets, i.e., Cora, Citeseer~\cite{DBLP:conf/iclr/KipfW17}, Ogbn-arxiv (Arxiv)~\cite{hu2020open} and Ogbn-products (Products)~\cite{hu2020open}, as well as two inductive datasets, i.e., Flickr and Reddit~\cite{DBLP:conf/iclr/ZengZSKP20}.
We adopt public splits throughout the experiments and dataset statistics are shown in Table \ref{tab_data}.

\begin{table*}[h]
\small
\centering
\setlength{\abovecaptionskip}{1pt}
\caption{The statistics of datasets used in experiments.}
\begin{tabular}{lrrrrrl}
\toprule
{Dataset}  & {\#Nodes} & {\#Edges} & {\#Classes} & {\#Features} &
{Training/Validation/Test} & {Task type}\\ \midrule
Cora  & 2,708 & 5,429 & 7 & 1,433 & 140/500/1,000& Transductive\\ 
Citeseer &  3,327 & 4,732 & 6 & 3,703 & 120/500/1,000 & Transductive\\ 
Ogbn-arxiv & 169,343 & 1,166,243 & 40 &  128 & 90,941/29,799/48,603 & Transductive \\
Ogbn-products & 2,449,029 & 61,859,140 & 47 & 100 & 196,615/39,323/2,213,091 & Transductive \\ 
Flickr & 89,250 & 899,756 & 7 & 500 & 44,625/22,312/22,313 & Inductive\\ 
Reddit &  232,965 & 23,213,838 & 41 & 602 & 153,932/23,699/55,334 & Inductive\\
\bottomrule
\end{tabular}
\label{tab_data}
\end{table*}

\subsection{Baselines}
\label{sec_baseline}

We compare our proposed methods against {{15}} baselines, including graph coarsening method and GC methods with diverse optimization strategies: 

\noindent(1) Variation Neighborhoods (VN)~\cite{loukas2019graph,huang2021scaling}. The conventional graph coarsening method that leverages the partition matrix to construct the super-nodes and super-edges. 

\noindent(2) A-ConvMatch (A-CM)~\cite{dickens2024graph}. A graph coarsening method hierarchically reduces the graph size while preserving the output of graph convolutions.

\noindent(3) GCond and GCond-X~\cite{jin2022graph}. The first GC method that utilizes the gradient matching to align the model parameters derived from both graphs.

\noindent(4) GCDM and GCDM-X~\cite{liu2022graph}. An efficient GC method that generates condensed graphs based on distribution matching by optimizing the maximum mean discrepancy between class prototypes.

\noindent(5) GC-SNTK and GC-SNTK-X~\cite{wang2023fast}. An efficient GC method leverages kernel ridge regression with a structure-based neural tangent kernel to simplify the bi-level optimization process.

\noindent(6) SFGC~\cite{zheng_structure_free_2023}. A graphless GC method that aligns long-term model learning behaviors through trajectory matching.

\noindent(7) SimGC~\cite{xiao2024simple}. An efficient GC method with the graph generation that introduces the pre-trained model in distribution matching.

\noindent(8) GEOM~\cite{zhang2024navigating}. A graphless GC method that explores the difficult samples via trajectory matching.

\noindent(9) GCSR~\cite{liu2024graph}. A trajectory matching-based GC method with the self-expressive condensed graph structure.

{{

\noindent(10) DosCond~\cite{liu2024graph}. A simplified gradient matching method with one-step model updating.

\noindent(11) SGDD~\cite{liu2024graph}. A generalized condensation method with enhanced graph structure modeling.

\noindent(12) MSGC~\cite{liu2024graph}. A gradient matching-based method with multiple sparse condensed graph structures.

Notice that the relay model for baselines are configured as described in their respective papers and released codes.
}}

\subsection{Implementations}
\label{sec_imple}
\textbf{Condensation Ratios.} 
We set $r$ following GCond. For transductive datasets,  $r$ is chosen as the \{25\%, 50\%, 100\%\}, \{25\%, 50\%, 100\%\}, {0.1\%, 0.5\%, 1\%\} and \{0.32\%, 0.63\%, 1.26\%\} of the labeled nodes for Cora, Citeseer, Arxiv and Products, respectively. For Inductive datasets, $r$ is set as \{0.1\%, 0.5\%, 0.1\%\} and \{0.05\%, 0.1\%, 0.2\%\} for Flickr and Reddit.

\noindent\textbf{Hyper-parameters.} 
The hyper-parameters are determined through the grid search on the validation set.
The feature proportion depth of the original graph and the condensed graph is set as $K=2$ for all datasets.  
The temperature $\tau$ is optimized from the set \{10, 1, 0.8, 0.5, 0.3, 0.1, 0.01\} and the augmentation ratio $p$ is searched  within the range [0, 100].
For the two hyper-parameters specific to CGC, the weight $\alpha$ is tuned from the set \{0.3, 0.5, 1, 2, 3\}, and the threshold $T$ is explored within the range [0.8, 1).
We use the ADAM optimization algorithm to train all the models. The learning rate for the condensation process is determined through a search over the set \{0.01, 0.001, 0.0001\}. The weight decay is 5e-4. Dropout is searched from [0, 1).

\noindent\textbf{Computing Infrastructure.} The codes are written in Python 3.9 and Pytorch 1.12.1. 
CGC and CGC-X are executed on CPUs and all GNN models are trained on GPUs. 
The experiments for Ogbn-products were conducted on a server equipped with Intel(R) Xeon(R) Gold 6326 CPUs at 2.90GHz and NVIDIA GeForce A40 GPUs with 48GB of memory. Other experiments were carried out on a server featuring Intel(R) Xeon(R) Gold 6128 CPUs at 3.40GHz and NVIDIA GeForce RTX 2080 Ti GPUs with 11GB of memory.

\begin{table}[th]
\renewcommand{\arraystretch}{1.2}
\setlength{\abovecaptionskip}{1pt}
\center
\caption{{{The generalizability of GC methods with graph generation on Flickr and Reddit datasets. SNTK is out-of-memory on Reddit. AVG indicates the average value. The best performances are highlighted.}}}
\label{tab_gene22}
\resizebox{\linewidth}{!}{
\begin{tabular}{c|l|rrrrrr|r}
    \Xhline{1.pt}
Dataset ($r$)                                                               & Method                          & \multicolumn{1}{l}{SGC}      & \multicolumn{1}{l}{GCN}      & \multicolumn{1}{l}{SAGE}     & \multicolumn{1}{l}{APPNP}    & \multicolumn{1}{l}{Cheby}    & \multicolumn{1}{l|}{GAT}     & \multicolumn{1}{l}{AVG}      \\ \hline
                                                                            & GCond                           & 46.1                         & 47.1                         & 46.2                         & 45.9                         & 42.8                         & 40.1                         & 44.7                         \\
                                                                            & GCDM                            & 44.3                         & 46.8                         & 45.8                         & 45.2                         & 41.8                         & 41.9                         & 44.3                         \\
                                                                            & DosCond & 46.1 & 46.2 & 46.0 & 45.8 & 42.9 & 40.6 & 44.6 \\
                                                                            & SGDD    & 46.3 & 46.4 & 46.5 & 46.1 & 43.7 & 41.2 & 45.0 \\
                                                                            & MSGC    & 46.5 & 46.4 & 45.8 & 46.0 & 42.0 & 40.9 & 44.6 \\
                                                                            & SNTK                            & 45.7                         & 46.8                         & 45.9                         & 45.3                         & 41.3                         & 41.4                         & 44.4                         \\
                                                                            & SimGC                           & 43.4                         & 45.6                         & 44.4                         & 44.8                         & 42.8                         & 41.2                         & 43.7                         \\
                                                                            & GCSR                            & 46.3                         & 46.6                         & \textbf{46.6}                & 46.3                         & 44.9                         & 45.6                         & 46.1                         \\
\multirow{-9}{*}{\begin{tabular}[c]{@{}l@{}}Flickr\\ (0.50\%)\end{tabular}} & CGC                             & \textbf{47.3}                & \textbf{47.1}                & \textbf{46.6}                & \textbf{46.9}                & \textbf{45.7}                & \textbf{46.1}                & \textbf{46.6}                \\ \hline
                                                                            & GCond                           & 89.6                         & 89.6                         & 89.1                         & 87.8                         & 75.5                         & 60.2                         & 82.0                         \\
                                                                            & GCDM                            & 88.0                         & 89.7                         & 89.3                         & 88.9                         & 74.9                         & 69.3                         & 83.4                         \\
                                                                            & DosCond & 89.2 & 90.5 & 89.8 & 88.1 & 76.1 & 62.2 & 82.7 \\
                                                                            & SGDD    & 90.2 & 90.7 & 90.1 & 88.7 & 77.6 & 63.2 & 83.4 \\
                                                                            & MSGC    & 90.0 & 89.2 & 89.5 & 88.3 & 76.3 & 61.2 & 82.4 \\
                                                                            & SimGC                           & 90.8                         & 90.6                         & 86.2                         & 88.6                         & 76.2                         & 65.1                         & 82.9                         \\
                                                                            & GCSR                            & 91.0                         & 91.2                         & \textbf{91.0}                & \textbf{88.9}                & 80.4                         & 86.4                         & 88.2                         \\
\multirow{-8}{*}{\begin{tabular}[c]{@{}l@{}}Reddit\\ (0.10\%)\end{tabular}} & CGC                             & \textbf{91.3}                & \textbf{91.4}                & 90.2                         & 88.7                         & \textbf{81.7}                & \textbf{89.1}                & \textbf{88.7}                \\     \Xhline{1.pt}
\end{tabular}
}
\end{table}

\section{Generalizability Comparison}

\subsection{Graph Generation Methods}
\label{sec_appgenedata}

Besides the results on the Arxiv dataset, we also evaluate our CGC and baselines on Flickr and Reddit datasets and results are shown in Table \ref{tab_gene22}. Similarly, CGC achieves a significant improvement over other compared baselines and demonstrates the great generalizability.

\subsection{Graphless Methods}
\label{sec_appgene}

We compare the GNN architecture generalizability of graphless GC methods, where $\{\bf I, \bf X'\}$ is used in GNN training.
Due to the absence of condensed graph structure, we evaluate GNNs including GCN, SGC, SAGE, APPNP and Cheby. 
According to the results in Table \ref{tab_geneX}, we could observe that our proposed CGC-X achieves the best generalizability across different datasets and architectures. Especially on the dataset Flickr, where the average improvements are 2.3\%. 
This verifies the high quality of the condensed feature and the effectiveness of the class-to-node distribution matching strategy.

\begin{table}[th]
\setlength{\abovecaptionskip}{1pt}

\center
\caption{The generalizability comparison of graphless GC methods. SNTK-X is out-of-memory on Reddit dataset. AVG indicates the average value. The best performances are highlighted.}
\label{tab_geneX}
\resizebox{\linewidth}{!}{
\begin{tabular}{l|l|rrrrr|r}
\Xhline{1.pt}
Dataset ($r$)                                                                   & Method  & \multicolumn{1}{l}{SGC} & \multicolumn{1}{l}{GCN} & \multicolumn{1}{l}{SAGE} & \multicolumn{1}{l}{APPNP} & \multicolumn{1}{l|}{Cheby} & \multicolumn{1}{l}{AVG} \\ \hline
\multirow{5}{*}{\begin{tabular}[c]{@{}l@{}}Arxiv\\ (0.25\%)\end{tabular}}  & GCond-X & 64.7                    & 64.2                    & 64.4                     & 61.5                      & 59.5                       & 62.9                    \\
                                                                           & GCDM-X  & 64.4                    & 61.2                    & 63.4                     & 60.5                      & 60.2                       & 61.9                    \\
                                                                           & SNTK-X  & 64.0                    & 65.5                    & 62.4                     & 61.8                      & 58.7                       & 62.5                    \\
                                                                           & SFGC    & 64.8                    & 66.1                    & 64.8                     & \textbf{63.9}             & 60.7                       & 64.1                    \\
                                                                           & GEOM    & 65.0                    & 67.5                    & 64.9                     & 62.5                      & 60.9                      & 64.2                    \\
                                                                           & CGC-X   & \textbf{66.3}           & \textbf{66.3}           & \textbf{65.9}            & 63.0                      & \textbf{61.3}              & \textbf{64.6}           \\ \hline
\multirow{5}{*}{\begin{tabular}[c]{@{}l@{}}Flickr\\ (0.50\%)\end{tabular}} & GCond-X & 44.4                    & 45.0                    & 44.7                     & 44.6                      & 42.3                       & 44.2                    \\
                                                                           & GCDM-X  & 45.8                    & 45.6                    & 42.6                     & 42.4                      & 42.4                       & 43.8                    \\
                                                                           & SNTK-X  & 44.0                    & 46.7                    & 41.9                     & 41.0                      & 41.4                       & 43.0                    \\
                                                                           & SFGC    & 47.0                    & \textbf{47.0}           & 42.5                     & 40.7                      & 45.4                       & 44.5                    \\
                                                                           & GEOM    & 46.5                    & 46.2                    & 42.7                     & 42.6                      & 46.1                      & 44.8                    \\
                                                                           & CGC-X   & \textbf{47.2}           & \textbf{47.0}           & \textbf{46.6}            & \textbf{46.8}             & \textbf{46.4}              & \textbf{46.8}           \\ \hline
\multirow{4}{*}{\begin{tabular}[c]{@{}l@{}}Reddit\\ (0.10\%)\end{tabular}} & GCond-X & 91.0                    & 89.3                    & 89.3                     & 78.7                      & 74.0                       & 84.5                    \\
                                                                           & GCDM-X  & 90.9                    & 87.2                    & 89.2                     & 79.1                      & 75.1                       & 84.3                    \\
                                                                           & SFGC    & 89.5                    & 90.0                    & 90.3                     & 88.3                      & 82.8                       & 88.2                    \\
                                                                           & GEOM    & 89.6                    & 90.4                    & 90.5                     & 89.4                      & 82.6                      & 88.5                    \\
                                                                           & CGC-X   & \textbf{91.5}           & \textbf{90.8}           & \textbf{90.9}            & \textbf{89.0}             & \textbf{82.6}              & \textbf{89.0}           \\ \Xhline{1.pt}
\end{tabular}
}
\end{table}

\section{{Ablation Study}}
\label{sec_appablation}

In addition to the CGC-X, the ablation study results for CGC are presented in Table \ref{tab_abl2}. We assess the impact of data augmentation (w/o AUG), data assessment (w/o CAL), and partition methods (w/o K-means and with RAN). Both data augmentation and data assessment significantly influence the performance of CGC, particularly on label-sparse datasets. By deploying K-means, CGC maintains similar performance, indicating that it is not sensitive to the choice of partition method.

\section{{Propagation Method}}
\label{sec_apppropmethod}

We now represent the effect of different propagation methods in our proposed method. Besides the propagation method in SGC, we further evaluate two propagation methods, including personalized PageRank (PPR)~\cite{page1999pagerank} and SAGE~\cite{hamilton2017inductive}. The test accuracies are shown in Table \ref{tab_propmethod}. 
{{The results show that diverse propagation methods adapt to datasets with varying characteristics while maintaining comparable overall performance.}}
For instance, SAGE achieves better performance on Citeseer. PPR outperforms other methods on Reddit dataset. These results verify the generalization of our proposed method and the proper propagation method should be selected for different datasets.

\section{{Propagation Depth}}
\label{sec_apppropdep}
{{
To evaluate the effect of the propagation depth of the original graph on the performance of GNN models, we change the propagation depth of feature propagation and compare the performance of CGC-X. As shown in Figure \ref{fig_propdepth}, there is an initial increase in performance with increasing depth, followed by a subsequent decrease. The best propagation depth is 2, which is consistent with the layer of the evaluated GCN. 
Moreover, we could observe that the performance without the propagation is also competitive, which is aligned with the results in recent study PMLP~\cite{yang2022graph}.
}}

\begin{table}[t]
    \renewcommand{\arraystretch}{1.2}
    \setlength{\abovecaptionskip}{1pt}
    \center
    \caption{{{The ablation study of CGC. $r$ is set as 2.60\%, 1.80\%, 0.25\%, 0.50\% and 0.10\% for five datasets, respectively.}}}
    \label{tab_abl2}
    \resizebox{\linewidth}{!}{
    \begin{tabular}{l|cc ccc}
    \Xhline{1.pt}
    Method   & Cora     & Citeseer & Arxiv & Flickr& Reddit\\ \hline
    CGC w/o AUG       & 80.6\scriptsize{±0.8} & 71.1\scriptsize{±0.2} & 65.2\scriptsize{±0.1}       & 46.7\scriptsize{±0.1}       & 90.9\scriptsize{±0.1}       \\
    CGC w/o CAL       & 81.7\scriptsize{±0.6} & 71.6\scriptsize{±0.4} & 65.4\scriptsize{±0.2}       & 46.3\scriptsize{±0.1}       & 91.0\scriptsize{±0.1}       \\
    CGC w K-means      & 82.2\scriptsize{±1.5} & 72.1\scriptsize{±0.3} & \textbf{66.5\scriptsize{±0.2}}       & 46.9\scriptsize{±0.2}       & 91.3\scriptsize{±0.2}       \\
    CGC w RAN   & 79.8\scriptsize{±0.9} & 70.3\scriptsize{±0.6} & 61.2\scriptsize{±0.7} & 45.3\scriptsize{±0.1} & 89.7\scriptsize{±0.0} \\ \hline
    CGC      & \textbf{82.3\scriptsize{±1.3}} & \textbf{72.4\scriptsize{±0.2}} & 66.4\scriptsize{±0.1}       & \textbf{47.1\scriptsize{±0.1}}       & \textbf{91.4\scriptsize{±0.1}}       \\ \Xhline{1.pt}
\end{tabular}}
\end{table}

\begin{table}[t]
\setlength{\abovecaptionskip}{1pt}
\center
\caption{The performance comparison of different propagation methods. $r$ is set as 2.60\%, 1.80\%, 0.25\%, 0.50\% and 0.10\% for five datasets, respectively. OOM indicates the out-of-memory. The best performances are highlighted.}
\label{tab_propmethod}
\resizebox{0.99\linewidth}{!}
{
\begin{tabular}{l|lll|lll}
\Xhline{1.pt}
\multirow{2}{*}{Dataset} & \multicolumn{3}{c|}{CGC-X}                                & \multicolumn{3}{c}{CGC}                          \\ \cline{2-7} 
                         & SGC               & PPR               & SAGE              & SGC               & PPR      & SAGE              \\ \hline
Cora                     & \textbf{83.4+0.4} & 81.0+0.5          & 81.1±0.5          & \textbf{82.3+1.3} & 80.9+0.7 & 80.0±1.8          \\
Citeseer                 & 72.6+0.2          & 70.5+2.0          & \textbf{73.0±0.2} & 72.4+0.2          & 70.5+2.0 & \textbf{72.9±0.3} \\
Arxiv                    & \textbf{66.3+0.3} & \textbf{66.3±0.2}          & 65.4+0.6          & \textbf{66.4+0.1} & 65.7+0.1 & 65.9+0.3          \\
Flickr                   & \textbf{47.0+0.1} & \textbf{47.0+0.1}          & 46.7+0.2          & \textbf{47.1+0.1} & 47.0+0.1 & 46.5+0.0          \\
Reddit                   & 90.8+0.0          & \textbf{90.9+0.1} & 88.7±0.7          & \textbf{91.4+0.1} & 91.0+0.1 & 89.5±0.3          \\ \Xhline{1.pt}
\end{tabular}}
\end{table}

\begin{figure}[ht]
\setlength{\abovecaptionskip}{0.01cm}
\centering
\includegraphics[width=0.8\linewidth]{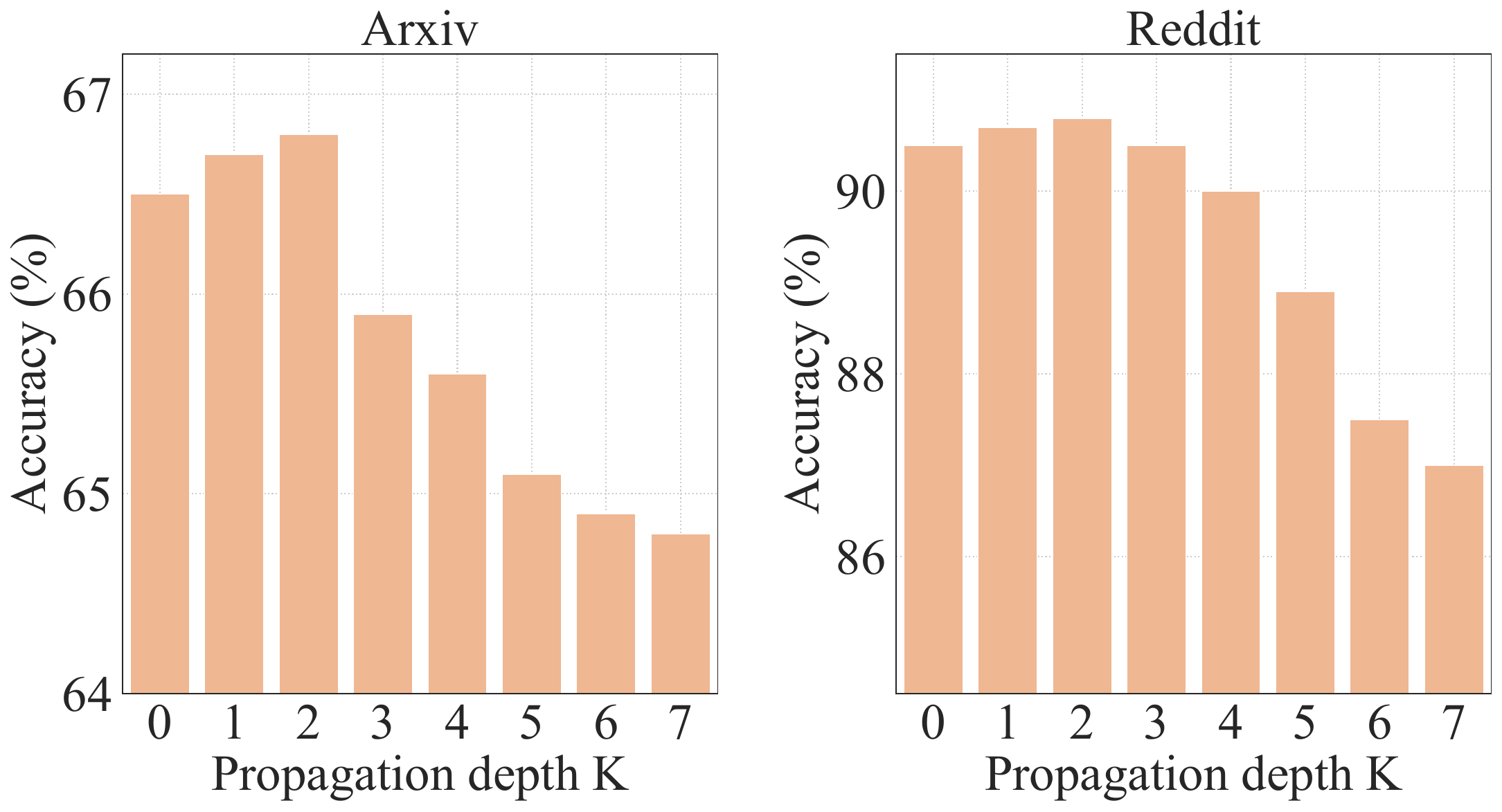}
\caption{{{The performance on different feature propagation depths. GCN is trained on the condensed graph generated by CGC-X. $r$ is set as 0.25\% and 0.10\% for Arxiv and Reddit, respectively.}}}
\label{fig_propdepth}
\end{figure}

\section{Related Work}

\noindent\textbf{Graph Condensation.}
GC~\cite{xu2024survey,gao2024graph,hashemi2024comprehensive, wang2024self} has recently garnered significant attention as a data-centric approach.
Beyond the diverse optimization strategies discussed in Section \ref{sec_rethink}, other research efforts have focused on enhancing these optimization strategies~\cite{xiao2024disentangled,zhang2024two,zhang2024navigating,li2023attend,liu2024tinygraph,rasti2024gstam} or developing superior condensed graph structures~\cite{gao2023multiple, liu2024graph,gao_graph_2023} to improve condensed graph quality. 
Additionally, there is an increasing emphasis on enhancing other aspects of models trained on condensed graphs, such as generalizability~\cite{yang_does_2023,gao2024graphopen,gao2024contrastive}, fairness~\cite{feng_fair_2023,mao2023gcare}, robustness~\cite{gao2024robgc}, security~\cite{wu2024backdoor} and explainability~\cite{fang2024exgc}.

\noindent\textbf{Graph Sparsification \& Coarsening.}
Early explorations into graph size reduction primarily focus on graph sparsification~\cite{chen2021unified,hui2023rethinking} and coarsening techniques~\cite{cai2021graph}. These methods aim to eliminate redundant edges and merge similar nodes while preserving essential graph characteristics, such as the largest principal eigenvalues~\cite{loukas2018spectrally} and Laplacian pseudoinverse~\cite{bravo2019unifying}. Recently, FGC~\cite{kumar2023featured} incorporates node features into the coarsening process to improve node merging. However, these methods overlook node label information, resulting in sub-optimal generalization across different GNN architectures and downstream tasks. Additionally, VNG~\cite{si2022serving} generates compressed graphs by directly minimizing propagation errors. However, the compressed graph's utility is restricted to serving time and cannot be used for GNN training, limiting its applicability.

\end{document}